\newcommand{\defeq}{\overset{\text{\tiny def}}{=}}
\def\eqref#1{equation~\ref{#1}}
\def\1{\bm{1}}
\def\eps{{\epsilon}}
\def\vg{{\bm{g}}}
\def\vm{{\bm{m}}}
\def\vv{{\bm{v}}}
\def\vz{{\bm{z}}}
\DeclareMathAlphabet{\mathsfit}{\encodingdefault}{\sfdefault}{m}{sl}
\SetMathAlphabet{\mathsfit}{bold}{\encodingdefault}{\sfdefault}{bx}{n}
\def\gD{{\mathcal{D}}}
\def\gL{{\mathcal{L}}}
\def\sR{{\mathbb{R}}}
\newcommand{\Var}{\mathrm{Var}}
\theoremstyle{plain}
\newtheorem{theorem}{Theorem}[section]
\newtheorem{lemma}[theorem]{Lemma}
\newtheorem{corollary}[theorem]{Corollary}
\theoremstyle{definition}
\newtheorem{assumption}[theorem]{Assumption}
\theoremstyle{remark}
\newcommand{\xy}[1]{{\color{blue} #1}}
\title{DualOptim: Enhancing Efficacy and Stability in Machine Unlearning with Dual Optimizers}
\author{
  Xuyang Zhong \\
  Department of Computer Science\\
  City University of Hong Kong\\
  \texttt{xuyang.zhong@my.cityu.edu.hk}
  \And Haochen Luo \\
  Department of Computer Science\\
  City University of Hong Kong\\
  \texttt{chester.hc.luo@my.cityu.edu.hk}
  \And Chen Liu \thanks{Corresponding Author}\\
  Department of Computer Science\\
  City University of Hong Kong\\
  \texttt{chen.liu@cityu.edu.hk}
}
\begin{document}
\maketitle
\begin{abstract}
  Existing machine unlearning (MU) approaches exhibit significant sensitivity to hyperparameters, requiring meticulous tuning that limits practical deployment. In this work, we first empirically demonstrate the instability and suboptimal performance of existing popular MU methods when deployed in different scenarios. To address this issue, we propose Dual Optimizer (\textbf{DualOptim}), which incorporates adaptive learning rate and decoupled momentum factors. Empirical and theoretical evidence demonstrates that DualOptim contributes to effective and stable unlearning. Through extensive experiments, we show that DualOptim can significantly boost MU efficacy and stability across diverse tasks, including image classification, image generation, and large language models, making it a versatile approach to empower existing MU algorithms. Codes are available at \href{https://github.com/CityU-MLO/DualOptim}{https://github.com/CityU-MLO/DualOptim}.
\end{abstract}

\section{Introduction}
Recent advancements in machine unlearning (MU) research have established it as a crucial technique for utilizing pretrained models while addressing various trustworthy challenges in various applications~\cite{randored, mainitofu}. These MU methods have two major categories: \textit{exact MU} \citep{guo2020certified} and \textit{approximate MU} \citep{izzo2021approximate}. Exact MU requires retraining a model on a dataset excluding forgetting samples from scratch, which is computationally prohibitive for large-scale models and datasets. Therefore, we study approximate MU methods in this work, with focuses on their efficacy and stability.

Most existing approximate MU methods \citep{graves2021amnesiac, fan2024salun, huang2025unified, kurmanji2023towards, yuan2024closer} aim to maximize loss on forget samples while preserving performance on retain samples. While effective, these methods exhibit significant sensitivity to hyperparameter choices, with optimal configurations requiring meticulous tuning that varies substantially across datasets and forgetting scenarios. This dependency complicates their practical deployment.
As illustrated in Figure \ref{fig:baseline}, we empirically demonstrate that state-of-the-art MU approaches \citep{fan2024salun, huang2025unified, kurmanji2023towards} exhibit  suboptimal performance or instability. These limitations highlight the critical need for techniques to improve the effectiveness and stability of MU methods.

In this work, we first reveal the unstable performance of existing MU methods when they are applied in different scenarios.
Considering the need to optimize two objectives on the forgetting and the retaining samples, we then propose Dual Optimizer (\textbf{DualOptim}) to address this challenge. 
Specifically, we employ an optimizer with adaptive learning rate, such as Adam \citep{kingma2014adam, loshchilovdecoupled}, to optimize the forgetting objective, while utilizing a separate optimizer for the retaining objective. 
Compared with existing methods, DualOptim decouples the momentum terms for the two objectives by using and updating them separately.
In addition, DualOptim is a plug-and-play solution that can be easily integrated into existing MU algorithms.

\begin{figure}[t]
    \centering
    \small
     \subfigure[Forget Accuracy (FA)]{\includegraphics[width=0.245\textwidth]{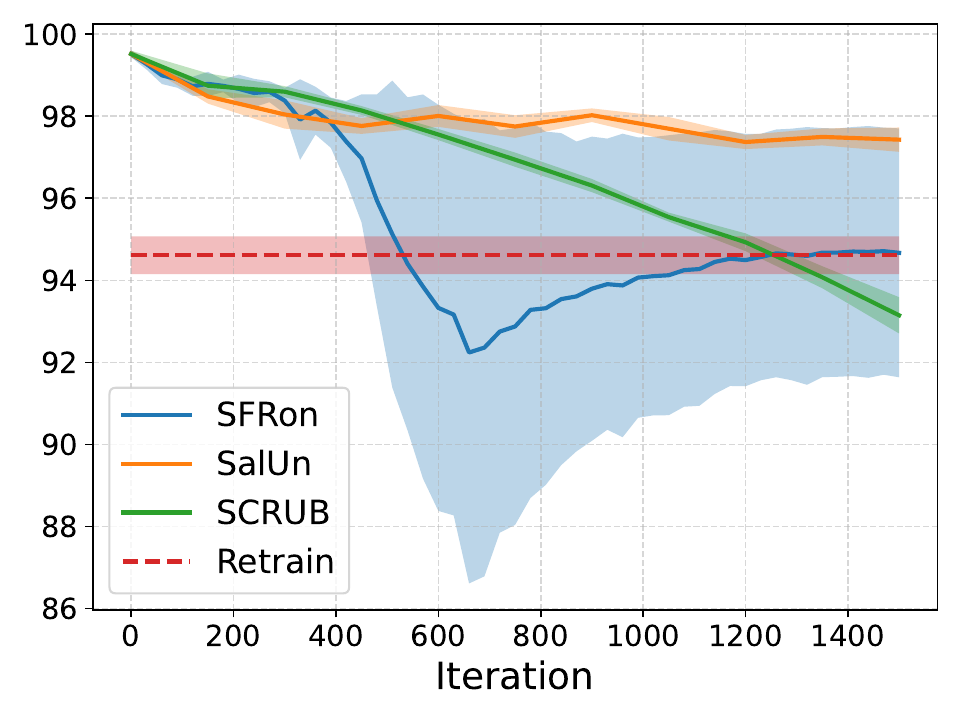}}
    \subfigure[Retain Accuracy (RA)]{\includegraphics[width=0.245\textwidth]{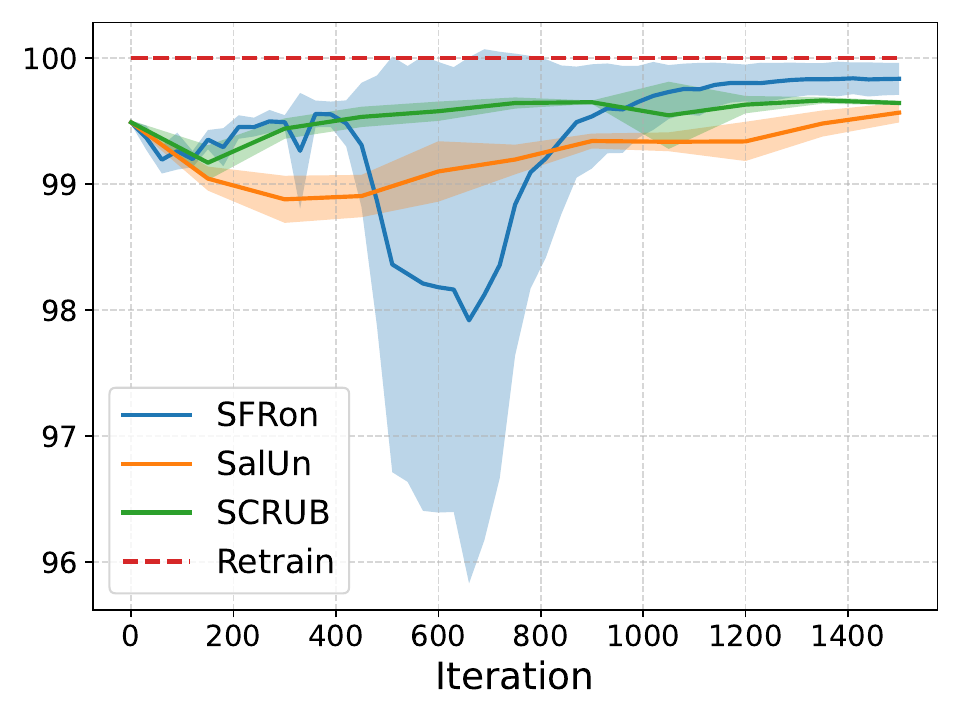}}
    \subfigure[Test Accuracy (TA)]{\includegraphics[width=0.245\textwidth]{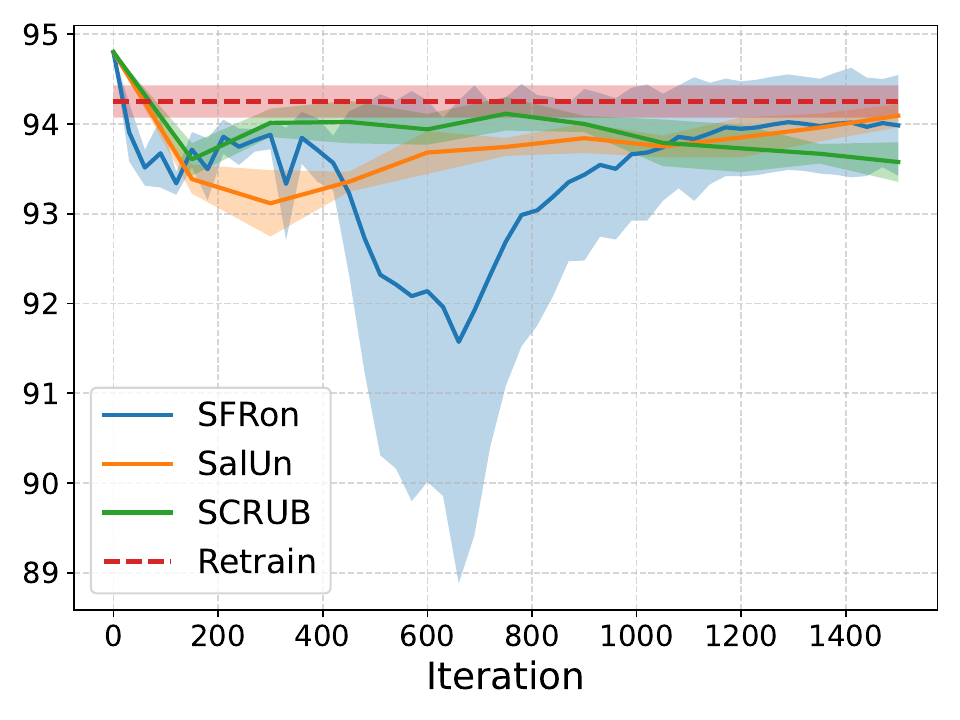}}
    \subfigure[MIA]{\includegraphics[width=0.245\textwidth]{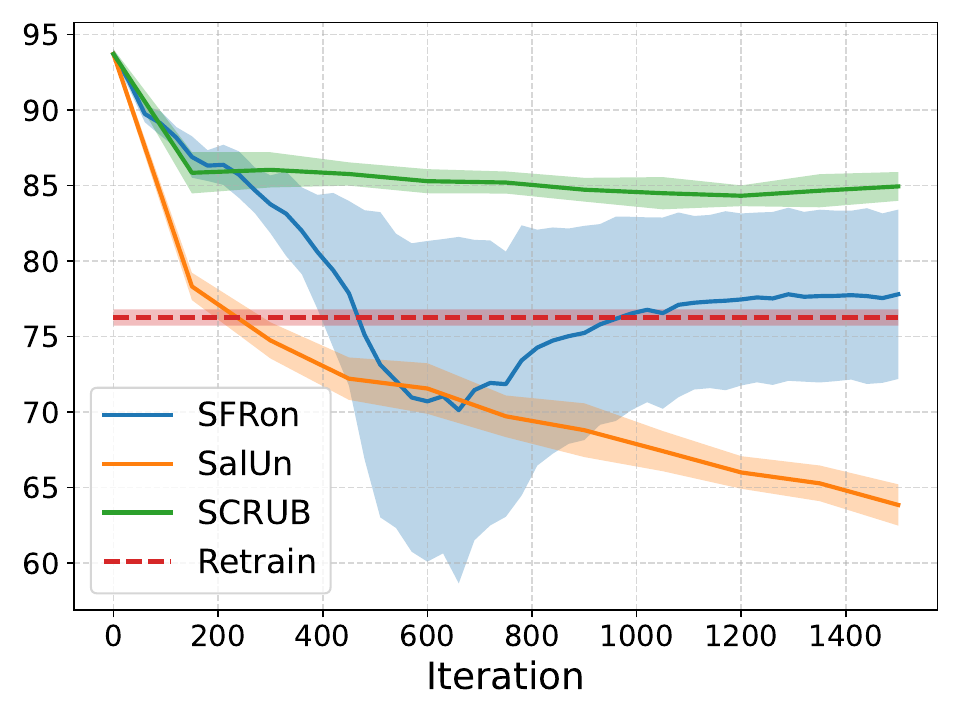}}
    \vspace{-0.5em}
    \caption{Unlearning process of MU baselines. SFRon \citep{huang2025unified}, SalUn \citep{fan2024salun} and SCRUB \citep{kurmanji2023towards} are adopted as the baselines. The metrics are those mentioned in Sec. \ref{sec:preliminary}. All results are obtained from unlearning 10\% random subset of CIFAR-10 on ResNet-18. The solid lines and shadows denote the mean and standard deviation across 5 trials with different random data. The hyperparameters of different methods are selected based on minimizing the averaging gap between retraining and them across 5 trials. The red dashed lines denote the final performance of retraining as a reference.}
    \label{fig:baseline}
    \vspace{-1em}
\end{figure}
Our theoretical analysis validates that decoupling momentum terms in two separate optimizers can help decrease the variance of model parameters, indicating a more stable performance.
Through comprehensive experiments, we validate the effectiveness of DualOptim in improving the performance of existing MU methods across various tasks, including image classification, image generation, and large language models.
Our proposed technique is generic and capable of pushing the state-of-the-art performance on multiple tasks.
We summarize the contributions of this paper as follows:
\begin{enumerate}[leftmargin=2.5em]
    \item We introduce Dual Optimizer (\textbf{DualOptim}) to enhance the efficacy and stability of MU methods by incorporating an adaptive learning rate and decoupled momentum. DualOptim can be seamlessly integrated into existing MU algorithms.
    \item We provide empirical and theoretical analyses to demonstrate the contribution of DualOptim in improving unlearning performance and stability.
    \item Comprehensive experiments across diverse scenarios such as image classification, image generation, and large language models validate the effectiveness of DualOptim in boosting and stabilizing the performance of MU methods.
\end{enumerate}

\section{Related Works}
\textbf{Machine Unlearning (MU).} MU targets the need to remove specific data influences from pretrained models \citep{bourtoule2021machine, shaik2024exploring, xu2024machine}, while complying with privacy requirements tied to differential privacy \citep{guo2020certified,neel2021descent, sekhari2021remember, ginart2019making, ullah2021machine}. Initially developed on linear classifiers with convex loss objective functions, exact MU~\citep{guo2020certified,neel2021descent,koh2017understanding,giordano2019swiss} allowed precise data removal under privacy budgets to counter privacy attacks. 
However, exact MU cannot be applied to deep neural networks due to their non-convex loss functions and prohibitive computational cost for full retraining.
In this context, approximate MU~\citep{izzo2021approximate,thudi2022unrolling} was proposed to address this issue by fine-tuning the model to achieve the forgetting effect.
Despite improved efficiency, approximate MU may cause catastrophic performance declines on retaining data \citep{thudi2022unrolling,golatkar2020eternal}. 
Recent methods incorporate fine-tuning \citep{fan2024salun, kurmanji2023towards,tarun2023fast}, sparsity regularization \citep{jia2023model}, knowledge distillation \citep{kurmanji2023towards,chundawat2023can}, saliency map \citep{fan2024salun,huang2025unified} and alternative updating \citep{huang2025unified} to better balance efficient forgetting and model utility. However, as illustrated in Figure \ref{fig:baseline}, these methods generally exhibit suboptimal performance or high hyperparameter sensitivity, underscoring the necessity to develop a method to enhance efficacy and stability during unlearning.

\textbf{MU for Multiple Tasks.} Besides classification, MU has broad applications for multiple tasks. For example, recent text-conditional \textit{image generation} models have showcased their ability to produce images that closely align with textual descriptions \citep{ho2020denoising,ho2021classifier,rombach2022high,peebles2023scalable}. However, significant security and privacy concerns have been raised \citep{randored,schramowski2023safe,carlini2023extracting}, necessitating the application of MU methods to these models. While early works \citep{schramowski2023safe, gandikota2023erasing, kumari2023ablating, zhang2024forget} focuses on concept deletion in diffusion models, recent studies \citep{fan2024salun, huang2025unified, heng2023selective}  improve their performance and propose methods applicable to more general image generators. 
Another notable example is \textit{large language models (LLMs)}, which have demonstrated remarkable capabilities \citep{brown2020language, touvron2023llama} but also face privacy and copyright issues like retaining unauthorized content \citep{huang2022large,carlini2022quantifying,staabbeyond,dou2024avoiding}. To address these concerns, MU methods have been employed to fine-tune LLMs \citep{mainitofu, yuan2024closer, liu2025rethinking, zhangnegative} to effectively and efficiently achieve data forgetting. 
However, all these methods struggle to achieve a balance between model utility and forget effectiveness. In addition, intensive hyperparameter tuning is expected for optimal performance, bringing challenges for practitioners.

In summary, MU has broad applications across various tasks. However, there are some common issues with existing methods, including high hyper-parameter sensitivity and the utility-forgetting trade-offs. In this work, we propose DualOptim as a generic solution to enhance the efficacy and stability of MU across different tasks.

\section{Methodology}

\subsection{Preliminary} \label{sec:preliminary}
Let $\mathcal{D}=\{\vz_i\}_{i=1}^N$ denote the training set for pretraining. 
The subset of the training set we aim to forget during unlearning is known as the \textit{forget set} $\mathcal{D}_f \subset \mathcal{D}$, and its complement, $\mathcal{D}_r = \mathcal{D}\setminus \mathcal{D}_f$ is the \textit{retain set}. 
In the context of MU, we denote the parameters of pretrained model as $\theta_o \in \sR^d$, which is trained on $\mathcal{D}$. Consistent with previous studies \citep{fan2024salun, huang2025unified, thudi2022unrolling, jia2023model}, \textit{Retraining} is considered the gold standard for MU, where the model parameters $\theta_r \in\mathbb{R}^d$ are trained from scratch on $\mathcal{D}_r$. However, retraining is computationally intensive or even infeasible, especially for large models and datasets. This poses a significant challenge in the practical applications of MU. 

We focus on approximate MU in this work, its primary objective is to obtain an unlearned model, referred to as $\theta_u \in \sR^d$, from $\theta_o$ on $\mathcal{D}_f$ and $\mathcal{D}_r$ so that it can serve as an accurate and computationally efficient alternative to the retrained model $\theta_r$.
Mathematically, MU aims to solve the optimization problem:
$\min_{\theta} \mathcal{L}_f(\mathcal{D}_f,\theta) + \mathcal{L}_r(\mathcal{D}_r,\theta)$,
where $\mathcal{L}_f$ and $\mathcal{L}_r$ denote forget and retain loss objective functions, respectively.
In practice, $\gL_r$ is usually the same as the loss objective function in pre-training and $\gL_f$ is the opposite, so MU aims to improve the performance on $\gD_r$ while degrading the performance on $\gD_f$.
To avoid confusion, we call $\gL_f(\gD_f, \theta)$, $\gL_r(\gD_r, \theta)$ that we aim to minimize \textit{forget loss} and \textit{retain loss}, respectively.
In addition, many MU algorithms \citep{fan2024salun,huang2025unified,kurmanji2023towards} update $\theta$ by alternately optimizing $\mathcal{L}_f(\gD_f, \theta)$ and $\mathcal{L}_r(\gD_r, \theta)$ for better performance. The generic pipeline of MU is presented in Algorithm \ref{alg:dualoptim}.
For notation simplicity, we use $\vg_f := \nabla_\theta \gL_f(\gD_f, \theta)$ and $\vg_r := \nabla_\theta \gL_r(\gD_r, \theta)$ to denote the gradients of the forget loss and the retain loss, respectively. In mini-batch updates, we use $\widehat{\vg}_f$ and $\widehat{\vg}_r$ to denote the corresponding stochastic gradients.

The numerical analysis in this section is based on classification tasks. We adopt the same evaluation schemes as \cite{huang2025unified}. That is, we use the accuracy on the forget set (FA), the retain set (RA), test set (TA) and the success rate of membership inference attack (MIA) \citep{song2021systematic} \footnote{We follow the implementation in \cite{huang2025unified} and adopt the entropy of the output probabilities as the metric in MIA.} on the forgetting set to indicate model utility and the forgetting effectiveness. A competitive unlearning method should have stable performance and small gaps with retraining in all these four evaluation schemes.

In the following subsections, we first illustrate the challenges associated with MU and propose corresponding approaches to address them. Ultimately, we introduce \textbf{DualOptim}, which integrates these proposed techniques to enhance both efficacy and stability in MU.

\subsection{Adaptive Learning Rate Enables Stable Forgetting}
Despite the effectiveness of existing MU methods, achieving satisfactory unlearning performance often relies on carefully selecting hyperparameters. The optimal hyperparameters can vary significantly across different forget sets, complicating the practical applications of MU algorithms. As shown in Figure \ref{fig:traj} (a)-(d), we use the same hyperparameters for SFRon \citep{huang2025unified}, the best-performed baseline in our evaluation, with the default SGD optimizer for 5 trials of different randomly sampled forget sets. We defer more results of other existing methods to Appendix \ref{sec:app_mu_process}. Despite being intensively tuned, the unified hyperparameters exhibit quite unstable performance across different forget sets during unlearning. This underscores that a hyperparameter configuration effective for certain forget sets may not be suitable for others. When dealing with a new forget set, even if it is sampled in the same way, the need for precise hyperparameter tuning can pose challenges for practitioners. 
\begin{figure}[t]
    \centering
    \small
     \subfigure[Forget Accuracy (FA)]{\includegraphics[width=0.245\textwidth]{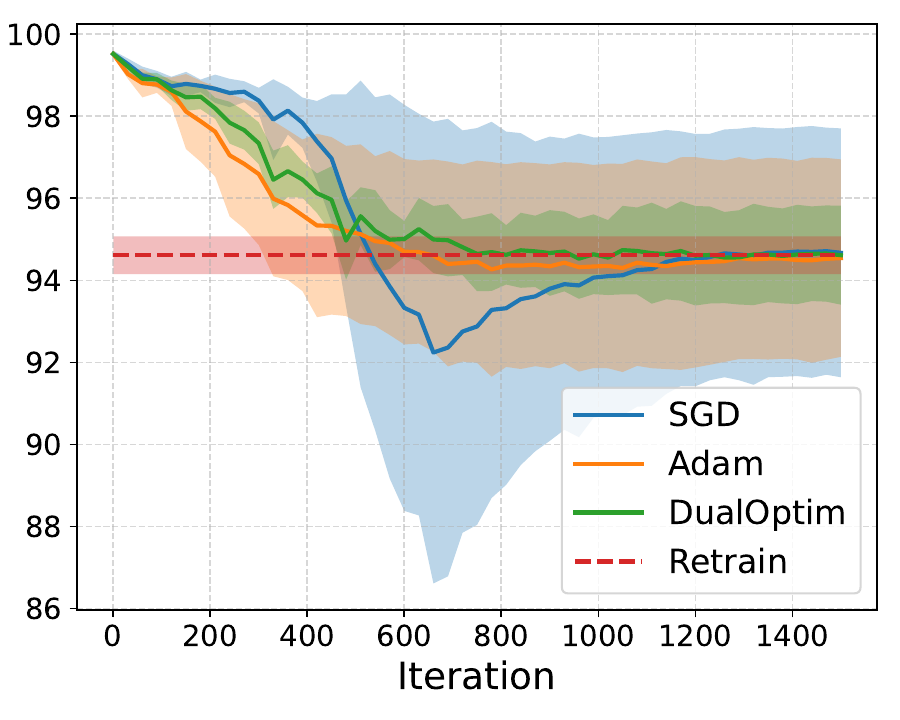}}
    \subfigure[Retain Accuracy (RA)]{\includegraphics[width=0.245\textwidth]{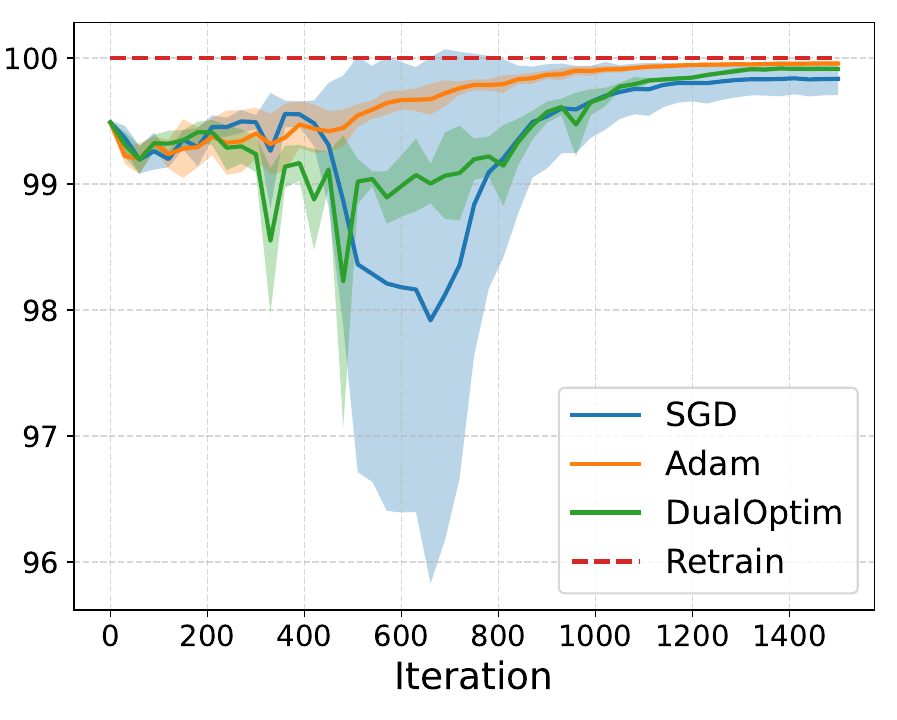}}
    \subfigure[Test Accuracy (TA)]{\includegraphics[width=0.245\textwidth]{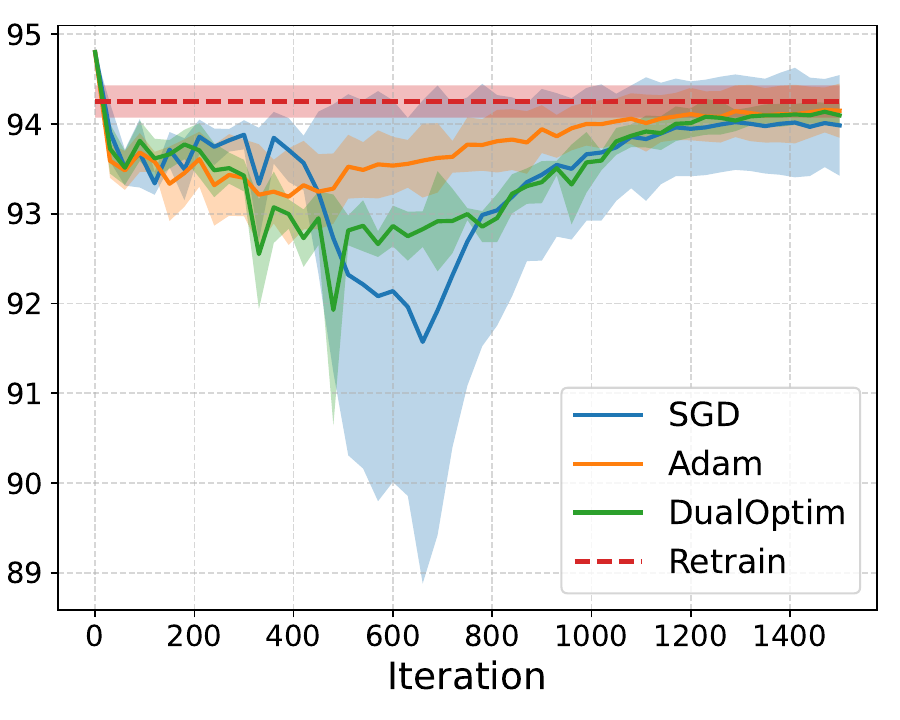}}
    \subfigure[MIA]{\includegraphics[width=0.245\textwidth]{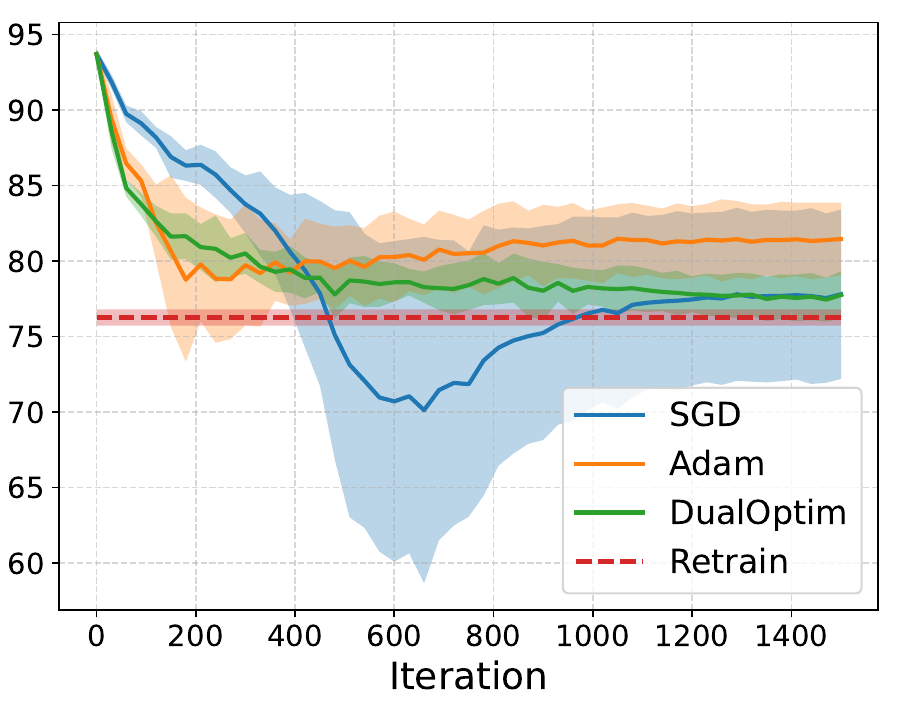}}
    \vspace{-1em}
    \caption{Unlearning process with different ablations of the proposed method. All results are obtained from unlearning 10\% random subset of CIFAR-10 by SFRon \citep{huang2025unified} on ResNet-18. \textbf{(a)-(d)} The metrics are those mentioned in Sec. \ref{sec:preliminary}. The red dashed lines denote the final performance of retraining as a reference. The solid lines and shadows denote the mean and standard deviation across 5 trials with different random forget sets. 
    }
    \label{fig:traj}
    \vspace{-0.5em}
\end{figure}

Figure \ref{fig:grad_norm} (a) demonstrates the magnitude of the gradients on the forget set and the retain set when using SGD optimizer.
We can clearly see that the gradients on the forget set have significantly larger magnitudes with substantial variations compared with the ones on the retain set.
In this context, it is crucial to adaptively adjust the learning rate to handle different gradient magnitudes.
Therefore, we use optimizers with preconditioners, such as Adam~\citep{kingma2014adam, loshchilovdecoupled} to evade tricky hyperparameter tuning for the learning rate.
Specifically, Adam employs adaptive learning rates based on the historic gradient magnitudes, making it suitable to handle large gradient magnitude variations during unlearning.

\begin{figure}[!t]
\vspace{-0.5em}
    \centering
    \small
     \subfigure[SGD]{\includegraphics[width=0.31\textwidth]{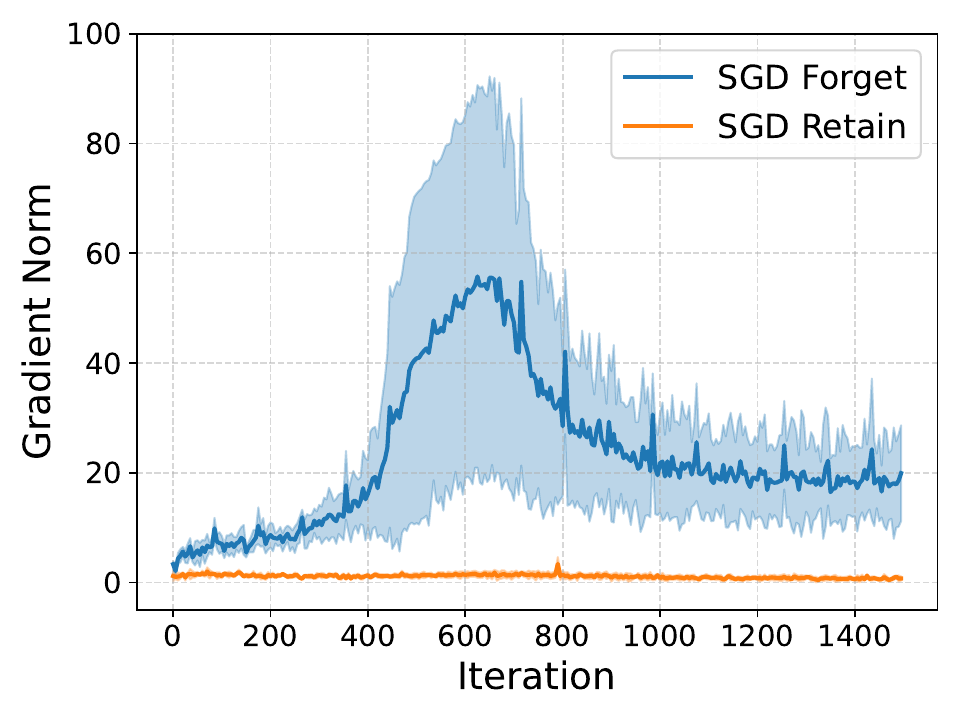}}
    \subfigure[Adam]{\includegraphics[width=0.31\textwidth]{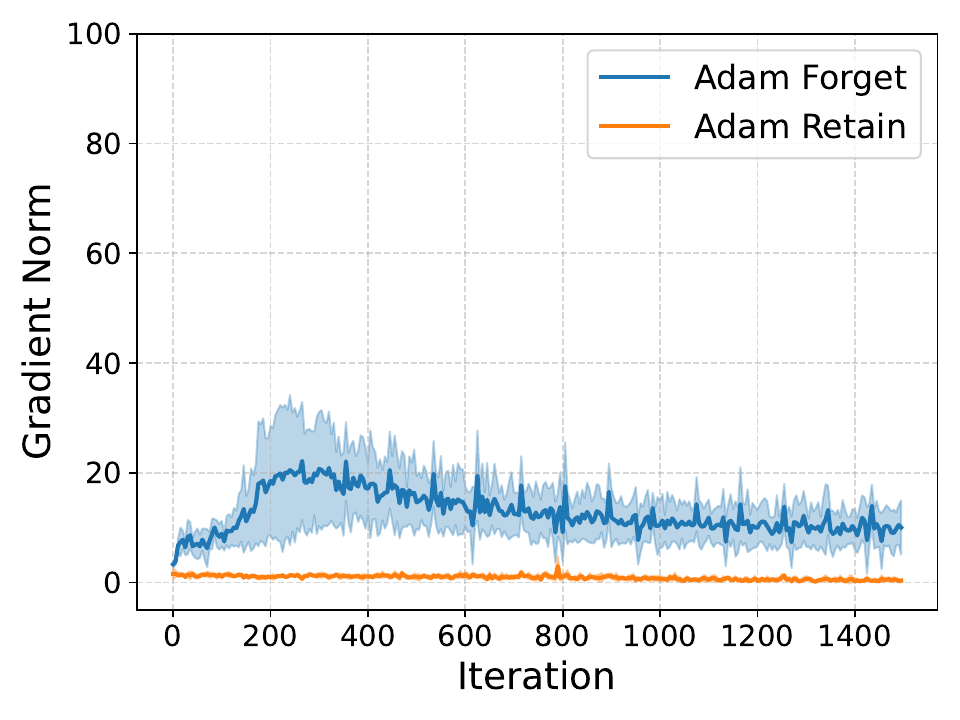}}
    \subfigure[\textbf{DualOptim (Ours)}]{\includegraphics[width=0.31\textwidth]{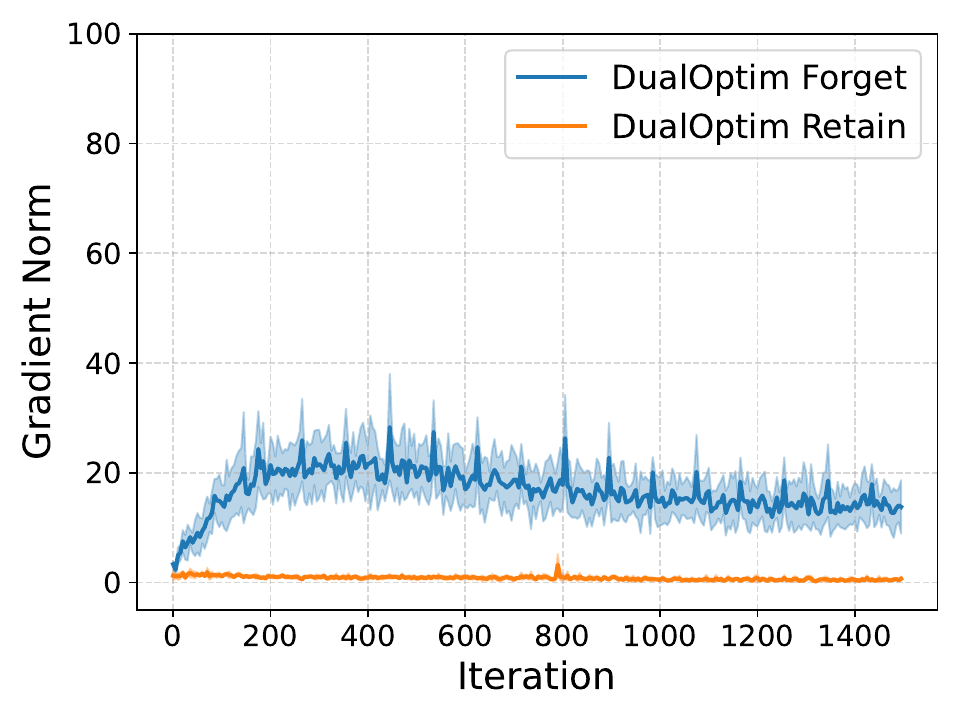}}
    \vspace{-1em}
    \caption{Norms of stochastic forget gradient $\widehat{\vg}_f$ and stochastic retain gradient $\widehat{\vg}_r$ using different optimizers. All results are obtained from unlearning 10\% random subset of CIFAR-10 by SFRon on ResNet-18.
     \textbf{(a)}-\textbf{(c)} The curves are obtained using SGD, Adam, DualOptim, respectively.
    }
    \label{fig:grad_norm}
    \vspace{-1em}
\end{figure}

The observations in Figure \ref{fig:traj} (a)-(d) and Figure \ref{fig:grad_norm} (b) suggest that Adam provides more stable performance and induces smaller gradient norm across different forget sets. However, despite enhanced stability, the noticeable performance gap with retraining, especially in the metric of membership inference attack (MIA), indicates that Adam only achieves suboptimal unlearning efficacy.
We need a mechanism to improve performance while ensuring the stability of the method.

\subsection{Decoupled Momentum for Enhanced Stability in Machine Unlearning}
Inspired by the disparities in the magnitudes of $\widehat{\vg}_f$ and $\widehat{\vg}_r$, along with their non-positive correlation observed in Figure~\ref{fig:grad_norm} and \ref{fig:cos}, we introduce \textit{decoupled momentum}, which employs two separate momentum terms dedicated to the forget loss $\gL_f$ and the retain loss $\gL_r$, respectively. This approach ensures that the momentum update for the forget loss remains unaffected by the gradients of the retaining loss, and vice versa. By decoupling the momentum terms in this manner, we can more effectively optimize the distinct processes of forgetting and retaining, leading to enhanced stability and performance. 
As illustrated in Figure \ref{fig:traj} (a)-(d) (represented by the green lines) and Figure \ref{fig:grad_norm} (c), the combination of adaptive learning rate with decoupled momentum not only stabilizes the unlearning process but also leads to improved unlearning efficacy. 
Note that as presented in Table~\ref{fig:salun} and Table~\ref{fig:scrub} of Appendix \ref{sec:app_mu_process}, although other methods like SalUn \cite{fan2024salun} and SCRUB \cite{kurmanji2023towards} perform more stable than SFRon, they only achieve suboptimal unlearning performance. Nevertheless, our method can also improve their performance.


Besides empirical findings, we provide a theoretical analysis to elucidate how decoupled momentum contributes to stability.
Before the detailed analyses, we first establish the following assumptions.
\begin{assumption} \label{assum1}
    \textbf{(Stochastic Gradient Condition)}  
    For all time steps \( t = 0, \dots, T-1 \), the stochastic gradients of the forget loss \(\widehat{\vg}_{f,t}\) and retain loss \(\widehat{\vg}_{r,t}\) satisfy:
    \begin{equation}
        \widehat{\vg}_{f,t} = \vg_{f,t} + \bm{\eps}_{f,t}, \quad 
        \widehat{\vg}_{r,t} = \vg_{r,t} + \bm{\eps}_{r,t}, \label{eq:noisy_grad}
    \end{equation}
    where \(\vg_{f,t} \coloneqq \nabla_{\theta_t} \mathcal{L}_f(\mathcal{D}_f, \theta_t)\) and \(\vg_{r, t} \coloneqq \nabla_{\theta_t} \mathcal{L}_r(\mathcal{D}_r, \theta_t)\) are the full-batch gradients with model parameter $\theta_t$ at the time stamp $t$. \(\bm{\eps}_{f,t}\) and \(\bm{\eps}_{r,t}\) are batch noises with zero mean and a bounded variance: there exists a minimal \(\sigma^2 \geq 0\) such that \(\mathrm{Var}(\bm{\eps}_{f,t}) \leq \sigma^2\), \(\mathrm{Var}(\bm{\eps}_{r,t}) \leq \sigma^2\) for all \(t\).
\end{assumption}

\begin{assumption} \label{assum2}
\textbf{(Correlation Bounds)} The correlation between the stochastic gradients from the same function in different time steps is bounded while the correlation between stochastic gradients from different functions is non-positive. That is to say, $\exists \tau \in [0, 1]$ such that:
\begin{equation}
\begin{aligned}
\forall t_1 \neq t_2, , s.t. ~\rho(\widehat{\vg}_{f,t_1}, \widehat{\vg}_{f,t_2}) \leq \tau, \rho(\widehat{\vg}_{r,t_1}, \widehat{\vg}_{r,t_2}) \leq \tau, \quad 
\forall t_1, t_2, \rho(\widehat{\vg}_{f,t_1}, \widehat{\vg}_{r,t_2}) \leq o(\tau) \simeq 0
\end{aligned}
\end{equation}
\end{assumption}
The assumption $\forall t_1, t_2$, $\rho(\widehat{\vg}_{f,t_1}, \widehat{\vg}_{r,t_2}) \leq o(\tau) \simeq 0$ is motivated by the observations in Figure~\ref{fig:cos} of Appendix \ref{sec:app_low_corre} and for sake of notation simplicity. Our analyses can be easily extended to the cases if we use a small constant to bound this correlation.


\begin{assumption} \label{assum3}
    \textbf{(Lipschitz Smoothness)}  
    The loss functions \(\mathcal{L}_f\) and \(\mathcal{L}_r\) are both \(L\)-smooth:
    \begin{align}
        \forall \theta_1, \theta_2, \|\nabla_{\theta_1} \mathcal{L}_f(\mathcal{D}_f, \theta_1) - \nabla_{\theta_2} \mathcal{L}_f(\mathcal{D}_f, \theta_2)\| &\leq L \|\theta_1 - \theta_2\|, \label{eq:lip_f} \\
        \forall \theta_1, \theta_2, \|\nabla_{\theta_1} \mathcal{L}_r(\mathcal{D}_r, \theta_1) - \nabla_{\theta_2} \mathcal{L}_r(\mathcal{D}_r, \theta_2)\| &\leq L \|\theta_1 - \theta_2\|. \label{eq:lip_r}
    \end{align}
\end{assumption}
We assume the same Lipschitz constant for $\gL_f$ and $\gL_r$ because (1) $\gD_f$ and $\gD_r$ are from similar distributions; (2) $L_f$ and $L_r$ are usually opposite functions.

We now consider the SGD update scheme with a shared or decoupled momentum factor. Since we alternately update the parameters based on the stochastic gradients from $\gL_f$ and $\gL_r$, we use $\{(\vm^S_{f, t}, \vm^S_{r, t})\}_{t = 0}^{T -1}$ to denote the momentum factors after using the gradients from the forget loss and the retain loss, respectively, for time stamp $t$ when we are using the shared momentum. Similarly, we use $\{(\vm^D_{f, t}, \vm^D_{r, t})\}_{t = 0}^{T -1}$ to denote the momentum factors when using the decoupled momentum. We use $\{(\theta^S_{f, t}, \theta^S_{r, t})\}_{t = 0}^{T -1}$, $\{(\theta^D_{f, t}, \theta^D_{r, t})\}_{t = 0}^{T -1}$ to represent the corresponding updated parameters.
We use $\alpha \in [0, 1]$ to denote the momentum factor and $\eta$ as the learning rate, then the update schemes for a shared and decoupled momentum factors are shown as follows:
\begin{equation}
    \begin{aligned}
        \mathrm{(Shared~Momentum)}\begin{cases}
            \vm_{f,t}^S &= \alpha \vm_{r,t-1}^S + \widehat{\vg}^S_{f,t},  \quad \theta^S_{f, t} = \theta^S_{r, t -1} - \eta \vm_{f,t}^S \\
            \vm_{r,t}^S &= \alpha \vm_{f,t}^S + \widehat{\vg}^S_{r,t}, \quad \quad \theta^S_{r, t} = \theta^S_{f, t} - \eta \vm_{r, t}^S
        \end{cases} \\
        \mathrm{(Decoupled~Momentum)}\begin{cases}
            \vm_{f,t}^D &= \alpha \vm_{f,t-1}^D + \widehat{\vg}^D_{f,t},  \quad \theta^D_{f, t} = \theta^D_{r, t -1} - \eta \vm_{f,t}^D \\
            \vm_{r,t}^D &= \alpha \vm_{r,t -1 }^D + \widehat{\vg}^D_{r,t}, \quad ~\theta^D_{r, t} = \theta^D_{f, t} - \eta \vm_{r, t}^D
        \end{cases}
    \end{aligned} \label{eq:update}
\end{equation}
Here, we use $\{(\widehat{\vg}^S_{f, i}, \widehat{\vg}^S_{r, i})\}_{i = 1}^{T-1}$ and $\{(\widehat{\vg}^D_{f, i}, \widehat{\vg}^D_{r, i})\}_{i = 1}^{T-1}$ to denote the stochastic gradients during unlearning in the case of shared momentum and the decoupled momentum, respectively. Based on Algorithm~\ref{alg:dualoptim}, we update the parameters by gradients from the forget loss and the retain loss alternately. Therefore, we have $\vg^S_{f, i} = \nabla_\theta \gL_f(\theta^S_{r, i - 1})$, $\vg^D_{f, i} = \nabla_\theta \gL_f(\theta^D_{r, i - 1})$, $\vg^S_{r, i} = \nabla_\theta \gL_r(\theta^S_{f, i})$, $\vg^D_{r, i} = \nabla_\theta \gL_r(\theta^D_{f, i})$, and $\widehat{\vg}^S_{f, i}$, $\widehat{\vg}^D_{f, i}$, $\widehat{\vg}^S_{r, i}$, $\widehat{\vg}^D_{r, i}$ are their stochastic variants.

When using decoupled momentum factors, the momentum factors $\{\vm^D_{f, 0}, \vm^D_{f, 1}, ..., \vm^D_{f, T - 1}\}$, $\{\vm^D_{r, 0}, \vm^D_{r, 1}, ..., \vm^D_{r, T - 1}\}$ are two independent sequences. By contrast, when using a shared momentum factor, the factor is updated as a sequence $\{\vm^S_{f, 0}, \vm^S_{r, 0}, \vm^S_{f, 1}, \vm^S_{r, 1}, ..., \vm^S_{f, T - 1}, \vm^S_{r, T - 1}\}$. When initialization, we let $\theta^S_{r, -1} = \theta^D_{r, - 1} = \theta_o$ and $\vm^S_{r, -1} = \vm^D_{f, -1} = \vm^D_{r, -1} = 0$.

We focus on the variance of the parameters in two different update schemes in (\ref{eq:update}), both of which iteratively calculate the gradients on random variables. Therefore, we first estimate the variance caused by gradient calculation as in the following lemma.

\begin{lemma}  \label{lemma}
    \textbf{(Variance of Gradients)} 
    If the loss function $\mathcal{L}$ is Lipschitz smooth with a constant $L$, and $\mathrm{Var}(\theta) \leq \sigma_{\theta}^2$, then we have $\mathrm{Var}(\nabla_\theta\gL(\theta)) \leq L^2\sigma_{\theta}^2$.
\end{lemma}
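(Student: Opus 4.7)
The plan is to treat $\mathrm{Var}(\cdot)$ as the expected squared deviation from the mean in the vector-valued sense, i.e., $\mathrm{Var}(X) = \mathbb{E}\|X - \mathbb{E}[X]\|^2$ (equivalently, the trace of the covariance matrix), and then chain together two inequalities: the mean-as-minimizer property and the Lipschitz smoothness of $\nabla \mathcal{L}$.

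First I would write
\begin{equation*}
\mathrm{Var}(\nabla_\theta \mathcal{L}(\theta)) = \mathbb{E}\bigl\|\nabla_\theta \mathcal{L}(\theta) - \mathbb{E}[\nabla_\theta \mathcal{L}(\theta)]\bigr\|^2 .
\end{equation*}
Next I would use the standard fact that for any random vector $X$ and any deterministic vector $c$, one has $\mathbb{E}\|X - \mathbb{E}[X]\|^2 \leq \mathbb{E}\|X - c\|^2$ (a direct consequence of orthogonality between $X - \mathbb{E}[X]$ and the constant $\mathbb{E}[X] - c$). I would take $c = \nabla_\theta \mathcal{L}(\mathbb{E}[\theta])$, which is a constant since $\mathbb{E}[\theta]$ is deterministic, giving
\begin{equation*}
\mathrm{Var}(\nabla_\theta \mathcal{L}(\theta)) \leq \mathbb{E}\bigl\|\nabla_\theta \mathcal{L}(\theta) - \nabla_\theta \mathcal{L}(\mathbb{E}[\theta])\bigr\|^2 .
\end{equation*}

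Then I would invoke the $L$-Lipschitz smoothness assumption on $\mathcal{L}$ pointwise inside the expectation to get $\|\nabla_\theta \mathcal{L}(\theta) - \nabla_\theta \mathcal{L}(\mathbb{E}[\theta])\|^2 \leq L^2 \|\theta - \mathbb{E}[\theta]\|^2$, so that
\begin{equation*}
\mathrm{Var}(\nabla_\theta \mathcal{L}(\theta)) \leq L^2 \, \mathbb{E}\|\theta - \mathbb{E}[\theta]\|^2 = L^2 \, \mathrm{Var}(\theta) \leq L^2 \sigma_\theta^2,
\end{equation*}
which is the desired bound.

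There is no real obstacle here; the result is essentially a restatement of Lipschitzness in a probabilistic wrapper. The only subtlety worth flagging explicitly in the write-up is the choice to measure variance of a random vector via the trace of its covariance (so that $\mathrm{Var}(\theta)$ in the hypothesis matches the quantity $\mathbb{E}\|\theta - \mathbb{E}[\theta]\|^2$ that appears after applying Lipschitzness), and the use of the mean as the $L^2$-minimizer to replace $\mathbb{E}[\nabla_\theta \mathcal{L}(\theta)]$ by the more tractable anchor $\nabla_\theta \mathcal{L}(\mathbb{E}[\theta])$ before bringing in smoothness.
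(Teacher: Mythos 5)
Your proof is correct, but it takes a slightly different route from the paper's. The paper uses the symmetrization identity $\mathrm{Var}(X) = \tfrac{1}{2}\,\E\bigl[\|X_1 - X_2\|^2\bigr]$ for two independent copies $X_1, X_2$ of $X$: it writes $\mathrm{Var}(\nabla_\theta\gL(\theta))$ as a double integral over independent copies $\theta_1,\theta_2$, applies Lipschitz smoothness to the difference $\nabla\gL(\theta_1)-\nabla\gL(\theta_2)$ inside the integral, and then recognizes the resulting double integral as (at most) $\mathrm{Var}(\theta) \le \sigma_\theta^2$ via the same identity. You instead anchor at the deterministic point $\nabla_\theta\gL(\E[\theta])$, using the mean-as-$L^2$-minimizer inequality $\E\|X-\E[X]\|^2 \le \E\|X-c\|^2$ before invoking Lipschitzness between $\theta$ and $\E[\theta]$. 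Both arguments are one-step reductions of the gradient variance to a squared difference of gradients at two points followed by the Lipschitz bound; they differ only in which pair of points is compared (two independent random copies versus one random copy and its mean). The paper's coupling version is marginally more symmetric and avoids introducing $\E[\theta]$ explicitly, while yours makes the bias--variance orthogonality step explicit; neither has an advantage in generality here, and your flagging of the trace-of-covariance convention for vector-valued variance is the right subtlety to call out, since the paper relies on the same convention implicitly.
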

The proof and discussions are deferred to Appendix \ref{proof:lemma}.
We can use Lemma~\ref{lemma} to derive the maximum variance of model parameters for both update schemes in (\ref{eq:update}) as in following theorem.

\begin{theorem} \label{theory}  
\textbf{(Variance Bound Comparison for Decoupled vs. Shared Momentum)} For the update schemes indicated in (\ref{eq:update}) using the same hyperparameters ($\eta$, $\alpha$), and we use $\overline{\mathrm{Var}}(\cdot)$ to denote the maximum variance of a variable, if the function $\gL_f$, $\gL_r$ and the stochastic gradient $\{(\widehat{\vg}^S_{f, i}, \widehat{\vg}^S_{r, i})\}_{i = 0}^{T - 1}$, $\{(\widehat{\vg}^D_{f, i}, \widehat{\vg}^D_{r, i})\}_{i = 0}^{T - 1}$ satisfy Assumption~\ref{assum1},~\ref{assum2},~and \ref{assum3}, then
\begin{equation} \label{eq:bound_inequality}  
\forall t, \overline{\mathrm{Var}}(\theta_{f,t}^D) \leq \overline{\mathrm{Var}}(\theta_{f,t}^S),\quad
\overline{\mathrm{Var}}(\theta_{r,t}^D) \leq \overline{\mathrm{Var}}(\theta_{r,t}^S),
\end{equation}
\end{theorem}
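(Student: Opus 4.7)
The plan is to unroll both momentum recursions in (\ref{eq:update}) to express the parameter iterates as explicit linear combinations of the stochastic gradients, and then compare the resulting variances term by term. Starting from $\vm^D_{f,-1} = \vm^D_{r,-1} = \vm^S_{r,-1} = 0$, straightforward telescoping gives
\begin{align*}
    \vm^D_{f,t} &= \sum_{k=0}^{t} \alpha^{k}\widehat{\vg}^D_{f,t-k}, \qquad \vm^D_{r,t} = \sum_{k=0}^{t} \alpha^{k}\widehat{\vg}^D_{r,t-k},\\
    \vm^S_{f,t} &= \sum_{k=0}^{t} \alpha^{2k}\widehat{\vg}^S_{f,t-k} + \sum_{k=1}^{t} \alpha^{2k-1}\widehat{\vg}^S_{r,t-k},\\
    \vm^S_{r,t} &= \sum_{k=0}^{t} \alpha^{2k}\widehat{\vg}^S_{r,t-k} + \sum_{k=0}^{t} \alpha^{2k+1}\widehat{\vg}^S_{f,t-k},
\end{align*}
and summing the two-step parameter updates yields $\theta^{\bullet}_{f,t} = \theta_o - \eta\bigl(\vm^{\bullet}_{f,t} + \sum_{s=0}^{t-1}(\vm^{\bullet}_{f,s} + \vm^{\bullet}_{r,s})\bigr)$, with an analogous expression for $\theta^{\bullet}_{r,t}$. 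Each iterate is therefore a fixed polynomial (in $\alpha,\eta$) of the stochastic gradients along the trajectory, which sets up a clean variance comparison.

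Next, I would apply the variance operator to these expansions using Assumptions 1--2. For the decoupled scheme, the across-stream bound $\rho(\widehat{\vg}^D_{f,\cdot},\widehat{\vg}^D_{r,\cdot}) \le o(\tau) \simeq 0$ causes the variance to split into a pure-$f$ contribution and a pure-$r$ contribution with no coupling cross-terms, while the within-stream bound $\tau$ and the per-step noise bound $\sigma^2$ control the remaining covariances. For the shared scheme, the mixed coefficients mean each single momentum already contains both streams, so the variance decomposes into the same pure-$f$ and pure-$r$ pieces (after regrouping the $\alpha^{2k}$ and $\alpha^{2k\pm 1}$ coefficients) \emph{plus} additional non-negative mixed covariance terms arising from pairs $(\widehat{\vg}^S_{f,i},\widehat{\vg}^S_{r,j})$ with $i\neq j$ whose coefficients both appear inside a single squared norm. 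The key structural claim is that, after this regrouping, the decoupled variance equals the ``diagonal'' part of the shared variance, while all extra shared terms are non-negative.

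I would close the argument by induction on $t$. The base case $t=0$ is trivial since $\theta_o$ is deterministic and all momenta are zero, so both sides of (\ref{eq:bound_inequality}) are $0$. For the inductive step, Lemma~\ref{lemma} together with Assumption~\ref{assum3} gives $\mathrm{Var}(\vg^{\bullet}_{f,t}) \le L^2\,\overline{\mathrm{Var}}(\theta^{\bullet}_{r,t-1})$ and $\mathrm{Var}(\vg^{\bullet}_{r,t}) \le L^2\,\overline{\mathrm{Var}}(\theta^{\bullet}_{f,t})$, so the inductive hypothesis propagates into the gradient variance at step $t$ and hence---via the explicit expansions---into $\overline{\mathrm{Var}}(\theta^{\bullet}_{\cdot,t})$. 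Combining the inductive hypothesis with the diagonal/off-diagonal decomposition gives the desired inequality at step $t$.

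The main obstacle, I expect, is controlling the mixed cross-covariance terms in the shared scheme: in contrast to the decoupled case---where the $o(\tau)$ across-stream bound makes everything separate cleanly---the shared momentum couples $\widehat{\vg}_f$ and $\widehat{\vg}_r$ inside the same linear combination, so one must verify that the extra terms produced by expanding $\|\vm^S_{f,t}+\vm^S_{r,t}\|^2$ are genuinely non-negative (or at least dominated by quantities already present in the decoupled expression). This requires careful bookkeeping of the $\alpha^{2k}$ vs.\ $\alpha^k$ coefficient families, and a precise interpretation of $\overline{\mathrm{Var}}$ (trace of covariance, or coordinate-wise supremum) under which the monotonicity used in the induction actually holds.
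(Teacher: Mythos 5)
Your overall scaffolding---unrolling both recursions into explicit linear combinations of the stochastic gradients, using Lemma~\ref{lemma} to propagate parameter variance into gradient variance, and closing by induction on $t$---is exactly the paper's route. But your central comparison step is wrong. You claim that after regrouping, ``the decoupled variance equals the diagonal part of the shared variance, while all extra shared terms are non-negative,'' attributing the gap to cross-stream covariance terms $(\widehat{\vg}^S_{f,i},\widehat{\vg}^S_{r,j})$. Neither half of this is true. First, the diagonal parts are \emph{not} equal: writing $A_k=\sum_{j=0}^{k}\alpha^j$, the unrolled iterates give $-\theta^S_{f,t}=\sum_{i}A_{2(t-i)}\widehat{\vg}^S_{f,i}+\sum_{i}A_{2(t-i)-1}\widehat{\vg}^S_{r,i}$ versus $-\theta^D_{f,t}=\sum_{i}A_{t-i}\widehat{\vg}^D_{f,i}+\sum_{i}A_{t-i-1}\widehat{\vg}^D_{r,i}$, because in the shared scheme every past gradient is re-amplified by $\alpha$ at \emph{both} half-steps of each outer iteration. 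The entire content of the theorem is the elementary monotonicity $A_{2k}\geq A_k$ applied to these within-stream coefficients. Second, the cross-stream terms cannot supply the gap: Assumption~\ref{assum2} bounds $\rho(\widehat{\vg}_{f,\cdot},\widehat{\vg}_{r,\cdot})$ by $o(\tau)\simeq 0$ for \emph{all} time indices, so those covariances are non-positive and get discarded identically in both schemes---note that the decoupled parameter $\theta^D_{\cdot,t}$ also mixes both gradient streams (only the momenta are decoupled, not $\theta$), so it has cross terms of exactly the same kind. If your structural claim were right, you would only obtain $\overline{\mathrm{Var}}(\theta^S)\leq\overline{\mathrm{Var}}(\theta^D)+0$, which is the wrong direction of nothing.

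Your base case is also off: at $t=0$ the two sides are not both zero. One has $\theta_{f,0}^{\bullet}=\theta_o-\eta\widehat{\vg}_{f,0}$ with variance up to $\sigma^2$ in both schemes, but already $\theta^S_{r,0}=\theta_o-\eta\bigl((1+\alpha)\widehat{\vg}^S_{f,0}+\widehat{\vg}^S_{r,0}\bigr)$ versus $\theta^D_{r,0}=\theta_o-\eta\bigl(\widehat{\vg}^D_{f,0}+\widehat{\vg}^D_{r,0}\bigr)$, so the coefficient gap $(1+\alpha)^2$ versus $1$ appears at the very first step and the base case is where the mechanism first shows itself. To repair the proof, replace your diagonal/off-diagonal decomposition with: (i) drop all $f$--$r$ cross terms in both schemes via Assumption~\ref{assum2}; (ii) bound the within-stream covariances by the $(1+t\tau)\sum_i w_i^2\mathrm{Var}(\widehat{\vg}_i)$ inequality, identically in both schemes; and (iii) conclude by comparing the squared coefficients $A_{2(t-i)}^2$ and $A_{2(t-i)\pm1}^2$ against $A_{t-i}^2$ and $A_{t-i-1}^2$, feeding the inductive hypothesis through Lemma~\ref{lemma}.
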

The proof is deferred to Appendix \ref{proof:theory}. Theorem \ref{theory} formalizes that decoupled momentum induces smaller worst-case parameter variance compared to shared momentum, demonstrating that decoupled momentum theoretically enhances the stability of unlearning. 
Additionally, we prove that decoupled momentum can also induce smaller worst-case variance of downstream metrics in Appendix \ref{sec:metric_proof}.

\subsection{Dual Optimizers for Machine Unlearning}
\begin{algorithm}
\caption{Machine Unlearning with \colorbox{red!35}{Shared Optimizer} / \colorbox{green!35}{Dual Optimizers}}
\label{alg:dualoptim}
\begin{algorithmic}[1]
    \STATE {\bfseries Input:} Model: $f_\theta$; Forget set: $\gD_f$; Retain set: $\gD_r$; Iterations for outer loop: $T_{o}$; Iterations for forgetting: $T_f$; Iterations for retaining: $T_r$; Step sizes: \colorbox{red!35}{$\eta$}, \colorbox{green!35}{$\eta_f$, $\eta_r$}.
    \STATE \colorbox{red!35}{$\mathrm{Optim}$ is the same optimizer as in pretraining with step size $\eta$.} \colorbox{green!35}{$\mathrm{Optim}_f$ is $\mathrm{Adam}(\theta, \eta_f)$,~$\mathrm{Optim}_r$ is the same optimizer as in pretraining with step size $\eta_r$.}
    \FOR{$t=1,...,T_{o}$}
        \FOR{$t'=1,...,T_{f}$}
        \STATE Fetch mini-batch data from the forget set $B_f\sim \gD_f$
        \STATE Calculate the forget loss $\mathcal{L}_f$ on $B_f$ and get the gradient
        \STATE Use \colorbox{red!35}{$\mathrm{Optim}$} / \colorbox{green!35}{$\mathrm{Optim}_f$} to update $\theta$
        \ENDFOR
        \FOR{$t'=1,...,T_{r}$}
        \STATE Fetch mini-batch data from the retain set $B_r\sim \gD_r$
        \STATE Calculate the retain loss $\mathcal{L}_r$ on $B_r$ and get the gradient
        \STATE Use \colorbox{red!35}{$\mathrm{Optim}$} / \colorbox{green!35}{$\mathrm{Optim}_r$} to update $\theta$
        \ENDFOR
    \ENDFOR
    \STATE {\bfseries Output:} Model $f_\theta$
\end{algorithmic}
\end{algorithm} 
We incorporate the findings of the two subsections above and propose Dual Optimizers (\textbf{DualOptim}) to enhance the efficacy and stability of MU. Specifically, we utilize \textit{two distinct optimizers} during the unlearning process: Adam for forgetting and the default optimizer (e.g., SGD) for retaining. On one hand, we use the same optimizer as in pretraining to maintain the performance on the retain set and use the optimizer with adaptive learning rates to handle the large gradient variation for the forget loss. On the other hand, we decouple the momentum factors and use two distinct optimizers to boost the algorithm stability during unlearning. 
Notably, DualOptim is a plug-and-play approach and can be integrated in existing MU algorithms that minimize forgetting and retaining objectives alternately. The detailed pseudo-code to compare the pipeline of MU with a shared optimizer and DualOptim is presented in Algorithm \ref{alg:dualoptim}, with unique segments for shared and dual optimizers highlighted in red and green, respectively. Algorithm~\ref{alg:dualoptim} presents a general framework, $\mathcal{L}_f$ and $\mathcal{L}_r$ are specified by different concrete MU algorithms.

\section{Experiments}
In this section, we conduct extensive experiments to evaluate our method for different applications, including image classification, image generation, and large language models. The results demonstrate that our method can enhance the efficacy and stability of multiple MU methods, achieving new state-of-the-art performance across diverse scenarios. We conduct ablation studies for further analysis. 

\subsection{Random Subset Unlearning in Image Classification} \label{sec:image_classify}
\begin{table}[H]
\setlength\tabcolsep{5pt}
\vspace{-1em}
    \centering
    \caption{Performance summary of MU methods for image classification. Experiments are conducted on (a) $10\%$ random subset of \textbf{CIFAR-10} using \textbf{ResNet-18} and \textbf{(b)} $10\%$ random subset of \textbf{TinyImageNet} using \textbf{Swin-T}. All results are presented as mean and standard deviation across $5$ trials with different random forget data. Performance gaps with \textit{RT} are indicated in \xy{blue}. The average gap (\textbf{Gap}) and average standard deviation (\textbf{Std}) metrics are calculated by the average of the gaps and standard deviation measured in FA, RA, TA, and MIA, respectively. All the numbers are in percentage.} \label{tab:res}
    \vspace{-0.7em}
    \subtable[\footnotesize\textbf{CIFAR-10 Random Subset Unlearning (10\%)}]{
    \resizebox*{\textwidth}{!}{
    \begin{tabular}{l|c c c c |c c}
        \toprule[1.5pt]
          \textbf{Method} & FA & RA & TA & MIA & Gap $\downarrow$ & Std $\downarrow$\\
         \midrule[1pt]
         RT & 94.61$_{\pm 0.46}$ (\xy{0.00}) & 100.00$_{\pm 0.00}$ (\xy{0.00}) & 94.25$_{\pm 0.18}$ (\xy{0.00}) & 76.26$_{\pm 0.54}$ (\xy{0.00}) & \xy{0.00} & 0.30\\
         \midrule[0.5pt]
         FT & 99.16$_{\pm 0.10}$ (\xy{4.55}) & 99.84$_{\pm 0.06}$ (\xy{0.16}) & 94.10$_{\pm 0.09}$ (\xy{0.15}) & 88.77$_{\pm 0.38}$ (\xy{12.51}) & \xy{4.34} & 0.16\\
         GA  & 98.76$_{\pm 0.39}$ (\xy{4.15}) & 99.10$_{\pm 0.90}$ (\xy{0.90})&  93.89$_{\pm 0.41}$ (\xy{0.36})& 92.58$_{\pm 0.55}$ (\xy{16.32})& \xy{5.43} & 0.44\\
         RL & 97.19$_{\pm 0.21}$ (\xy{2.58}) & 99.67$_{\pm 0.08}$ (\xy{0.33}) & 94.03$_{\pm 0.27}$ (\xy{0.22}) & 68.19$_{\pm 0.95}$ (\xy{8.43}) & \xy{2.80} & 0.38\\
         \midrule[0.5pt]
         SCRUB & 92.88$_{\pm 0.25}$ (\xy{1.73}) & 99.62$_{\pm 0.10}$ (\xy{0.38}) & 93.54$_{\pm 0.22}$ (\xy{0.71}) & 82.78$_{\pm 0.86}$ (\xy{6.52}) & \xy{2.33} & \textbf{0.36}\\
         \rowcolor{gray!35}
          ~+\textbf{DualOptim} & 94.90$_{\pm 0.42}$ (\xy{0.29}) & 99.52$_{\pm 0.09}$ (\xy{0.48}) & 93.50$_{\pm 0.20}$ (\xy{0.75}) & 78.26$_{\pm 0.79}$ (\xy{2.00}) & \textbf{\xy{0.88}} & 0.38\\
         SalUn  & 96.99$_{\pm 0.31}$ (\xy{2.38}) & 99.40$_{\pm 0.28}$ (\xy{0.60}) & 93.84$_{\pm 0.36}$ (\xy{0.41}) & 65.76$_{\pm 1.05}$ (\xy{10.50}) & \xy{3.47} & 0.50\\
         \rowcolor{gray!35}
          ~+\textbf{DualOptim} & 95.47$_{\pm 0.22}$ (\xy{0.86}) & 99.06$_{\pm 0.94}$ (\xy{0.60}) & 92.47$_{\pm 0.29}$ (\xy{1.78}) & 76.14$_{\pm 0.70}$ (\xy{0.12}) & \textbf{\xy{0.93}} & \textbf{0.35}\\
         SFRon & 94.67$_{\pm 3.03}$ (\xy{0.06}) & 99.83$_{\pm 0.13}$ (\xy{0.17}) & 93.98$_{\pm 0.56}$ (\xy{0.27}) & 77.80$_{\pm 5.61}$ (\xy{1.54}) & \xy{0.51} & 2.33 \\
         \rowcolor{gray!35}
          ~+\textbf{DualOptim} & 94.69$_{\pm 1.13}$ (\xy{0.08}) & 99.92$_{\pm 0.01}$ (\xy{0.08}) & 94.11$_{\pm 0.11}$ (\xy{0.14}) & 77.77$_{\pm 1.39}$ (\xy{1.51}) & \textbf{\xy{0.44}} & \textbf{0.66}\\
         \bottomrule[1.5pt]
    \end{tabular}
    }}
    \subtable[\footnotesize\textbf{TinyImageNet Random Subset Unlearning (10\%)}]{
    \resizebox*{\textwidth}{!}{
    \scriptsize
    \begin{tabular}{l|c c c c |c c}
        \toprule[1.5pt]
          \textbf{Method} & FA & RA & TA & MIA & Gap $\downarrow$ & Std $\downarrow$\\
         \midrule[1pt]
         RT & 85.29$_{\pm 0.09}$ (\xy{0.00}) & 99.55$_{\pm 0.03}$ (\xy{0.00}) & 85.49$_{\pm 0.15}$ (\xy{0.00}) & 69.30$_{\pm 0.20}$ (\xy{0.00}) & \xy{0.00} & 0.12\\
         \midrule[0.5pt]
         FT & 96.50$_{\pm 0.10}$ (\xy{11.21}) & 98.23$_{\pm 0.08}$ (\xy{1.32}) & 82.67$_{\pm 0.21}$ (\xy{2.82}) & 79.85$_{\pm 0.13}$ (\xy{10.55}) & \xy{6.48} & 0.13\\
         GA & 90.02$_{\pm 3.26}$ (\xy{4.73}) & 90.84$_{\pm 3.29}$ (\xy{8.71})&  75.64$_{\pm 2.67}$ (\xy{9.85})& 78.97$_{\pm 2.07}$ (\xy{9.67})& \xy{8.24} & 2.82\\
         RL& 94.66$_{\pm 0.26}$ (\xy{9.37}) & 98.02$_{\pm 0.14}$ (\xy{1.53}) & 82.73$_{\pm 0.27}$ (\xy{2.76}) & 54.45$_{\pm 1.04}$ (\xy{15.15}) & \xy{7.13} & 0.43\\
         \midrule[0.5pt]
         SCRUB & 97.80$_{\pm 0.16}$ (\xy{12.51}) & 98.13$_{\pm 0.08}$ (\xy{1.42}) & 82.64$_{\pm 0.19}$ (\xy{2.85}) & 79.62$_{\pm 0.41}$ (\xy{10.32}) & \xy{6.78} & \textbf{0.21}\\
         \rowcolor{gray!35}
          ~+\textbf{DualOptim} & 97.20$_{\pm 0.20}$ (\xy{11.91}) & 98.30$_{\pm 0.10}$ (\xy{1.25}) & 83.17$_{\pm 0.19}$ (\xy{2.32}) & 79.10$_{\pm 0.63}$ (\xy{9.80}) & \textbf{\xy{6.32}} & 0.28\\
         SalUn & 97.69$_{\pm 0.14}$ (\xy{12.40}) & 98.89$_{\pm 0.03}$ (\xy{0.66}) & 84.02$_{\pm 0.32}$ (\xy{1.47}) & 61.87$_{\pm 0.97}$ (\xy{7.43}) & \xy{5.49} & 0.37\\
         \rowcolor{gray!35}
          ~+\textbf{DualOptim}& 91.68$_{\pm 0.28}$ (\xy{6.39}) & 95.13$_{\pm 0.18}$ (\xy{4.42}) & 80.16$_{\pm 0.34}$ (\xy{5.33}) & 72.48$_{\pm 0.33}$ (\xy{3.18}) & \textbf{\xy{4.83}} & \textbf{0.28}\\
         SFRon & 96.41$_{\pm 0.74}$ (\xy{11.12}) & 98.95$_{\pm 0.22}$ (\xy{0.60}) & 83.40$_{\pm 0.51}$ (\xy{2.09}) & 70.40$_{\pm 3.15}$ (\xy{1.10}) & \xy{3.73} & 1.16\\
         \rowcolor{gray!35}
          ~+\textbf{DualOptim}& 92.26$_{\pm 1.44}$ (\xy{6.97}) & 98.27$_{\pm 0.12}$ (\xy{1.28}) & 83.12$_{\pm 0.21}$ (\xy{2.37}) & 69.19$_{\pm 2.27}$ (\xy{0.11}) & \textbf{\xy{2.68}} & \textbf{1.01} \\
         \bottomrule[1.5pt]
    \end{tabular}
    }}
    \vspace{-1.8em}
\end{table}
We start with random subset unlearning tasks in image classification, including using ResNet-18 \citep{he2016deep} on CIFAR-10 \citep{krizhevsky2009learning} and Swin Transformer-Tiny (Swin-T) \citep{liu2021swin} on TinyImageNet \citep{russakovsky2015imagenet}. Additional results on CIFAR-100 \citep{krizhevsky2009learning} and SVHN \citep{netzer2011reading} are included in Appendix \ref{sec:app_classify}.
Consistent with previous work \citep{thudi2022unrolling, jia2023model, fan2024salun, huang2025unified}, we regard Retrain (RT) as the gold standard of MU. Since our proposed DualOptim is plug-and-play, we apply it to SCRUB \citep{kurmanji2023towards}, SalUn \citep{fan2024salun}, and SFRon \citep{huang2025unified} to validate its efficacy and stability. Additionally, we include three simple baselines, namely Fine-tune (FT) \citep{warnecke2021machine}, Gradient Ascent (GA) \citep{thudi2022unrolling} and Random Label (RL) \citep{golatkar2020eternal}, for reference as well. Note that the hyperparameters of all evaluated methods are tuned to their optimal values by sophisticated search, and we defer the implementation details to Appendix \ref{sec:imple}. Besides the metrics mentioned in Sec. \ref{sec:preliminary}, which includes FA, RA, TA and MIA, we follow the evaluation criteria in \citep{fan2024salun,huang2025unified} to report the average gap (Gap) and average standard deviation (Std) to indicate the overall performance and stability, respectively. They are the average gap with RT and the average standard deviation among the results in FA, RA, TA, and MIA, respectively.

As illustrated in Table \ref{tab:res}, SFRon achieves the smallest average gap with RT, yet it exhibits the highest variability in performance across different random forget subsets. This suggests that the optimal hyperparameters vary significantly depending on the specific forget sets used. Despite being more stable, other methods yield suboptimal results in terms of unlearning efficacy. By integrating DualOptim into these MU algorithms, we observe notable enhancements in terms of both unlearning efficacy and performance stability. For example, in the task of unlearning a 10\% random subset from CIFAR-10, DualOptim reduces the average standard deviation of SFRon from 2.33\% to 0.66\%, while narrowing the average gap from 0.51\% to 0.44\%, achieving the best performance among all. Additionally, DualOptim significantly narrows down the average gap with RT for both SCRUB and SalUn, with improvements of 1.45\% and 2.54\%, respectively. Consistent results are observed in other datasets and model architectures as well, highlighting the widespread effectiveness of DualOptim in improving both the performance and stability of unlearning processes.

\subsection{Class-wise Unlearning in Image Generation}
\begin{table}[ht]
\vspace{-0.5em}
    \centering
    \caption{Class-wise unlearning performance on \textbf{CIFAR-10} with \textbf{DDPM} and \textbf{ImageNet} with \textbf{DiT}. The best unlearning performance for each forgetting class is highlighted in \textbf{bold} for FA (in \%) and FID. Note that the results of SA, SalUn and SFRon are those reported in \citep{huang2025unified}.} \label{tab:res_gen}
    \footnotesize
    \tabcolsep=0.15em
    \small
    \begin{tabular}{p{1.4cm}<{\centering}|p{1cm}<{\centering} p{1cm}<{\centering}| p{1cm}<{\centering} p{1.1cm}<{\centering}| p{1cm}<{\centering} p{1cm}<{\centering}| p{1cm}<{\centering} p{1cm}<{\centering}| p{1cm}<{\centering} p{1cm}<{\centering}}
        \toprule[1.2pt]
         \multirow{3}{*}{\textbf{Method}} & \multicolumn{10}{c}{\textbf{CIFAR-10 Class-wise Unlearning}} \\
         ~ & \multicolumn{2}{c|}{Automobile} & \multicolumn{2}{c|}{Cat} & \multicolumn{2}{c|}{Dog} & \multicolumn{2}{c|}{Horse} & \multicolumn{2}{c}{Truck}\\
         ~ & FA $\downarrow$ & \multicolumn{1}{c|}{FID $\downarrow$}& FA $\downarrow$ & \multicolumn{1}{c|}{FID $\downarrow$} & FA $\downarrow$ & \multicolumn{1}{c|}{FID $\downarrow$} & FA $\downarrow$ & \multicolumn{1}{c|}{FID $\downarrow$} & FA $\downarrow$ & \multicolumn{1}{c}{FID $\downarrow$} \\ 
         \midrule[1pt]
         SA & \textbf{0.00} & 23.56 & 14.20 & 21.34 & 8.60 & 21.19 & \textbf{0.00} & 21.13 & \textbf{0.00} & 29.04 \\ 
        
         SalUn & 0.20 & 21.23 & 1.40 & 20.29 & \textbf{0.00} & 20.18 & 0.60 & 20.70 & 0.80 & 20.45 \\
           \midrule[0.5pt]
         SFRon & \textbf{0.00} & 20.70 & 7.40 & \textbf{18.44} & 0.20 & 18.89 & \textbf{0.00} & 19.93 & \textbf{0.00} & 20.61 \\
         \rowcolor{gray!35}
          ~+\textbf{DO} & 0.20 & \textbf{19.72} & \textbf{1.00} & 19.36 & \textbf{0.00} & \textbf{18.58} & \textbf{0.00} & \textbf{18.91} & \textbf{0.00} & \textbf{17.26}  \\
         \bottomrule[1.2pt]
         \toprule[1.2pt]
         \multirow{3}{*}{\textbf{Method}} & \multicolumn{10}{c}{\textbf{ImageNet Class-wise Unlearning}} \\
         & \multicolumn{2}{c|}{Cockatoo} & \multicolumn{2}{c|}{Golden Retriever} & \multicolumn{2}{c|}{White Wolf} & \multicolumn{2}{c|}{Arctic Fox} & \multicolumn{2}{c}{Otter} \\
         & FA $\downarrow$ & \multicolumn{1}{c|}{FID $\downarrow$} & FA $\downarrow$ & \multicolumn{1}{c|}{FID $\downarrow$} & FA $\downarrow$ & \multicolumn{1}{c|}{FID $\downarrow$} & FA $\downarrow$ & \multicolumn{1}{c|}{FID $\downarrow$} & FA $\downarrow$ & \multicolumn{1}{c}{FID $\downarrow$} \\
         \midrule[1pt]
         SA & \textbf{0.00} & 348.75 & \textbf{0.00} & 298.97 & \textbf{0.00} & 45.89 & \textbf{0.00} & 393.91 & 29.8 & 321.21 \\
         SalUn & 91.21 & 18.47 & 46.09 & 25.28 & \textbf{0.00} & 15.16 & 45.90 & 408.07 & 87.5 & 19.69 \\
         \midrule[0.5pt]
         SFRon & \textbf{0.00} & \textbf{13.59} &\textbf{0.00} & 17.76 &\textbf{0.00} & 23.28 & \textbf{0.00} & 16.12 & \textbf{0.00} & 16.43 \\
         \rowcolor{gray!35}
          ~+\textbf{DO} & \textbf{0.00} & 17.46 & \textbf{0.00} & \textbf{14.63} & \textbf{0.00} & \textbf{14.72} & \textbf{0.00} & \textbf{14.91} & \textbf{0.00} & \textbf{14.55} \\
         \bottomrule[1.2pt]
    \end{tabular}
    \vspace{-0.5em}
\end{table}

Besides classification, we further evaluate our proposed DualOptim in class-wise unlearning tasks in image generation. Following the settings of \cite{huang2025unified}, we conduct experiments using conditional DDPM \citep{ho2020denoising} with the U-Net \citep{ronneberger2015u} on CIFAR-10 and the latent diffusion model \citep{rombach2022high} with Diffusion Transformer (DiT) \citep{peebles2023scalable} on ImageNet \citep{russakovsky2015imagenet}. We apply our proposed DualOptim to SFRon \citep{huang2025unified}, which exhibit the state-of-the-art unlearning performance in image generation. We also include SA \citep{heng2023selective} and SalUn \citep{fan2024salun} as baselines for comparison. Note that we do not apply DualOptim to SA and SalUn due to their joint updating scheme and unstable performance on ImageNet. Nonetheless, we still present the results of SalUn+DO on CIFAR-10 in Appendix \ref{sec:app_image_gen}. The implementation details can be found in Appendix \ref{sec:imple}. As for the evaluation metrics, we adopt the accuracy of the unlearned model’s generated images on forgetting classes (FA) by a pre-trained classifier to indicate the forgetting efficacy and Fréchet Inception Distance (FID) \citep{heusel2017gans} to assess the generation capability on the retained classes.

\begin{wraptable}{r}{0.6\textwidth}
\vspace{-1em}
    \centering
    \caption{Overall class-wise unlearning performance on CIFAR-10 with DDPM and ImageNet with DiT. The \textbf{mean value} and \textbf{standard deviation} of FA (in \%) and FID among the classes evaluated in Table \ref{tab:res_gen} are reported.} \label{tab:res_gen_avg}
    \vspace{0.2em}
    \tabcolsep=0.2em
    \small
    \begin{tabular}{l|c c | c c}
    \toprule[1.5pt]
    \multirow{2}{*}{\textbf{Method}} & \multicolumn{2}{c|}{\textbf{CIFAR-10}} & \multicolumn{2}{c}{\textbf{ImageNet}}\\
    ~ & FA $\downarrow$ & \multicolumn{1}{c|}{FID $\downarrow$}& FA $\downarrow$ & \multicolumn{1}{c}{FID $\downarrow$} \\
    \midrule[1pt]
    SA & 4.56$_{\pm 5.86}$ & 23.25$_{\pm 3.03}$ & 5.96$_{\pm 11.92}$ & 281.75$_{\pm 122.11}$\\
    SalUn & 0.60$_{\pm 0.49}$ & 20.57$_{\pm 0.37}$  & 54.14$_{\pm 33.32}$ & 97.33$_{\pm 155.40}$\\
    \midrule[0.5pt]
     SFRon & 1.52$_{\pm 2.94}$ & 19.71$_{\pm 0.91}$ & \textbf{0.00$_{\pm 0.00}$} & 17.44$_{\pm 3.22}$\\
     \rowcolor{gray!35}
     ~+\textbf{DO} &  \textbf{0.24}$_{\pm 0.39}$ & \textbf{18.77$_{\pm 0.85}$} & \textbf{0.00$_{\pm 0.00}$} & \textbf{15.25$_{\pm 1.11}$} \\
    \bottomrule[1.5pt]
    \end{tabular}
    \vspace{-1em}
\end{wraptable}
As shown in Table \ref{tab:res_gen} and \ref{tab:res_gen_avg}, our method generally enhances the fidelity of generated images for retained classes while ensuring high forgetting efficacy and low variance. For example, it significantly reduces the FID of SFRon for the white wolf category from 23.28 to 14.63, with FA remaining at 0.00\%. Importantly, these results demonstrate that DualOptim is effective across various datasets and model architectures, making it suitable for practical applications. Together with the findings in Section \ref{sec:image_classify}, these results highlight the effectiveness and generalizability of DualOptim in diverse computer vision tasks. Generated images from the unlearned model utilizing DualOptim can be found in Appendix \ref{sec:app_gen}. The visualization again indicates that DualOptim achieves effective unlearning while maintaining the generation capability for the retained classes.

\subsection{Random Subset Unlearning in Large Language Models}
\begin{table}[ht]
\setlength\tabcolsep{5pt}
\vspace{-0.5em}
    \centering
    \caption{Performance comparison of different MU methods on TOFU-finetuned \textbf{Phi-1.5} and \textbf{LLaMA 2}. The results include Model Capability (MC), Forget Efficacy (FE), and the average metric (Avg.) for forget 1\%, 5\% data, and 10\% data.} 
    \label{tab:nlp}
    \tabcolsep=0.25em
    \small
    \begin{tabular}{p{1.4cm}<{\centering}|p{1cm}<{\centering} p{1cm}<{\centering} p{1cm}<{\centering} | p{1cm}<{\centering} p{1cm}<{\centering} p{1cm}<{\centering} | p{1cm}<{\centering} p{1cm}<{\centering} p{1cm}<{\centering} }
        \toprule[1.2pt]
        \multirow{3}{*}{\textbf{Method}} & \multicolumn{9}{c}{\textbf{Phi-1.5}}\\
        ~ & \multicolumn{3}{c|}{forget 1\% data} & \multicolumn{3}{c|}{forget 5\% data} & \multicolumn{3}{c}{forget 10\% data} \\
        ~ & MC $\uparrow$ & FE $\uparrow$ & Avg. $\uparrow$& MC $\uparrow$ & FE $\uparrow$ & Avg. $\uparrow$ & MC$\uparrow$ & FE $\uparrow$ & Avg. $\uparrow$ \\
        \midrule[1pt]
        GA+GD & 0.4934 & 0.4493 & 0.4714 & 0.4360 & 0.5084 & 0.4722 & 0.4471 & 0.5246 & 0.4859 \\
        NPO+GD &  0.2569 & 0.5682  &  0.4125 & 0.4940  &       0.4469   &   0.4705 &   0.4808     &     0.4382 &   0.4595 \\
         ME+GD & \textbf{0.4944} & 0.3938 & 0.4441 & 0.4559 & 0.4480 & 0.4520 & 0.4594 & 0.4564 & 0.4579 \\
        \rowcolor{gray!35}
        \textbf{~+DO}  & 0.4866 & \textbf{0.6913} & \textbf{0.5889} & \textbf{0.4676} & \textbf{0.8200} & \textbf{0.6438} & \textbf{0.5009} & \textbf{0.7732} & \textbf{0.6370}  \\
        \midrule[0.5pt]
        DPO+GD & 0.2410 & 0.6831 & 0.4621 & 0.4105 & 0.6334 & 0.5219 & 0.3517 & 0.6302 & 0.4910 \\
        IDK+AP & \textbf{0.4403} & 0.5723 & 0.5063 & \textbf{0.4800} & 0.5112 & 0.4956 & \textbf{0.4614} & 0.6003 & 0.5308  \\
        \rowcolor{gray!35}
        \textbf{~+DO} & 0.4221 & \textbf{0.7037} & \textbf{0.5629} & 0.4633 & \textbf{0.6974} & \textbf{0.5804} & 0.4422 & \textbf{0.7193} & \textbf{0.5807} \\
        \bottomrule[1.2pt]
        \toprule[1.2pt]
        \multirow{3}{*}{\textbf{Method}} & \multicolumn{9}{c}{\textbf{LLaMA 2}} \\
        & \multicolumn{3}{c|}{forget 1\% data} & \multicolumn{3}{c|}{forget 5\% data} & \multicolumn{3}{c}{forget 10\% data} \\
        & MC $\uparrow$ & FE $\uparrow$ & Avg. $\uparrow$& MC $\uparrow$ & FE $\uparrow$ & Avg. $\uparrow$ & MC $\uparrow$ & FE $\uparrow$ & Avg. $\uparrow$ \\
        \midrule[1pt]
        GA+GD &0.6696 & 0.5908 & 0.6302 & 0.0000 & 0.8772 & 0.4386 & 0.5592 & 0.9346 & 0.7469 \\
        NPO+GD &  0.6414 & 0.6109 & 0.6262 & 0.5465 & 0.6921 & 0.6193 & 0.5648 & 0.7668 & 0.6658 \\
        ME+GD &  0.7271 & 0.9204 & 0.8237 & \textbf{0.7472} & 0.9313 & 0.8392 & \textbf{0.7357} & 0.9489 & 0.8423 \\
        \rowcolor{gray!35}
        \textbf{~+DO} &   
         \textbf{0.7425}   &   \textbf{0.9612}   & \textbf{0.8519} & 0.7316    & \textbf{0.9602}   & \textbf{0.8459}& 
         0.7315  & \textbf{0.9625}   &    \textbf{0.8470} \\
         \midrule[0.5pt]
         DPO+GD & 0.7564 & 0.5335 & 0.6450 & 0.0000 & 0.8243 & 0.4122 & 0.0000 & 0.8041 & 0.4021 \\
         IDK+AP &\textbf{0.7580} & 0.7625 & 0.7603 & \textbf{0.7529} & 0.7479 & 0.7504 & \textbf{0.7471} & 0.7433 & 0.7452 \\
         \rowcolor{gray!35}
        \textbf{~+DO} &0.7412&  \textbf{0.8075} & \textbf{ 0.7743} &  0.7354 & \textbf{0.7958} & \textbf{0.7656} 
        &     0.7362  &  \textbf{0.7855} & \textbf{0.7609} \\
        \bottomrule[1.2pt]
    \end{tabular}
    \vspace{-1em}
\end{table}
Following the success of MU in vision tasks, there is a rising interest of applying it to remove private or harmful information from large language models (LLMs). We conduct the experiments on Phi-1.5 \citep{li2023textbooks} and LLaMA 2 \citep{touvron2023llama}, both are fine-tuned on TOFU dataset \citep{mainitofu} with 1.3B and 7B parameters, respectively. We include three \textbf{untargeted unlearning} methods (GA+GD, NPO+GD and ME+GD) and two \textbf{targeted unlearning} methods (DPO+GD and IDK+AP)~\citep{yuan2024closer} as the baselines. We employ DualOptim (DO) in ME+GD and IDK+AP, which perform the best in \citep{yuan2024closer}, to validate the effectiveness of our method. Consistent with \citep{mainitofu,yuan2024closer}, we consider three levels of unlearning tasks: to forget 1\%, 5\%, and 10\% of the constructed data. We follow \citep{yuan2024closer} and adopt the improved Model Capability (MC) \footnote{ Model Capability (MC) is referred to Model Utility (MU) in \citep{yuan2024closer}, which conflicts with the abbreviation of Machine Unlearning (MU) in this paper.} and Forger Efficacy (FE) as the evaluation metrics.

The results in Table \ref{tab:nlp} indicate that our method significantly enhances both FE and MC in general. Although a slight reduction in MC is observed for some instances, the more effective forgetting achieved ultimately leads to an increase in the average metric for all cases. Note that the standard deviations are omitted since they are smaller than 5\% of the corresponding mean values in all cases.
In addition, as shown in Figure \ref{fig:nlp_curve} of Appendix \ref{sec:app_nlp}, the unlearning process utilizing DualOptim demonstrates greater effectiveness and stability compared wtih other baselines. These findings underscore the effectiveness of our method in terms of both performance and stability for LLMs.

It is important to note that the forgetting and retaining objectives are jointly optimized in the baselines listed in Table \ref{tab:nlp}, whereas our method alternates between optimizing these two objectives. To ensure a fair comparison, we also evaluate unlearning performance by alternately optimizing the forgetting and retaining objectives using a shared optimizer. The results in Table \ref{tab:nlp_app1} of Appendix \ref{sec:app_nlp} demonstrate that DualOptim surpasses both the joint and alternate update schemes.

\begin{table}[H]
\setlength\tabcolsep{5pt}
\vspace{-0.5em}
    \centering
    \caption{Running time and memory usage of Adam and DualOptim on Llama 2 with full-parameter tuning and LoRA. Note that for full-parameter tuning, we used two H20 GPUs and DeepSpeed zero stage 2 is adopted; for LoRA, we used a single H20 GPU.} \label{tab:time_memory}
    \vspace{-0.5em}
    \small
    \subtable[\footnotesize \textbf{Running Time Usage (min)}]{
    \small
    \resizebox*{0.48\textwidth}{!}{\begin{tabular}{c c c c c}
        \toprule[1.5pt]
         Tuning method& MU method & Adam & DualOptim & Ratio\\
         \midrule[1pt]
         \multirow{2}{*}{Full-parameter} & ME+GD& 19.40 & 28.50 & 1.47\\
         ~ & IDK+AP & 22.77 & 33.92 & 1.49 \\
         \midrule[0.5pt]
         \multirow{2}{*}{LoRA} & ME+GD & 17.04 & 17.43 & 1.02\\
         ~& IDK+AP & 26.76 & 62.32 & 1.21\\
         \bottomrule[1.5pt]
    \end{tabular}
    }}
    \subtable[\footnotesize \textbf{Memory Usage (GB/GPU)}]{
    \small
    \resizebox*{0.48\textwidth}{!}{\begin{tabular}{c c c c c}
        \toprule[1.5pt]
         Tuning method& MU method & Adam & DualOptim & Ratio\\
         \midrule[1pt]
         \multirow{2}{*}{Full-parameter}& IDK+AP & 59.83 & 86.40 & 1.44 \\
         ~& ME+GD & 59.84 & 86.52 & 1.45 \\
         \midrule[0.5pt]
         \multirow{2}{*}{LoRA}& IDK+AP & 30.63 & 30.62 & 1.00 \\
         ~& ME+GD & 31.44 & 30.62 & 0.97\\
         \bottomrule[1.5pt]
    \end{tabular}
    }}
    \vspace{-2em}
\end{table}

Although DualOptim exhibits effectiveness in MU for LLMs, as shown in Table \ref{tab:time_memory}, the decoupled momentum introduces approximately $1.5\times$ computational and memory overhead because of large amount of parameters in LLMs.
To investigate our method's effectiveness under limited computation resources, we apply DualOptim to the parameter-efficient fine-tuning techniques, such as LoRA \citep{hulora}. As indicated in Table \ref{tab:nlp_app} of Appendix \ref{sec:app_nlp}, DualOptim achieves a minimal compromise in unlearning performance by using LoRA, while significantly reducing memory consumption on optimizer states.

\subsection{Ablation Studies} \label{sec:ab}
In this subsection, we compare different combinations of optimizers to further analyze the proposed method. More ablation studies can be found in Appendix \ref{sec:app_ab}.

We compare various combinations of the Adam and SGD optimizers for the forget loss and the retain loss with their standalone counterparts. 
Besides Adam, we also investigate the combinations of SGD and other optimizers, e.g., Lion \citep{chen2023symbolic} and Muon \citep{jordan2024muon}. 
The results in Table \ref{tab:ab_combination} indicate that the configuration of Adam (F) + SGD (R) offers the optimal balance between performance and stability. This finding underscores the importance of decoupled momentum and adaptive learning rate for the forgetting objective.
Moreover, the results presented in Table \ref{tab:ab_combination_adam} and Table \ref{tab:ab_combination_scrub} in Appendix \ref{sec:app_ab} indicate that the best optimizer for retaining depends on the choice of optimizer during pretraining and the specific MU algorithms employed. It should be pointed out that a shared SGD demonstrates a competitive average performance but suffers from high standard deviation indicating instability. Conversely, Lion exhibits low variability but does not achieve optimal performance.
These insights further emphasize the efficacy of our method in enhancing performance and stability in MU tasks. 
\begin{table}[H]
\setlength\tabcolsep{5pt}
    \centering
    \caption{Ablation study on different optimizers when we use DualOptim in \textbf{SFRon}. Results are based on $10\%$ random subset unlearning task on CIFAR-10 using ResNet-18 pre-trained by SGD. (F) and (R) denotes that the optimizer is used to minimize the forget and retain losses, respectively.} \label{tab:ab_combination}
    \vspace{-0.5em}
    \small
    \resizebox*{\textwidth}{!}{\begin{tabular}{l|c c c c |c c}
        \toprule[1.5pt]
         \textbf{Optimizer}& FA & RA & TA & MIA & Gap $\downarrow$ & Std $\downarrow$\\
         \midrule[1pt]
        Single SGD & 94.67$_{\pm 3.03}$ (\xy{0.06}) & 99.83$_{\pm 0.13}$ (\xy{0.17}) & 93.98$_{\pm 0.56}$ (\xy{0.27}) & 77.80$_{\pm 5.61}$ (\xy{1.54}) & \xy{0.51} & 2.33\\
        Single Adam & 94.54$_{\pm 2.41}$ (\xy{0.07}) & 99.96$_{\pm 0.02}$ (\xy{0.04}) & 94.15$_{\pm 0.30}$ (\xy{0.10}) & 81.46$_{\pm 2.42}$ (\xy{5.20}) & \xy{1.35} & 1.29\\
        \midrule[0.5pt]
        SGD (F) + SGD (R) & 94.07$_{\pm 3.48}$ (\xy{0.54}) & 99.90$_{\pm 0.06}$ (\xy{0.10}) & 93.93$_{\pm 0.63}$ (\xy{0.32}) & 78.48$_{\pm 4.03}$ (\xy{2.22}) & \xy{0.80} & 2.05 \\
        SGD (F) + Adam (R) & 94.58$_{\pm 3.49}$ (\xy{0.03}) & 99.38$_{\pm 0.78}$ (\xy{0.62}) & 92.84$_{\pm 1.62}$ (\xy{0.14}) & 81.13$_{\pm4.58}$ (\xy{4.87}) & \xy{1.73} & 2.62 \\
        Adam (F) + Adam (R) & 94.29$_{\pm 1.23}$ (\xy{0.32}) & 99.94$_{\pm 0.01}$ (\xy{0.06}) & 94.02$_{\pm 0.11}$ (\xy{0.23}) & 77.86$_{\pm 1.39}$ (\xy{1.60}) & \xy{0.55} & 
        0.63 \\
        Adam (F) + SGD (R) & 94.69$_{\pm 1.13}$ (\xy{0.02}) & 99.92$_{\pm 0.01}$ (\xy{0.08}) & 94.11$_{\pm 0.11}$ (\xy{0.14}) & 77.77$_{\pm 1.39}$ (\xy{1.51}) & \textbf{\xy{0.44}} & 0.66 \\
        \midrule[0.5pt]
        SGD (F) + Lion (R) & 97.88$_{\pm 1.49}$ (\xy{3.27}) & 99.81$_{\pm 0.28}$ (\xy{0.19}) & 93.86$_{\pm 0.94}$ (\xy{0.39}) & 87.62$_{\pm 2.73}$ (\xy{11.36}) & \xy{3.80} & 1.36\\
          Lion (F) + Lion (R) & 94.54$_{\pm 1.59}$ (\xy{0.07}) & 99.93$_{\pm 0.02}$ (\xy{0.07}) & 93.91$_{\pm 0.29}$ (\xy{0.34}) & 83.08$_{\pm 1.35}$ (\xy{6.82}) & \xy{1.82} & 0.81\\
        Lion (F) + SGD (R) & 94.66$_{\pm 1.48}$ (\xy{0.05}) & 99.91$_{\pm 0.03}$ (\xy{0.09}) & 93.95$_{\pm 0.27}$ (\xy{0.30}) & 79.55$_{\pm 1.41}$ (\xy{3.29}) & \xy{0.93} & 0.80 \\
        \midrule[0.5pt]
        SGD (F) + Muon (R) & 95.74$_{\pm 0.40}$ (\xy{1.13}) & 99.61$_{\pm 0.03}$ (\xy{0.39}) & 94.04$_{\pm 0.09}$ (\xy{0.21}) & 84.03$_{\pm 0.77}$ (\xy{7.77}) & \xy{2.37} & 0.32\\
        Muon (F) + Muon (R) & 95.12$_{\pm 0.54}$ (\xy{0.51}) & 99.64$_{\pm 0.01}$ (\xy{0.36}) & 93.84$_{\pm 0.10}$ (\xy{0.41}) & 82.77$_{\pm 0.60}$ (\xy{6.51}) & \xy{1.94} & \textbf{0.31}\\
        Muon (F) + SGD (R) & 94.57$_{\pm 1.12}$ (\xy{0.04}) & 99.93$_{\pm 0.01}$ (\xy{0.07}) & 94.13$_{\pm 0.12}$ (\xy{0.12}) & 72.46$_{\pm 1.01}$ (\xy{3.80}) & \xy{1.01} & 0.57 \\
         \bottomrule[1.5pt]
    \end{tabular}}
    \vspace{-1em}
\end{table}
\section{Conclusion}
This study improves efficacy and stability in approximate machine unlearning (MU) methods with Dual Optimizers (DualOptim). By integrating adaptive learning rates and decoupled momentum, DualOptim enhances unlearning performance across diverse applications in computer vision and natural language processing. Its plug-and-play design ensures easy integration into existing MU frameworks. DualOptim marks a significant advancement in the MU field, offering an effective solution to meet the demand for trustworthy machine learning systems. 

\vspace{-0.5em}
\section*{Broader Impacts and Limitations}
Our method contributes to the trustworthiness of deep learning and can be broadly applicable across diverse tasks. Although DualOptim can be integrated with parameter-efficient fine-tuning methods, it still introduces double memory consumption for optimizer states. We leave the development of a more efficient approach as future work.

\vspace{-0.5em}
\begin{ack}
This work is supported by National Natural Science Foundation of China (NSFC Project No. 62306250) and City University of Hong Kong (CityU Project No. 9220132).
\end{ack}

\bibliography{main}
\bibliographystyle{unsrt}

\newpage
\appendix
\section{Proofs}

\subsection{Proof of Lemma \ref{lemma}} \label{proof:lemma}
\begin{proof}
Let $\theta_1, \theta_2$ be independent copies of $\theta$ with probability density $p(\theta)$. Using the variance identity $\mathrm{Var}(X) = \frac{1}{2}\mathbb{E}[\|X_1 - X_2\|^2]$ for independent copies $X_1,X_2$, we have:
\begin{equation}
    \mathrm{Var}(\nabla_\theta\mathcal{L}(\theta)) = \frac{1}{2}\iint p(\theta_1)p(\theta_2)\left\|\nabla\mathcal{L}(\theta_1) - \nabla\mathcal{L}(\theta_2)\right\|^2 d\theta_1 d\theta_2.
\end{equation}

By $L$-smoothness, $\|\nabla\mathcal{L}(\theta_1) - \nabla\mathcal{L}(\theta_2)\| \leq L\|\theta_1 - \theta_2\|$, we have:
\begin{equation}
\mathrm{Var}(\nabla_\theta\mathcal{L}(\theta)) \leq \frac{L^2}{2}\iint p(\theta_1)p(\theta_2)\|\theta_1 - \theta_2\|^2 d\theta_1 d\theta_2.
\end{equation}

Since $\mathrm{Var}(\theta) \leq \sigma_\theta^2$, we have $\frac{1}{2}\iint p(\theta_1)p(\theta_2)\|\theta_1 - \theta_2\|^2 d\theta_1 d\theta_2 \leq \sigma_{\theta}^2$. Therefore, we can conclude $\mathrm{Var}(\nabla_\theta \gL(\theta)) \leq L^2 \sigma_\theta^2$

\textbf{Discussion.} Lemma~\ref{lemma} is an important tool for us to analyze the variance of the full-batch gradient, because the full-batch gradient is calculated based on the model parameters obtained in the last iteration, which is also a random variable. Specifically, we can derive the following inequality from the update rule in (\ref{eq:update}) for further analysis. We add no superscripts, indicating it is applicable for both update schemes.
\begin{equation}
    \begin{aligned}
        \Var(\widehat{\vg}_{f, i}) &\leq \sigma^2 + \Var(\nabla_\theta \gL_f(\theta_{r, i - 1})) \leq \sigma^2 + L^2 \Var(\theta_{r, i - 1}) \\
        \Var(\widehat{\vg}_{r, i}) &\leq \sigma^2 + \Var(\nabla_\theta \gL_r(\theta_{f, i})) \leq \sigma^2 + L^2 \Var(\theta_{f, i})
    \end{aligned} \label{eq:grad_var}
\end{equation}
\end{proof}

\subsection{Proof of Theorem \ref{theory}} \label{proof:theory}
\begin{proof}
Without the loss of generality, we can assume the pretrained parameter $\theta_o = 0$ and the learning rate $\eta = 1$, because shift and multiplication do not change the inequalities in the conclusion to prove.

Unfold the update rules in (\ref{eq:update}) and we obtain
\begin{equation}
\begin{aligned}
\mathrm{(Shared~Momentum)}&\begin{cases}
\vm_{f,t}^S = \alpha \vm_{r,t-1}^S + \widehat{\vg}^S_{f,t} = \sum_{i=0}^{t} \alpha^{2(t-i)}\widehat{\vg}^S_{f,i} + \sum_{i = 0}^{t - 1} \alpha^{2(t-i)-1}\widehat{\vg}^S_{r,i} \\
\vm_{r,t}^S = \alpha \vm_{f,t}^S + \widehat{\vg}^S_{r,t}= \sum_{i=0}^{t} \alpha^{2(t-i)+1}\widehat{\vg}^S_{f,i} + \sum_{i=0}^{t} \alpha^{2(t-i)}\widehat{\vg}^S_{r,i}
\end{cases} \\
\mathrm{(Decoupled~Momentum)}&\begin{cases}
    \vm_{f,t}^D = \alpha \vm_{f,t-1}^D + \widehat{\vg}^D_{f,t} = \sum_{i=0}^t \alpha^{t-i}\widehat{\vg}^D_{f,i}\\
\vm_{r,t}^D = \alpha \vm_{r,t-1}^D + \widehat{\vg}^D_{r,t}= \sum_{i=0}^t \alpha^{t-i}\widehat{\vg}^D_{r,i}
\end{cases}
\end{aligned} \label{eq:unfold_update}
\end{equation}

For notation simplicity, we let $A_k = \sum_{i = 0}^k \alpha^i$. It is clear that $\forall k_1 \leq k_2$, $A_{k_1} \leq A_{k_2}$.

Based on the update scheme (\ref{eq:update}) and $\eta = 1$, we have:
\begin{equation}
    \begin{aligned}
        \mathrm{(Shared~Momentum)}&\begin{cases}
            -\theta^S_{f, t} = \sum_{i = 0}^t \vm^S_{f, i} + \sum_{i = 0}^{t - 1} \vm^S_{r, i} \\
            -\theta^S_{r, t} = \sum_{i = 0}^t \vm^S_{f, i} + \sum_{i = 0}^t \vm^S_{r, i}
        \end{cases} \\
        \mathrm{(Decoupled~Momentum)}&\begin{cases}
            -\theta^D_{f, t} = \sum_{i = 0}^t \vm^D_{f, i} + \sum_{i = 0}^{t - 1} \vm^D_{r, i} \\
            -\theta^D_{r, t} = \sum_{i = 0}^t \vm^D_{f, i} + \sum_{i = 0}^t \vm^D_{r, i} \\
        \end{cases}
    \end{aligned} \label{eq:theta_m}
\end{equation}

Combining (\ref{eq:unfold_update}) and (\ref{eq:theta_m}), we have the following equations. For notation simplicity, we let $A_k = \sum_{i = 0}^k \alpha^i$. It is clear that $\forall k_1 \leq k_2$, $A_{k_1} \leq A_{k_2}$.
\begin{equation}
\begin{aligned}
\mathrm{(Shared~Momentum)}&\begin{cases}
    -\theta_{f,t}^S = \sum_{i=0}^{t} A_{2(t-i)}\widehat{\vg}^S_{f,i} + \sum_{i=0}^{t-1} A_{2(t-i)-1}\widehat{\vg}^S_{r,i} \\
    -\theta_{r,t}^S = \sum_{i=0}^{t} A_{2(t-i)+1} \widehat{\vg}^S_{f,i} + \sum_{i=0}^t A_{2(t-i)}\widehat{\vg}^S_{r,i} \\
\end{cases}\\
\mathrm{(Decoupled~Momentum)}&\begin{cases}
    -\theta_{f,t}^D = \sum_{i=0}^{t} A_{t-i}\widehat{\vg}^D_{f,i} + \sum_{i = 0}^{t - 1} A_{t-i-1}\widehat{\vg}^D_{r,i} \\
    -\theta_{r,t}^D = \sum_{i=0}^{t}  A_{t-i}\widehat{\vg}^D_{f,i} + \sum_{i = 0}^t A_{t-i} \widehat{\vg}^D_{r,i}\\
\end{cases}
\end{aligned} \label{eq:theta_g}
\end{equation}

\textbf{We prove the theorem by mathematical induction.}

\textbf{We start with the case of $t=0$}. Based on (\ref{eq:theta_g}), we have $\theta_{f,0}^S = -\widehat{\vg}^S_{f, 0}$, $\theta_{f, 0}^D = -\widehat{\vg}^D_{f, 0}$. Both are stochastic gradients calculated on the initial parameter $\theta_o$, so based on Assumption \ref{assum1}, we have:
\begin{equation}
    \mathrm{Var}(\theta_{f,0}^S) \leq \sigma^2 \defeq \overline{\mathrm{Var}}(\theta_{f,0}^S), \quad
    \mathrm{Var}(\theta_{f,0}^D) \leq \sigma^2 \defeq \overline{\mathrm{Var}}(\theta_{f,0}^D).
\end{equation}
In addition, we have $\theta_{r,0}^D = -\widehat{\vg}^D_{f,0}-\widehat{\vg}^D_{r,0}$ and $\theta_{r,0}^S = -(1+\alpha)\widehat{\vg}^S_{f,0}-\widehat{\vg}^S_{r,0}$ based on (\ref{eq:theta_g}).
Based on Lemma~\ref{lemma} and inequality (\ref{eq:grad_var}), we have:
\begin{equation}
    \begin{aligned}
        \Var(\widehat{\vg}^S_{r, 0}) \leq \sigma^2 + L^2 \Var(\theta^S_{f, 0}) \leq (L^2 + 1)\sigma^2, \\
        \Var(\widehat{\vg}^D_{r, 0}) \leq \sigma^2 + L^2 \Var(\theta^D_{f, 0}) \leq (L^2 + 1)\sigma^2.
    \end{aligned} \label{eq:var_g}
\end{equation}

We assume negative correlation between gradients from the forget loss and the retain loss in Assumption~\ref{assum2}. Therefore, we can derive the variance for $\theta^S_{r, 0}$ and $\theta^D_{r, 0}$ as follows:
\begin{equation}
\begin{aligned} \label{eq:theta_r_1}
    \mathrm{Var}(\theta_{r,0}^S) &\leq (1+\alpha)^2\mathrm{Var}(\widehat{\vg}^S_{f,0}) + \mathrm{Var}(\widehat{\vg}^S_{r,0}) \leq (1+\alpha)^2\sigma^2+(L^2 + 1)\sigma^2 \defeq \overline{\Var}(\theta^S_{r, 0}), \\
    \mathrm{Var}(\theta_{r,0}^D) &\leq \mathrm{Var}(\widehat{\vg}^D_{f,0}) + \mathrm{Var}(\widehat{\vg}^D_{r,0})\leq\sigma^2+(1+L^2)\sigma^2 \defeq \overline{\Var}(\theta^D_{r, 0}).
\end{aligned}
\end{equation}

Thus, we have the following conclusion:
\begin{equation}
\overline{\mathrm{Var}}(\theta_{f,0}^D) \leq \overline{\mathrm{Var}}(\theta_{f,0}^S), \quad \overline{\mathrm{Var}}(\theta_{r,0}^D) \leq \overline{\mathrm{Var}}(\theta_{r,0}^S).
\end{equation}

\textbf{That is to say, the conclusion of the theorem holds for $t = 0$. Now we assume that the conclusion of the theorem holds for $0, 1, ..., t-1$, and then consider the case of $t$.}

Based on Assumption~\ref{assum2}, we have the following inequality for both update schemes:
\begin{equation}
    \begin{aligned}
        \forall w_0, ..., w_t, &\Var\left(\sum_{i = 0}^t w_i \widehat{\vg}_{f, i}\right) = \sum_{i = 0}^t w_i^2 \Var(\widehat{\vg}_{f, i}) + 2\sum_{i \neq j} \rho(\widehat{\vg}_{f, i}, \widehat{\vg}_{f, j}) w_i w_j \sqrt{ \Var(\widehat{\vg}_{f, i}) \Var(\widehat{\vg}_{f, j})} \\
        &\leq \sum_{i = 0}^t w_i^2\Var(\widehat{\vg}_{f, i}) + \sum_{i \neq j} \tau \left( w_i^2 \Var(\widehat{\vg}_{f, i}) +  w_j^2 \Var(\widehat{\vg}_{f, j}) \right) = (1 + t \tau)\sum_{i = 0}^t w_i^2 \Var(\widehat{\vg}_{f, i})
    \end{aligned} \label{eq:multi_var}
\end{equation}

Similarly, we have $\forall w_0, ..., w_t, \Var\left(\sum_{i = 0}^t w_i \widehat{\vg}_{r, i}\right) \leq (1 + t \tau) \sum_{i = 0}^t w_i^2 \Var(\widehat{\vg}_{r, i})$.

Now we combine (\ref{eq:theta_g}), (\ref{eq:var_g}), (\ref{eq:multi_var}) and can derive the following inequalities:
\begin{equation}
\small
\begin{aligned}
    \mathrm{Var}(\theta_{f,t}^S) &\leq (1+t\tau) \sum_{i=0}^{t} A_{2(t-i)}^2 \left(\sigma^2 + L^2\mathrm{Var}(\theta_{r,i - 1}^S)\right)
    + (1+(t-1)\tau) \sum_{i=0}^{t-1} A_{2(t-i)-1}^2 \left(\sigma^2 + L^2\mathrm{Var}(\theta_{f,i}^S)\right) \\
    \mathrm{Var}(\theta_{f,t}^D) &\leq (1+t\tau) \sum_{i=0}^{t} A_{t-i}^2 \left(\sigma^2 + L^2\mathrm{Var}(\theta_{r,i - 1}^D)\right) + (1+(t-1)\tau) \sum_{i=0}^{t-1} A_{t-i-1}^2 \left(\sigma^2 + L^2\mathrm{Var}(\theta_{f,i}^D)\right) \\
    \mathrm{Var}(\theta_{r,t}^S) &\leq (1+t\tau) \sum_{i=0}^{t} A_{2(t-i)+1}^2 \left(\sigma^2 + L^2\mathrm{Var}(\theta_{r,i - 1}^S)\right)
    + (1+t\tau) \sum_{i=0}^{t} A_{2(t-i)}^2 \left(\sigma^2 + L^2\mathrm{Var}(\theta_{f,i}^S)\right) \\
    \mathrm{Var}(\theta_{r,t}^D) &\leq (1+t\tau) \sum_{i=0}^{t} A_{t-i}^2 \left(\sigma^2 + L^2\mathrm{Var}(\theta_{r,i - 1}^D)\right)
    + (1+t\tau) \sum_{i=0}^{t} A_{t-i}^2 \left(\sigma^2 + L^2\mathrm{Var}(\theta_{f,i}^D)\right)
\end{aligned} \label{eq:final}
\end{equation}

By definition, $\overline{\mathrm{Var}}(\theta_{f,t}^D)$, $\overline{\mathrm{Var}}(\theta_{f,t}^S)$, $\overline{\mathrm{Var}}(\theta_{r,t}^D)$, and $\overline{\mathrm{Var}}(\theta_{r,t}^S)$ are the maximum possible value of ${\mathrm{Var}}(\theta_{f,t}^D)$, ${\mathrm{Var}}(\theta_{f,t}^S)$, ${\mathrm{Var}}(\theta_{r,t}^D)$, and ${\mathrm{Var}}(\theta_{r,t}^S)$, respectively. 
Considering the inequalities in (\ref{eq:final}) are all achievable, we have the following:

\begin{equation}
\small
\begin{aligned}
    \overline{\mathrm{Var}}(\theta_{f,t}^S) &= (1+t\tau) \sum_{i=0}^{t} A_{2(t-i)}^2 \left(\sigma^2 + L^2\overline{\mathrm{Var}}(\theta_{r,i - 1}^S)\right)
    + (1+(t-1)\tau) \sum_{i=0}^{t-1} A_{2(t-i)-1}^2 \left(\sigma^2 + L^2\overline{\mathrm{Var}}(\theta_{f,i}^S)\right) \\
    \overline{\mathrm{Var}}(\theta_{f,t}^D) &= (1+t\tau) \sum_{i=0}^{t} A_{t-i}^2 \left(\sigma^2 + L^2\overline{\mathrm{Var}}(\theta_{r,i - 1}^D)\right) + (1+(t-1)\tau) \sum_{i=0}^{t-1} A_{t-i-1}^2 \left(\sigma^2 + L^2\overline{\mathrm{Var}}(\theta_{f,i}^D)\right) \\
    \overline{\mathrm{Var}}(\theta_{r,t}^S) &= (1+t\tau) \sum_{i=0}^{t} A_{2(t-i)+1}^2 \left(\sigma^2 + L^2\overline{\mathrm{Var}}(\theta_{r,i - 1}^S)\right)
    + (1+t\tau) \sum_{i=0}^{t} A_{2(t-i)}^2 \left(\sigma^2 + L^2\overline{\mathrm{Var}}(\theta_{f,i}^S)\right) \\
    \overline{\mathrm{Var}}(\theta_{r,t}^D) &= (1+t\tau) \sum_{i=0}^{t} A_{t-i}^2 \left(\sigma^2 + L^2\overline{\mathrm{Var}}(\theta_{r,i - 1}^D)\right)
    + (1+t\tau) \sum_{i=0}^{t} A_{t-i}^2 \left(\sigma^2 + L^2\overline{\mathrm{Var}}(\theta_{f,i}^D)\right)
\end{aligned} \label{eq:final_bound}
\end{equation}

\textbf{By induction, we have for $t' \in \{0, 1, ..., t - 1\}$, $\overline{\Var}(\theta^D_{f, t'}) \leq \overline{\Var}(\theta^S_{f, t'})$, $\overline{\Var}(\theta^D_{r, t'}) \leq \overline{\Var}(\theta^S_{r, t'})$. In addition, $\forall k_1 \leq k_2$, $A_{k_1} \leq A_{k_2}$. Therefore, after comparing the factors and terms on the right hand of (\ref{eq:final_bound}), it is obvious to obtain the following conclusion.}

\begin{equation}
\begin{aligned}
    \overline{\mathrm{Var}}(\theta_{f,t}^D)&\leq\overline{\mathrm{Var}}(\theta_{f,t}^S), \quad
    \overline{\mathrm{Var}}(\theta_{r,t}^D)&\leq\overline{\mathrm{Var}}(\theta_{r,t}^S).\\
\end{aligned}
\end{equation}

\textbf{By induction, the conclusion of this theorem holds for any $t \geq 0$.}
\end{proof}

\subsection{Variance Bound of Performance Metric Function} \label{sec:metric_proof}
We let $M(\theta)$ to represent the performance of model with parameter $\theta$ where $M$ can be FA/RA/TA/MIA. Before we derive the variance bound of performance metric function $M(\theta)$, we make the following assumption:
\begin{assumption} \label{assum:m}
    The performance metric function $M$ is Lipschitz continuous:
    \begin{equation}
        \forall\theta_1,\theta_2, \|M(\theta_1)-M(\theta_2)\|\leq L_M\|\theta_1-\theta_2\|,
    \end{equation}
where $L_M$ is a non-negative constant.
\end{assumption}
Following a similar logic to Lemma \ref{lemma}, we have the following corollary:
\begin{corollary}
    If Assumption \ref{assum:m} holds, and the parameter variance satisfies $\Var(\theta)\leq\overline{\Var}(\theta)$, then we have $\Var(M(\theta))\leq L_M^2\overline{\Var}(\theta)$.
\end{corollary}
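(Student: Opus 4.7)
The plan is to mirror the proof of Lemma~\ref{lemma} almost verbatim, since the corollary replaces the gradient $\nabla_\theta \mathcal{L}$ (which is $L$-Lipschitz by smoothness of $\mathcal{L}$) with an arbitrary $L_M$-Lipschitz map $M$. The key identity that powered the earlier proof was the representation of variance via independent copies, namely $\mathrm{Var}(X) = \tfrac{1}{2}\mathbb{E}\!\left[\|X_1 - X_2\|^2\right]$ for i.i.d.\ $X_1, X_2 \sim X$. This identity is what converts a Lipschitz bound on a function into a variance bound on its image, and it applies equally well to $M$.

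First, I would let $\theta_1, \theta_2$ be independent copies of $\theta$ with density $p(\theta)$, and write
\begin{equation}
    \mathrm{Var}(M(\theta)) = \frac{1}{2} \iint p(\theta_1)\, p(\theta_2)\, \|M(\theta_1) - M(\theta_2)\|^2 \, d\theta_1 \, d\theta_2.
\end{equation}
Next, I would invoke Assumption~\ref{assum:m} to replace the integrand pointwise with $L_M^2 \|\theta_1 - \theta_2\|^2$, obtaining
\begin{equation}
    \mathrm{Var}(M(\theta)) \leq \frac{L_M^2}{2} \iint p(\theta_1)\, p(\theta_2)\, \|\theta_1 - \theta_2\|^2 \, d\theta_1 \, d\theta_2 = L_M^2 \, \mathrm{Var}(\theta).
\end{equation}
Finally, the assumed bound $\mathrm{Var}(\theta) \leq \overline{\mathrm{Var}}(\theta)$ yields the desired inequality $\mathrm{Var}(M(\theta)) \leq L_M^2 \, \overline{\mathrm{Var}}(\theta)$.

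There is essentially no obstacle here, as this is a direct specialization of the same identity-plus-Lipschitz argument used for Lemma~\ref{lemma}, with $\nabla_\theta \mathcal{L}$ replaced by $M$ and $L$ by $L_M$. The only subtle point worth flagging is that Assumption~\ref{assum:m} is stated for a vector-valued (or scalar) output with the appropriate norm, so one should make sure the norms on both sides of the Lipschitz inequality match those used when defining $\mathrm{Var}(M(\theta))$; since the paper's metrics FA/RA/TA/MIA are scalars, this is automatic. The proof is therefore a short, direct transcription of Appendix~\ref{proof:lemma} with the obvious substitutions.
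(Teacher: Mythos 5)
Your proof is correct and follows essentially the same route as the paper's: the independent-copies variance identity, the pointwise Lipschitz bound from Assumption~\ref{assum:m}, and then the assumed bound $\mathrm{Var}(\theta)\leq\overline{\mathrm{Var}}(\theta)$. Nothing to add.
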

\begin{proof}
Similar to the proof of Lemma \ref{lemma}, we have:
\begin{equation}
    \mathrm{Var}(M(\theta)) = \frac{1}{2}\iint p(\theta_1)p(\theta_2)\left\|M(\theta_1) - M(\theta_2)\right\|^2 d\theta_1 d\theta_2,
\end{equation}
where $\theta_1, \theta_2$ are independent copies of $\theta$ with probability density $p(\theta)$.

By $\|M(\theta_1)-M(\theta_2)\|\leq L_M\|\theta_1-\theta_2\|$, we have:
\begin{equation}
    \mathrm{Var}(M(\theta)) \leq \frac{L_M^2}{2}\iint p(\theta_1)p(\theta_2)\|\theta_1 - \theta_2\|^2 d\theta_1 d\theta_2.
\end{equation}

Since $\Var(\theta)\leq\overline{\Var}(\theta)$, we have $\frac{1}{2}\iint p(\theta_1)p(\theta_2)\|\theta_1 - \theta_2\|^2 d\theta_1 d\theta_2 \leq \overline{\Var}(\theta)$. Therefore, we can conclude $\mathrm{Var}(M(\theta)) \leq L_M^2 \overline{\Var}(\theta)$.
\end{proof}

\textbf{Discussion.} 
The final unlearned parameters are usually the parameter updated by retaining loss at the last iteration $T$, i.e., $\theta^D_{r,T}$ using decoupled momentum and $\theta^S_{r,T}$ using shared momentum. 
According to Theorem \ref{theory}, we have $\overline{\Var}(\theta^D_{r,T}) \leq \overline{\Var}(\theta^S_{r,T})$. Let the upper bound of the performance variance $\overline{\Var}(M(\theta))=L_M^2 \overline{\Var}(\theta)$, we can derive $\overline{\Var}(M(\theta^D_{r,T})) \leq \overline{\Var}(M(\theta^S_{r,T}))$. Thus, \textbf{decoupled momentum can also induce less variability on performance metrics}.
\section{Implementation Details} \label{sec:imple}

\subsection{Implementation Details for Image Classification}
For \textbf{CIFAR-10}, \textbf{CIFAR-100}, and \textbf{SVHN} using \textbf{ResNet-18}, all baselines use the SGD optimizer with momentum of $0.9$, weight decay of $5\times 10^{-4}$, and batch size of $128$ if not specified. For \textbf{TinyImageNet}, \textbf{Swin-T}, the models are initialized from torchvision weight pre-trained on ImageNet. All baselines use the \textbf{Adam} optimizer with a weight decay of $5\times 10^{-4}$ and batch size of $128$ if not specified. Summaries of the hyperparameters for each method on each dataset are shown in Table \ref{tab:impl_class_cifar10} - \ref{tab:impl_class_svhn}. Note that the unspecified hyperparameters are the same as the default ones reported in their original papers.

\begin{table}[ht]
    \centering
    \vspace{-0.5em}
    \caption{Summary of hyperparameters for each method on unlearning 10\% random subset of CIFAR-10. $\eta$ is short for learning rate.}
      \resizebox*{\textwidth}{!}{
    \begin{tabular}{c|c}
     \toprule[1.5pt]
         \textbf{Method}& \textbf{Hyperparameters} \\
         \midrule[1pt]
         Pretrain & epoch $= 200$, cosine scheduler, $\eta= 0.1$ \\
         RT & epoch $= 200$, cosine scheduler, $\eta= 0.1$\\
        \midrule[1pt]
        FT & epoch $= 10$, constant scheduler, $\eta= 0.01$\\
        GA & epoch $= 10$, constant scheduler, $\eta= 5\times 10^{-4}$\\
        RL & epoch $= 10$, constant scheduler, $\eta_f= 0.01$, $\eta_r= 0.01$\\
        SCRUB & epoch $= 10$, constant scheduler, Adam, $\eta_f= 1\times 10^{-4}$, $\eta_r= 1\times 10^{-4}$\\
        SalUn & epoch $= 10$, constant scheduler, $\eta_f= 0.018$, $\eta_r= 0.01$\\
        SFRon & $T_{out}= 1500$, $T_{in}= 5$, cosine scheduler, $\eta_f= 0.01$, $\eta_r= 0.01$, $\alpha=31$\\
        \midrule[1pt]
        SCRUB+DO & epoch $= 10$, constant scheduler, Adam (F) + Adam (R), $\eta_f= 1.5\times 10^{-4}$, $\eta_r= 1\times 10^{-4}$\\
        SalUn+DO & epoch $= 10$, constant scheduler, Adam (F) + SGD (R), $\eta_f= 1.3\times 10^{-4}$, $\eta_r= 0.01$\\
        SFRon+DO & $T_{out}= 1500$, $T_{in}= 5$, cosine scheduler, Adam (F) + SGD (R), $\eta_f= 1\times 10^{-4}$, $\eta_r= 0.01$, $\alpha=1$\\
        \bottomrule[1.5pt]
    \end{tabular}
    }
    \vspace{-0.5em}
    \label{tab:impl_class_cifar10}
\end{table}

\begin{table}[ht]
\vspace{-0.5em}
    \centering
    \caption{Summary of hyperparameters for each method on unlearning 50\% random subset of CIFAR-10. $\eta$ is short for learning rate.}
     \resizebox*{\textwidth}{!}{
    \begin{tabular}{c|c}
     \toprule[1.5pt]
         \textbf{Method}& \textbf{Hyperparameters} \\
         \midrule[1pt]
         Pretrain & epoch $= 200$, cosine scheduler, $\eta= 0.1$ \\
         RT & epoch $= 200$, cosine scheduler, $\eta= 0.1$\\
        \midrule[1pt]
        FT & epoch $= 10$, constant scheduler, $\eta= 0.01$\\
        GA & epoch $= 10$, constant scheduler, $\eta= 8\times 10^{-5}$\\
        RL & epoch $= 10$, constant scheduler, $\eta_f= 0.01$, $\eta_r= 0.01$\\
        SCRUB & epoch $= 10$, constant scheduler, Adam, $\eta_f= 1.6\times 10^{-5}$, $\eta_r= 1\times 10^{-4}$\\
        SalUn & epoch $= 10$, constant scheduler, $\eta_f=2.5\times 10^{-4}$, $\eta_r= 0.01$\\
        SFRon & $T_{out}= 1500$, $T_{in}= 5$, cosine scheduler, $\eta_f= 0.01$, $\eta_r= 0.01$, $\alpha=82$\\
        \midrule[1pt]
        SCRUB+DO & epoch $= 10$, constant scheduler, Adam (F) + Adam (R), $\eta_f= 2.8\times 10^{-5}$, $\eta_r= 1\times 10^{-4}$\\
        SalUn+DO & epoch $= 10$, constant scheduler, Adam (F) + SGD (R), $\eta_f= 5.5\times 10^{-5}$, $\eta_r= 0.01$\\
        SFRon+DO & $T_{out}= 1500$, $T_{in}= 5$, cosine scheduler, Adam (F) + SGD (R), $\eta_f= 4.1\times 10^{-4}$, $\eta_r= 0.01$, $\alpha=1$\\
        \bottomrule[1.5pt]
    \end{tabular}
    }
    \vspace{-0.5em}
    \label{tab:impl_class_cifar10_50}
\end{table}

\begin{table}[ht]
\vspace{-0.5em}
    \centering
    \caption{Summary of hyperparameters for each method on unlearning 10\% random subset of CIFAR-100. $\eta$ is short for learning rate.}
    \resizebox*{\textwidth}{!}{
    \begin{tabular}{c|c}
     \toprule[1.5pt]
         \textbf{Method}& \textbf{Hyperparameters} \\
         \midrule[1pt]
          Pretrain & epoch $= 200$, cosine scheduler, $\eta= 0.1$ \\
         RT & epoch $= 200$, cosine scheduler, $\eta= 0.1$\\
        \midrule[1pt]
        FT & epoch $= 10$, constant scheduler, $\eta= 4.5\times 10^{-2}$\\
        GA & epoch $= 10$, constant scheduler, $\eta= 5.2\times 10^{-4}$\\
        RL & epoch $= 10$, constant scheduler, $\eta_f= 8\times 10^{-4}$, $\eta_r= 0.01$\\
        SCRUB & epoch $= 10$, constant scheduler, Adam, $\eta_f= 6\times 10^{-5}$, $\eta_r= 1\times 10^{-4}$\\
        SalUn & epoch $= 10$, constant scheduler, $\eta_f=1.3\times 10^{-3}$, $\eta_r= 0.01$\\
        SFRon & $T_{out}= 1500$, $T_{in}= 5$, cosine scheduler, $\eta_f= 0.01$, $\eta_r= 0.01$, $\alpha=44$\\
        \midrule[1pt]
        SCRUB+DO & epoch $= 10$, constant scheduler, Adam (F) + Adam (R), $\eta_f= 1\times 10^{-4}$, $\eta_r= 1\times 10^{-4}$\\
        SalUn+DO & epoch $= 10$, constant scheduler, Adam (F) + SGD (R), $\eta_f= 1.9\times 10^{-4}$, $\eta_r= 0.01$\\
        SFRon+DO & $T_{out}= 1500$, $T_{in}= 5$, cosine scheduler, Adam (F) + SGD (R), $\eta_f= 1\times 10^{-4}$, $\eta_r= 0.01$, $\alpha=1$\\
        \bottomrule[1.5pt]
    \end{tabular}
    }
    \vspace{-0.5em}
    \label{tab:impl_class_cifar100}
\end{table}

\begin{table}[H]
    \centering
   \vspace{-0.5em}
    \caption{Summary of hyperparameters for each method on unlearning 10\% random subset of SVHN. $\eta$ is short for learning rate.}
    \resizebox*{\textwidth}{!}{
    \begin{tabular}{c|c}
     \toprule[1.5pt]
         \textbf{Method}& \textbf{Hyperparameters} \\
         \midrule[1pt]
          Pretrain & epoch $= 200$, cosine scheduler, $\eta= 0.1$ \\
         RT & epoch $= 200$, cosine scheduler, $\eta= 0.1$\\
        \midrule[1pt]
        FT & epoch $= 10$, constant scheduler, $\eta= 2.2\times 10^{-2}$\\
        GA & epoch $= 10$, constant scheduler, $\eta= 5.2\times 10^{-4}$\\
        RL & epoch $= 10$, constant scheduler, $\eta_f= 8\times 10^{-3}$, $\eta_r= 0.01$\\
        SCRUB & epoch $= 10$, constant scheduler, Adam, $\eta_f= 8\times 10^{-6}$, $\eta_r= 1\times 10^{-4}$\\
        SalUn & epoch $= 10$, constant scheduler, $\eta_f=1.25\times 10^{-2}$, $\eta_r= 0.01$\\
        SFRon & $T_{out}= 1500$, $T_{in}= 5$, cosine scheduler, $\eta_f= 0.01$, $\eta_r= 0.01$, $\alpha=12.5$\\
        \midrule[1pt]
        SCRUB+DO & epoch $= 10$, constant scheduler, Adam (F) + Adam (R), $\eta_f= 2.2\times 10^{-5}$, $\eta_r= 1\times 10^{-4}$\\
        SalUn+DO & epoch $= 10$, constant scheduler, Adam (F) + SGD (R), $\eta_f= 2.5\times 10^{-6}$, $\eta_r= 0.01$\\
        SFRon+DO & $T_{out}= 1500$, $T_{in}= 5$, cosine scheduler, Adam (F) + SGD (R), $\eta_f= 1.6\times 10^{-4}$, $\eta_r= 0.01$, $\alpha=1$\\
        \bottomrule[1.5pt]
    \end{tabular}
    }
    \vspace{-0.5em}
    \label{tab:impl_class_svhn}
\end{table}

\begin{table}[ht]
    \centering
    \vspace{-1em}
    \caption{Summary of hyperparameters for each method on unlearning 10\% random subset of TinyImageNet. $\eta$ is short for learning rate.}
    \resizebox*{\textwidth}{!}{
    \begin{tabular}{c|c}
     \toprule[1.5pt]
         \textbf{Method}& \textbf{Hyperparameters} \\
         \midrule[1pt]
          Pretrain & epoch $= 10$, cosine scheduler, $\eta=1\times 10^{-4}$ \\
         RT & epoch $= 10$, cosine scheduler, $\eta= 1\times 10^{-4}$\\
        \midrule[1pt]
        FT & epoch $= 1$, constant scheduler, $\eta= 1\times 10^{-4}$\\
        GA & epoch $= 1$, constant scheduler, $\eta= 5\times 10^{-6}$\\
        RL & epoch $= 1$, constant scheduler, $\eta_f= 5\times 10^{-5}$, $\eta_r= 1\times 10^{-4}$\\
        SCRUB & epoch $= 1$, constant scheduler, Adam, $\eta_f= 2\times 10^{-5}$, $\eta_r= 2\times 10^{-5}$\\
        SalUn & epoch $= 1$, constant scheduler, $\eta_f=9\times 10^{-5}$, $\eta_r= 9\times 10^{-5}$\\
        SFRon & $T_{out}= 500$, $T_{in}= 1$, cosine scheduler, $\eta_f= 3\times 10^{-5}$, $\eta_r= 3\times 10^{-5}$, $\alpha=500$\\
        \midrule[1pt]
        SCRUB+DO & epoch $= 1$, constant scheduler, Adam (F) + Adam (R), $\eta_f= 2\times 10^{-5}$, $\eta_r= 2\times 10^{-5}$\\
        SalUn+DO & epoch $= 1$, constant scheduler, Adam (F) + Adam (R), $\eta_f= 9\times 10^{-5}$, $\eta_r= 9\times 10^{-5}$\\
        SFRon+DO & $T_{out}= 500$, $T_{in}= 2$, cosine scheduler, Adam (F) + Adam (R), $\eta_f= 1.9\times 10^{-4}$, $\eta_r= 1.9\times 10^{-4}$, $\alpha=1$\\
        \bottomrule[1.5pt]
    \end{tabular}
    }
    \vspace{-0.5em}
    \label{tab:impl_class_image}
\end{table}

\subsection{Implementation Details for Image Generation}
For\textbf{ CIFAR-10}, we use \textbf{DDPM} based on U-Net architecture with $1000$ timesteps for linear $\beta$ schedule. All methods use Adam optimizer and batch size of $128$. 
The hyperparameters of SalUn+DO are: $T_{out}=150$, $T_{in}=1$, $\eta_f= 4\times 10^{-5}$, $\eta_r= 1\times 10^{-4}$, threshold = top-$50\%$. 
The hyperparameters of SFRon+DO are: $T_{out}=150$, $T_{in}=1$, $\eta_f= 1\times 10^{-4}$, $\eta_r= 1\times 10^{-4}$, $\alpha=1$, $\lambda = 0.5$, $\gamma = 3$. The hyperparameters of other methods are the same as those reported in \citep{huang2025unified}.

For \textbf{ImageNet}, we use pre-trained \textbf{DiT-XL}/22 with $256 \times 256$ resolution. All methods use AdamW optimizer and batch size of $1$. 
The hyperparameters of SFRon+DO are: $T_{out}=500$, $T_{in}=1$, $\eta_f= 5\times 10^{-5}$, $\eta_r= 5\times 10^{-5}$, $\alpha=1$, $\gamma = 3$. Since the batch size is 1, we ignore $\lambda$ in adaptive coefficients. The hyperparameters of other methods are the same as those reported in \citep{huang2025unified}.

\subsection{Implementation Details for Natural Language Processing}
For \textbf{Phi-1.5} and \textbf{LLaMA 2}, we utilize pre-trained models on the \textbf{TOFU} dataset \cite{mainitofu} and conduct evaluations accordingly. For the baseline methods, including \textbf{GA+GD} and \textbf{DPO+GD}, as well as the methods \textbf{ME+GD} and \textbf{IDK+AP}, we adopt the default hyperparameters reported in \cite{yuan2024closer}. For \textbf{IDK+AP}, the forget coefficient and regularization coefficient are set to \(1.0\) across all models. For \textbf{ME+GD}, the retain coefficient is \(1.0\) for all methods. The forget coefficient is \(1.0\) for LLaMA 2, while it is set to \(0.1\) for LLaMA 2 LoRA and Phi 1.5.
For the LLaMA LoRA configuration, the LoRA rank is 8 and the LoRA alpha is 32. For training LLaMA 2 with DualOptim, we use two NVIDIA H20 GPUs with 96GB of memory each. For Phi-1.5, we use two NVIDIA RTX 6000 Ada GPUs with 48GB of memory each.
The detailed hyperparameters of our methods are reported in Table \ref{tab:impl_nlp}.
\begin{table}[H]
    \centering
    \vspace{-0.5em}
    \caption{Summary of hyperparameters for each method on LLM unlearning task. $\eta$ is short for learning rate.}
 \small
    \begin{tabular}{c|c|c}
     \toprule[1.5pt]
         \textbf{Model} & \textbf{Method}& \textbf{Hyperparameters} \\
         \midrule[1pt]
        \multirow{2}{*}{Phi-1.5}
         & ME+GD+DO  & epoch $= 5$, $\eta_f=1\times 10^{-5} $, $\eta_r=1\times 10^{-5} $\\
         & IDK+AP+DO & epoch $= 5$, $\eta_f=8\times 10^{-6} $, $\eta_r=1\times 10^{-5} $\\
         \midrule[1pt]
         \multirow{2}{*}{LLaMA 2} 
         & ME+GD+DO  & epoch $= 5$, $\eta_f=1.45 \times 10^{-6} $, $\eta_r=1\times 10^{-5} $\\
         & IDK+AP+DO & epoch $= 5$, $\eta_f=1.25\times 10^{-6} $, $\eta_r=1\times 10^{-5} $\\
        \midrule[1pt]
        \multirow{2}{*}{LLaMA 2 LoRA} 
         & ME+GD+DO  & epoch $= 5$, $\eta_f=1\times 10^{-4} $, $\eta_r=2\times 10^{-4} $\\
         & IDK+AP+DO  & epoch $= 5$, $\eta_f=5\times 10^{-5} $, $\eta_r=2\times 10^{-4} $\\
        
        \bottomrule[1.5pt]
    \end{tabular}
    \vspace{-1em}
    \label{tab:impl_nlp}
\end{table}

\section{Additional Results}
\subsection{Non-positive Correlation between Forget and Retain Gradients}\label{sec:app_low_corre}
\begin{wrapfigure}{r}{0.35\textwidth}
\vspace{-1em}
\centering
    \includegraphics[width=0.34\textwidth]{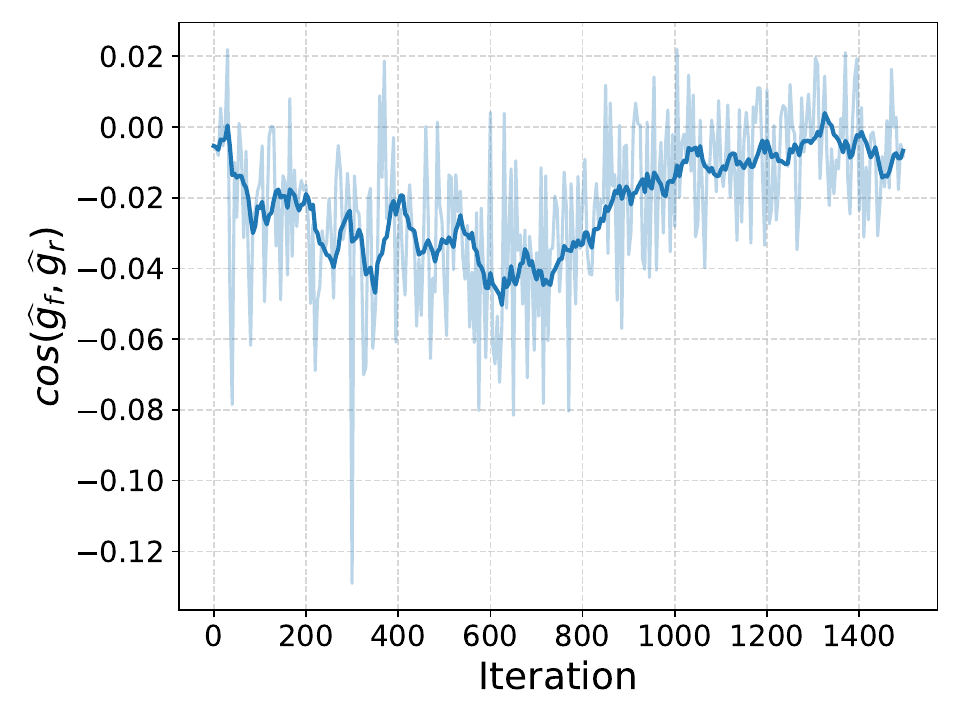}
     \vspace{-1em}
    \caption{Cosine similarity between $\widehat{\vg}_f$ and $\widehat{\vg}_r$. The moving average curve is shown for better visualization.}
    \label{fig:cos}
    \vspace{-3em}
\end{wrapfigure}
Figure \ref{fig:cos} illustrates non-positive correlation between $\widehat{\vg}_f$ and $\widehat{\vg}_r$ during unlearning, supporting the reasonableness of Assumption \ref{assum2}. Note that the results are obtained using SGD optimizer.

\subsection{Unlearning Process of Other MU Methods} \label{sec:app_mu_process}
The unlearning processes of SalUn \cite{fan2024salun} and SCRUB \cite{kurmanji2023towards} are illustrated in Figure \ref{fig:salun} and \ref{fig:scrub}, respectively. Compared to SFRon (see in Figure \ref{fig:traj}), SalUn and SCRUB exhibits less performance fluctuation. However, they underperform SFRon by a large margin. Despite that, our method is still effective in boosting their unlearning performance.
\begin{figure}[ht]
\vspace{-1em}
    \centering
    \small
     \subfigure[Forget Accuracy (FA)]{\includegraphics[width=0.245\textwidth]{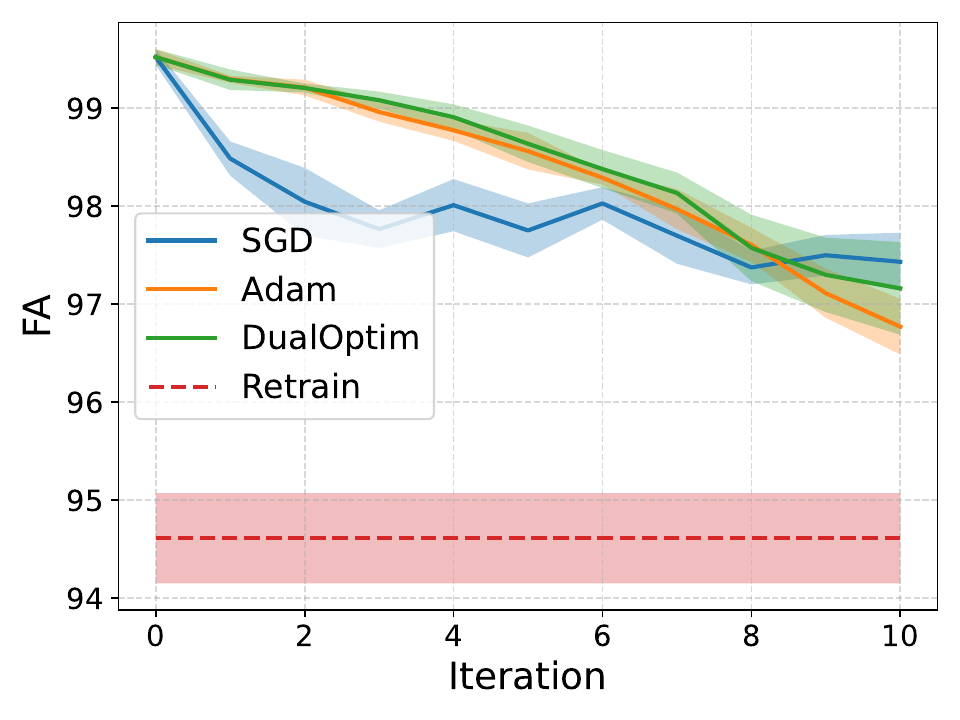}}
    \subfigure[Retain Accuracy (RA)]{\includegraphics[width=0.245\textwidth]{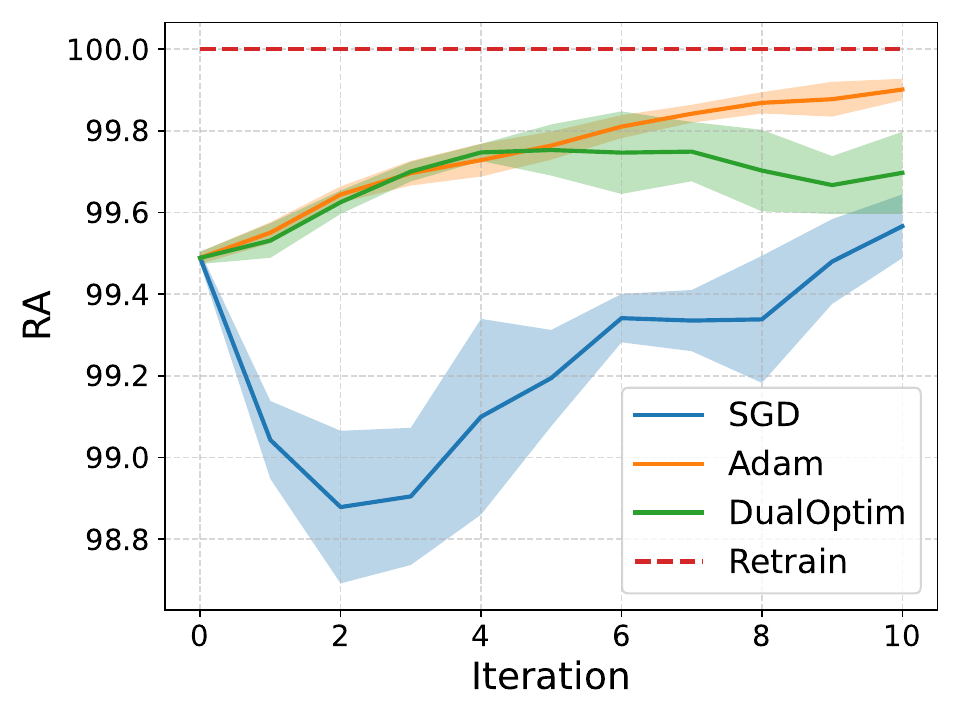}}
    \subfigure[Test Accuracy (TA)]{\includegraphics[width=0.245\textwidth]{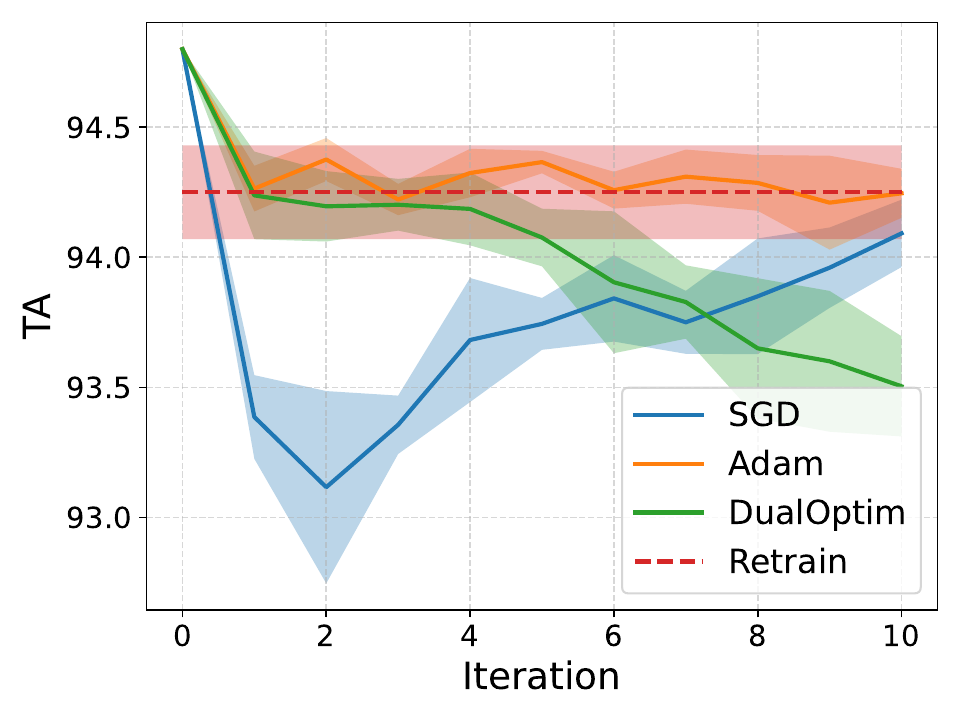}}
    \subfigure[MIA]{\includegraphics[width=0.245\textwidth]{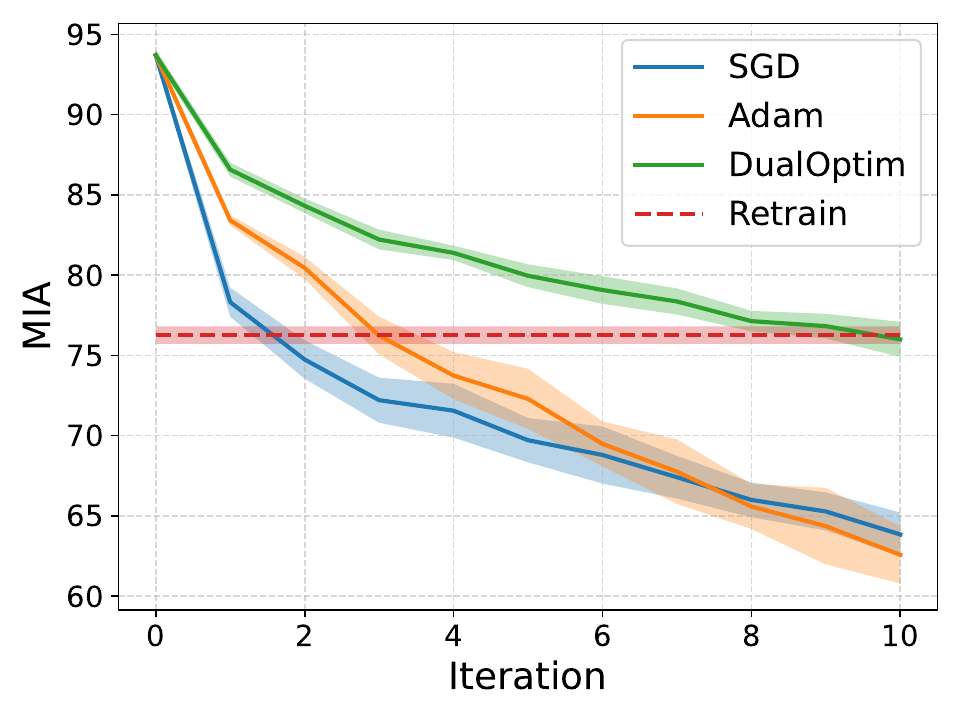}}
    \vspace{-1em}
    \caption{Unlearning process of SalUn. All results are obtained from unlearning 10\% random subset of CIFAR-10 on ResNet-18.}
    \label{fig:salun}
    \vspace{-0.5em}
\end{figure}
\begin{figure}[ht]
\vspace{-0.5em}
    \centering
    \small
     \subfigure[Forget Accuracy (FA)]{\includegraphics[width=0.245\textwidth]{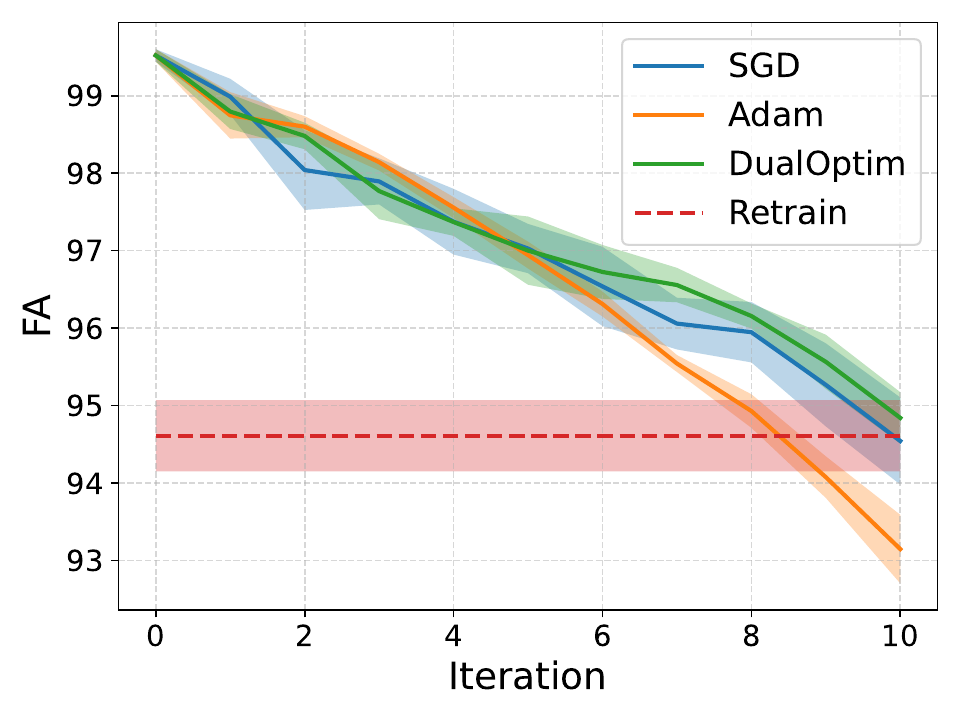}}
    \subfigure[Retain Accuracy (RA)]{\includegraphics[width=0.245\textwidth]{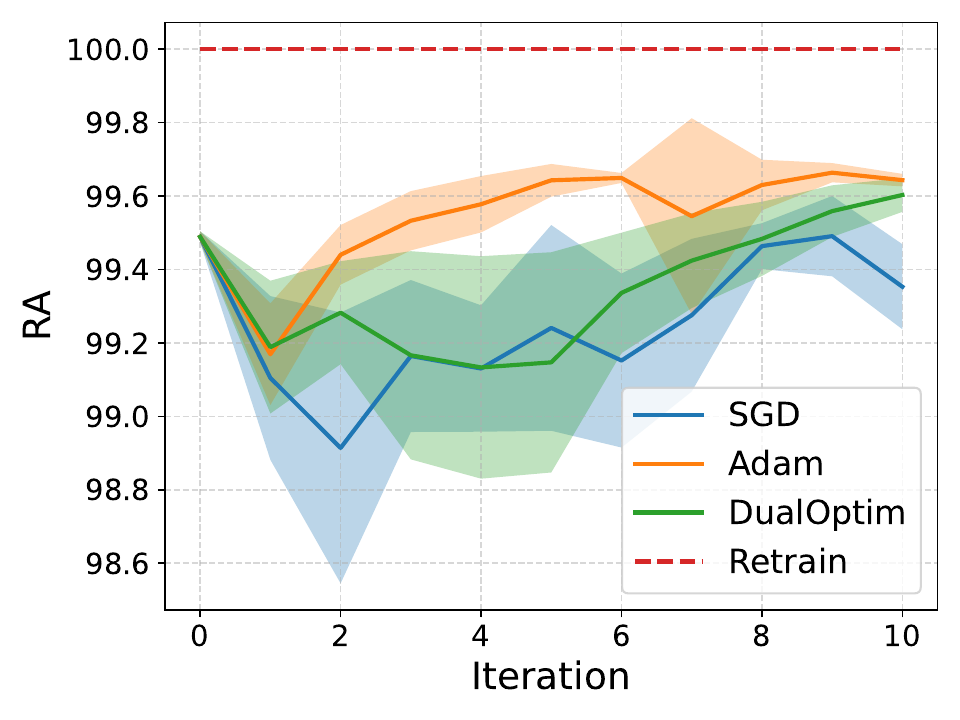}}
    \subfigure[Test Accuracy (TA)]{\includegraphics[width=0.245\textwidth]{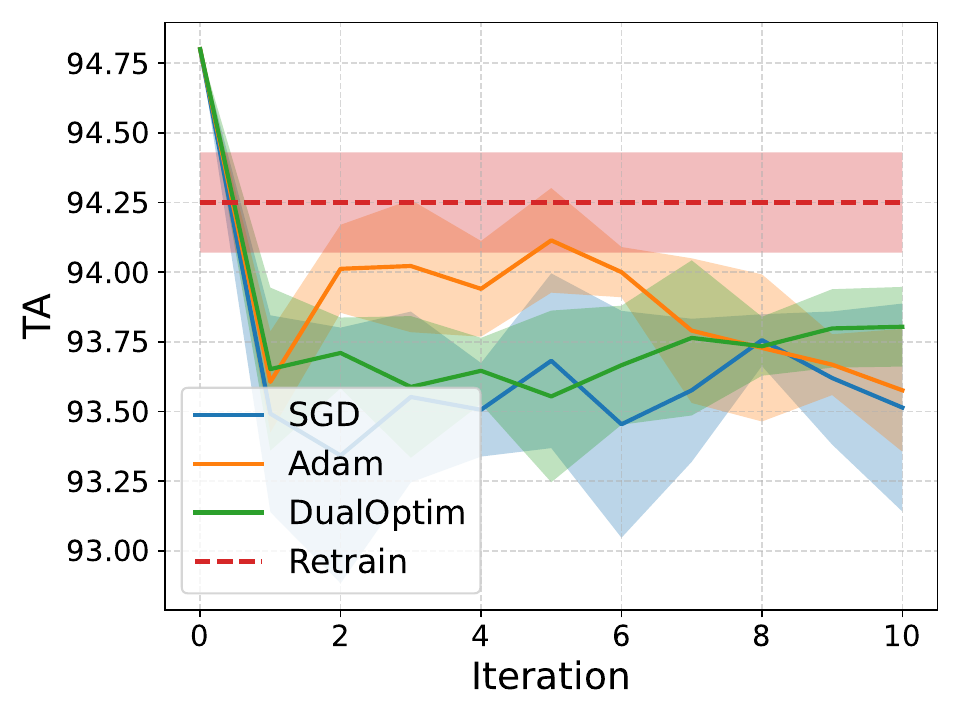}}
    \subfigure[MIA]{\includegraphics[width=0.245\textwidth]{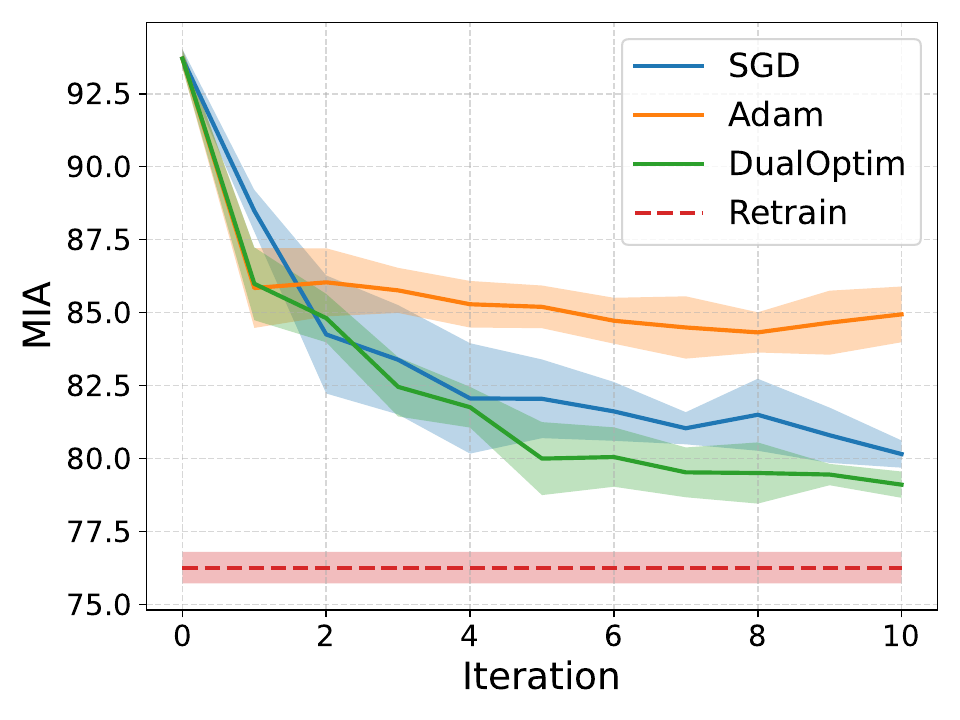}}
    \vspace{-1em}
    \caption{Unlearning process of SCRUB. All results are obtained from unlearning 10\% random subset of CIFAR-10 on ResNet-18.}
    \label{fig:scrub}
    \vspace{-0.5em}
\end{figure}

\subsection{Additional Results of Image Classification} \label{sec:app_classify}
We further conduct experiment to evaluate the performance of unlearning 50\% random subsets of CIFAR-10 and 10\% random subsets of CIFAR-100 and SVHN. The results are reported in Table \ref{tab:res_app}. The observation in Table \ref{tab:res_app} is consistent with that in Table \ref{tab:res}, emphasizing the efficacy of our method across different fractions of forget data, different datasets and different model architectures.
\begin{table}[H]
    \centering
    \caption{Additional performance summary of MU methods for image classification. \textbf{(a)} $50\%$ random subset of \textbf{CIFAR-10}, \textbf{(b)} $10\%$ random subset of \textbf{CIFAR-100} and \textbf{(c)} $10\%$ random subset of \textbf{SVHN}. ResNet-18 is used. All results are presented as mean and standard deviation across $5$ trials with different random data. Performance gap from RT are indicated in \xy{blue}.} \label{tab:res_app}
    \footnotesize
    \vspace{-0.5em}
    \subtable[\footnotesize\textbf{CIFAR-10 Random Subset Unlearning (50\%)}]{
    \small
     \resizebox*{\textwidth}{!}{
    \begin{tabular}{l|c c c c |c c}
      \toprule[1.5pt]
      \textbf{Method} & UA & RA & TA & MIA & Gap $\downarrow$ & Std $\downarrow$\\
      \midrule[1pt]
      Retrain & 93.36$_{\pm 0.10}$ (\xy{0.00}) & 100.00$_{\pm 0.00}$ (\xy{0.00}) & 93.11$_{\pm 0.29}$ (\xy{0.00}) & 69.02$_{\pm 0.01}$ (\xy{0.00}) & \xy{0.00} & 0.10 \\
      \midrule[0.5pt]
      FT & 99.28$_{\pm 0.05}$ (\xy{5.92}) & 99.92$_{\pm 0.03}$ (\xy{0.08}) & 94.20$_{\pm 0.18}$ (\xy{1.09}) & 88.77$_{\pm 0.28}$ (\xy{19.75}) & \xy{6.71} & 0.14\\
      GA  & 95.44$_{\pm 7.29}$ (\xy{2.08}) & 95.53$_{\pm 7.30}$ (\xy{4.47}) & 90.54$_{\pm 6.69}$ (\xy{2.57}) & 89.14$_{\pm 6.82}$ (\xy{20.12}) & \xy{7.31} & 7.03\\
      RL  & 99.26$_{\pm 0.09}$ (\xy{5.90}) & 99.50$_{\pm 0.02}$ (\xy{0.50}) & 93.91$_{\pm 0.12}$ (\xy{0.80}) & 58.50$_{\pm 1.22}$ (\xy{10.52}) & \xy{4.43} & 0.36\\
      \midrule[0.5pt]
      SCRUB & 97.07$_{\pm 0.28}$ (\xy{3.71}) & 99.57$_{\pm 0.10}$ (\xy{0.43}) & 93.21$_{\pm 0.17}$ (\xy{0.10}) & 80.62$_{\pm 1.14}$ (\xy{11.60}) & \xy{3.96} & 0.42\\
      \rowcolor{gray!35}
      ~ + \textbf{DualOptim} & 97.42$_{\pm 0.09}$ (\xy{4.06}) & 99.52$_{\pm 0.05}$ (\xy{0.48}) & 93.38$_{\pm 0.06}$ (\xy{0.27}) & 77.94$_{\pm 0.69}$ (\xy{8.92}) & \textbf{\xy{3.43}} & \textbf{0.22}\\
      SalUn  & 99.05$_{\pm 0.09}$ (\xy{5.69}) & 99.88$_{\pm 0.02}$ (\xy{0.12}) & 94.10$_{\pm 0.15}$ (\xy{0.99}) & 68.30$_{\pm 0.72}$ (\xy{0.72}) & \xy{1.88} & \textbf{0.24}\\
      \rowcolor{gray!35}
      ~ + \textbf{DualOptim} & 93.82$_{\pm 0.36}$ (\xy{0.46}) & 97.90$_{\pm 0.36}$ (\xy{2.10}) & 90.66$_{\pm 0.25}$ (\xy{2.45}) & 68.88$_{\pm 0.64}$ (\xy{0.14}) & \textbf{\xy{1.29}} & 0.40\\
      SFRon & 93.21$_{\pm 2.34}$ (\xy{0.15}) & 98.90$_{\pm 0.60}$ (\xy{1.10}) & 91.34$_{\pm 1.22}$ (\xy{1.77}) & 71.75$_{\pm 3.31}$ (\xy{2.73}) & \xy{1.44} & 1.87\\
      \rowcolor{gray!35}
      ~ + \textbf{DualOptim} & 91.38$_{\pm 0.81}$ (\xy{1.98}) & 99.52$_{\pm 0.20}$ (\xy{0.48}) & 91.07$_{\pm 0.31}$ (\xy{2.04}) & 69.66$_{\pm 0.34}$ (\xy{0.64}) & \textbf{\xy{1.29}} & \textbf{0.41}\\
      \bottomrule[1.5pt]
    \end{tabular}
    }
    \label{tab:cifar10-50}}
    \subtable[\footnotesize\textbf{CIFAR-100 Random Subset Unlearning (10\%)}]{
    \scriptsize
    \resizebox*{\textwidth}{!}{
\begin{tabular}{l|c c c c |c c}
  \toprule[1.5pt]
  \textbf{Method} & UA & RA & TA & MIA & Gap $\downarrow$ & Std $\downarrow$\\
  \midrule[1pt]
  Retrain & 77.96$_{\pm 0.52}$ (\xy{0.00}) & 99.98$_{\pm 0.00}$ (\xy{0.00}) & 77.23$_{\pm 0.30}$ (\xy{0.00}) & 41.79$_{\pm 0.01}$ (\xy{0.00}) & \xy{0.00} & 0.21 \\
  \midrule[0.5pt]
  FT & 77.97$_{\pm 0.06}$ (\xy{0.01}) & 92.33$_{\pm 0.54}$ (\xy{7.65}) & 66.07$_{\pm 0.45}$ (\xy{11.16}) & 62.64$_{\pm 0.18}$ (\xy{20.85}) & \xy{9.92} & 0.31\\
  GA  & 81.81$_{\pm 11.77}$ (\xy{3.85}) & 83.52$_{\pm 11.25}$ (\xy{16.46}) & 62.13$_{\pm 9.55}$ (\xy{15.10}) & 72.62$_{\pm 14.47}$ (\xy{30.83}) & \xy{16.56} & 11.76\\
  RL  & 94.66$_{\pm 0.21}$ (\xy{16.70}) & 98.10$_{\pm 0.03}$ (\xy{1.88}) & 70.16$_{\pm 0.09}$ (\xy{7.07}) & 40.82$_{\pm 0.51}$ (\xy{0.97}) & \xy{6.66} & 0.21\\
  \midrule[0.5pt]
  SCRUB & 77.45$_{\pm 0.68}$ (\xy{0.51}) & 99.26$_{\pm 0.02}$ (\xy{0.72}) & 73.51$_{\pm 0.37}$ (\xy{3.72}) & 70.81$_{\pm 0.72}$ (\xy{29.02}) & \xy{8.49} & 0.45\\
  \rowcolor{gray!35}
  ~ + \textbf{DualOptim} & 76.72$_{\pm 0.65}$ (\xy{1.24}) & 99.38$_{\pm 0.01}$ (\xy{0.60}) & 73.87$_{\pm 0.19}$ (\xy{3.36}) & 63.06$_{\pm 0.56}$ (\xy{21.27}) & \textbf{\xy{6.62}} & \textbf{0.35}\\
  SalUn  & 95.02$_{\pm 0.40}$ (\xy{17.06}) & 98.04$_{\pm 0.04}$ (\xy{1.94}) & 70.26$_{\pm 0.29}$ (\xy{6.97}) & 41.35$_{\pm 0.46}$ (\xy{0.44}) & \xy{6.60} & \textbf{0.30}\\
  \rowcolor{gray!35}
  ~ + \textbf{DualOptim} & 78.04$_{\pm 1.07}$ (\xy{0.08}) & 97.96$_{\pm 0.09}$ (\xy{2.02}) & 71.10$_{\pm 0.25}$ (\xy{6.13}) & 41.94$_{\pm 0.69}$ (\xy{0.15}) & \textbf{\xy{2.10}} & 0.53\\
  SFRon & 76.24$_{\pm 12.35}$ (\xy{1.72}) & 99.56$_{\pm 0.25}$ (\xy{0.42}) & 73.18$_{\pm 1.30}$ (\xy{4.05}) & 57.64$_{\pm 7.41}$ (\xy{15.85}) & \xy{5.51} & 5.33\\
  \rowcolor{gray!35}
  ~ + \textbf{DualOptim} & 74.32$_{\pm 4.28}$ (\xy{3.64}) & 99.71$_{\pm 0.03}$ (\xy{0.27}) & 73.66$_{\pm 0.49}$ (\xy{3.57}) & 54.02$_{\pm 1.55}$ (\xy{12.23}) & \textbf{\xy{4.93}} & \textbf{1.59}\\
  \bottomrule[1.5pt]
    \end{tabular}
    }
    \label{tab:cifar100}}
    \subtable[\footnotesize\textbf{SVHN Random Subset Unlearning (10\%)}]{
    \scriptsize
    \resizebox*{\textwidth}{!}{
\begin{tabular}{l|c c c c |c c}
  \toprule[1.5pt]
  \textbf{Method} & UA & RA & TA & MIA & Gap $\downarrow$ & Std $\downarrow$\\
  \midrule[1pt]
  Retrain & 96.05$_{\pm 0.14}$ (\xy{0.00}) & 99.82$_{\pm 0.01}$ (\xy{0.00}) & 96.53$_{\pm 0.10}$ (\xy{0.00}) & 79.47$_{\pm 0.01}$ (\xy{0.00}) & \xy{0.00} & 0.07 \\
  \midrule[0.5pt]
  FT & 99.51$_{\pm 0.07}$ (\xy{3.46}) & 100.00$_{\pm 0.00}$ (\xy{0.18}) & 95.85$_{\pm 0.03}$ (\xy{0.68}) & 80.23$_{\pm 0.66}$ (\xy{0.76}) & \xy{1.27} & 0.19\\
  GA  & 96.81$_{\pm 4.45}$ (\xy{0.76}) & 97.41$_{\pm 4.15}$ (\xy{2.41}) & 92.60$_{\pm 4.89}$ (\xy{3.93}) & 88.80$_{\pm 5.60}$ (\xy{9.33}) & \xy{4.11} & 4.77\\
  RL  & 94.73$_{\pm 0.32}$ (\xy{1.32}) & 99.43$_{\pm 0.09}$ (\xy{0.39}) & 94.56$_{\pm 0.24}$ (\xy{1.97}) & 74.00$_{\pm 1.02}$ (\xy{5.47}) & \xy{2.29} & 0.42\\
  \midrule[0.5pt]
  SCRUB & 92.41$_{\pm 0.25}$ (\xy{3.64}) & 99.82$_{\pm 0.03}$ (\xy{0.00}) & 94.68$_{\pm 0.11}$ (\xy{1.85}) & 84.44$_{\pm 0.55}$ (\xy{4.97}) & \xy{2.61} & 0.23\\
  \rowcolor{gray!35}
  ~ + \textbf{DualOptim} & 95.42$_{\pm 0.23}$ (\xy{0.63}) & 99.82$_{\pm 0.01}$ (\xy{0.00}) & 95.06$_{\pm 0.17}$ (\xy{1.47}) & 82.70$_{\pm 0.34}$ (\xy{3.23}) & \textbf{\xy{1.33}} & \textbf{0.19}\\
  SalUn  & 94.69$_{\pm 0.22}$ (\xy{1.36}) & 98.85$_{\pm 0.33}$ (\xy{0.97}) & 94.54$_{\pm 0.31}$ (\xy{1.99}) & 73.11$_{\pm 0.47}$ (\xy{6.36}) & \xy{2.67} & 0.33\\
  \rowcolor{gray!35}
  ~ + \textbf{DualOptim} & 99.58$_{\pm 0.06}$ (\xy{3.53}) & 100.00$_{\pm 0.00}$ (\xy{0.18}) & 95.76$_{\pm 0.04}$ (\xy{0.77}) & 79.59$_{\pm 0.46}$ (\xy{0.12}) & \textbf{\xy{1.15}} & \textbf{0.14}\\
  SFRon & 97.29$_{\pm 1.98}$ (\xy{1.24}) & 100.00$_{\pm 0.00}$ (\xy{0.18}) & 95.41$_{\pm 0.36}$ (\xy{1.12}) & 79.25$_{\pm 3.30}$ (\xy{0.22}) & \xy{\textbf{0.69}} & 1.41\\
  \rowcolor{gray!35}
  ~ + \textbf{DualOptim} & 98.20$_{\pm 0.60}$ (\xy{2.15}) & 100.00$_{\pm 0.00}$ (\xy{0.18}) & 95.63$_{\pm 0.09}$ (\xy{0.90}) & 79.56$_{\pm 1.06}$ (\xy{0.09}) & \xy{0.83} & \textbf{0.44}\\
  \bottomrule[1.5pt]
    \end{tabular} 
    }
    \label{tab:svhn}}
\end{table}

\subsection{Additional Results of Image Generation} \label{sec:app_image_gen}
To further validate the effectiveness of the proposed method, we apply DualOptim to SalUn \citep{fan2024salun} on CIFAR-10 with DDPM. Table \ref{tab:res_gen1} illustrates that DualOptim can still enhance the performance and stability of SalUn in image generation tasks. Note that we adapt SalUn to alternate updating scheme when applying DualOptim, which originally utilizes joint updating scheme in image generation tasks. We do not include its results on ImageNet due to suboptimal and unstable performance.

\begin{table}[h]
\vspace{-0.5em}
    \centering
    \caption{Class-wise unlearning performance of SalUn+DO on \textbf{CIFAR-10} with \textbf{DDPM}. The best unlearning performance for each forgetting class is highlighted in \textbf{bold} for FA (in \%) and FID.} \label{tab:res_gen1}
    \footnotesize
    \vspace{-0.5em}
    \tabcolsep=0.25em
    \footnotesize
    \begin{tabular}{l|c c c c c c c c c c| c c}
        \toprule[1.5pt]
         \multirow{2}{*}{\textbf{Method}}  & \multicolumn{2}{c|}{Automobile} & \multicolumn{2}{c|}{Cat} & \multicolumn{2}{c|}{Dog} & \multicolumn{2}{c|}{Horse} & \multicolumn{2}{c|}{Truck} & \multicolumn{2}{c}{\textbf{Average}}\\
         ~ & FA $\downarrow$ & \multicolumn{1}{c|}{FID $\downarrow$}& FA $\downarrow$ & \multicolumn{1}{c|}{FID $\downarrow$} & FA $\downarrow$ & \multicolumn{1}{c|}{FID $\downarrow$} & FA $\downarrow$ & \multicolumn{1}{c|}{FID $\downarrow$} & FA $\downarrow$ & \multicolumn{1}{c|}{FID $\downarrow$}& FA $\downarrow$ & \multicolumn{1}{c}{FID $\downarrow$}\\ 
         \midrule[1pt]
         SalUn & 0.20 & 21.23 & 1.40 & \textbf{20.29} & \textbf{0.00} & 20.18 & 0.60 & 20.70 & 0.80 & 20.45 & 0.60$_{\pm 0.49}$ & 20.57$_{\pm 0.37}$\\
         \rowcolor{gray!35}
          ~+\textbf{DO} & \textbf{0.00} & 19.93 & \textbf{0.00} & 20.45 & \textbf{0.00} & 20.12 & \textbf{0.00} & 20.14 & \textbf{0.00} & 19.91 & \textbf{0.00$_{\pm 0.00}$} & \textbf{20.11$_{\pm 0.19}$} \\
         \bottomrule[1.5pt]
    \end{tabular}
    \vspace{-1em}
\end{table}

\subsection{Additional Results of Large Language Models} \label{sec:app_nlp}
The results in Table \ref{tab:nlp_app1} suggest that the alternate updating method can improve LLM unlearning. In addition, DualOptim boosts the performance based on that. 
As presented in Table \ref{tab:nlp_app}, DualOptim achieves a minimal compromise in unlearning performance by using LoRA, while significantly reducing memory consumption. 
Furthermore, Figure \ref{fig:nlp_curve} illustrates that the unlearning process utilizing DualOptim demonstrates greater effectiveness and stability compared to other baselines.
\begin{table}[ht]
\setlength\tabcolsep{5pt}
\vspace{-0.5em}
    \centering
    \caption{Performance comparison of different updating methods on TOFU-finetuned \textbf{Phi-1.5}. The MU method is \textbf{IDK+AP}. The results include Model Capability (MC), Forget Efficacy (FE), and the average metric (Avg.) for forgetting 1\%, 5\%, and 10\% data. Note that \textit{Joint} represents jointly optimizing forgetting and retaining objectives, which is the default updating method; \textit{Alternate} represents alternately optimizing these two objectives using a shared optimizer.} 
    \label{tab:nlp_app1}
    \footnotesize
    \begin{tabular}{c|c c c | c c c | c c c }
        \toprule[1.5pt]
        \multirow{2}{*}{\textbf{Method}} & \multicolumn{3}{c|}{\textbf{forget 1\% data}} & \multicolumn{3}{c|}{\textbf{forget 5\% data}} & \multicolumn{3}{c}{\textbf{forget 10\% data}}\\
        ~ & MC $\uparrow$ & FE $\uparrow$ & Avg. $\uparrow$& MC $\uparrow$ & FE $\uparrow$ & Avg. $\uparrow$ & MC$\uparrow$ & FE $\uparrow$ & Avg. $\uparrow$\\
        \midrule[1pt]
        Joint & \textbf{0.4403} & 0.5723 & \underline{0.5063} & \textbf{0.4800} & 0.5112 & 0.4956 & \textbf{0.4614} & 0.6003 & 0.5308 \\
        Alternate & 0.4182 & \underline{0.5746} & 0.4964 & 0.4348 & \underline{0.6570} & \underline{0.5459} & \underline{0.4588} & \underline{0.6619} & \underline{0.5603} \\
        \textbf{DualOptim} & \underline{0.4221} & \textbf{0.7037} & \textbf{0.5629} & \underline{0.4633} & \textbf{0.6974} & \textbf{0.5804} & 0.4422 & \textbf{0.7193} & \textbf{0.5807} \\
        \bottomrule[1.5pt]
    \end{tabular}
    \vspace{-0.5em}
\end{table}
\begin{table}[ht]
\setlength\tabcolsep{5pt}
\vspace{-0.5em}
    \centering
    \caption{Performance comparison of different MU methods on TOFU-finetuned \textbf{LLaMA 2 with LoRA}. We set the LoRA rank to 8 and the LoRA alpha to 32. The results include Model Capability (MC), Forget Efficacy (FE), and the average metric (Avg.) for forgetting 1\%, 5\%, and 10\% data.} 
    \label{tab:nlp_app}
    \footnotesize
    \begin{tabular}{l|c c c | c c c | c c c }
        \toprule[1.5pt]
        \multirow{3}{*}{\textbf{Method}} & \multicolumn{3}{c|}{\textbf{forget 1\% data}} & \multicolumn{3}{c|}{\textbf{forget 5\% data}} & \multicolumn{3}{c}{\textbf{forget 10\% data}}\\
        ~ & MC $\uparrow$ & FE $\uparrow$ & Avg. $\uparrow$& MC $\uparrow$ & FE $\uparrow$ & Avg. $\uparrow$ & MC$\uparrow$ & FE $\uparrow$ & Avg. \\
        \midrule[1pt]
        GA+GD & 0.5007 & 0.6051 & 0.5529 & 0.5470 & 0.4306 & 0.4888 & 0.5745 & 0.9133 & 0.7439\\
        NPO+GD & 0.5290 & 0.5778 & 0.5534 & 0.5185 & 0.7032 & 0.6109 & 0.5350 & 0.7745 & 0.6548\\
        ME+GD & 0.7526 & 0.8425 & 0.7976 & \textbf{0.7435} & 0.9298 & 0.8367 & \textbf{0.7410} & 0.8856 & 0.8133\\
        \rowcolor{gray!35}
           \textbf{~+DO} & \textbf{0.7542} & \textbf{0.9646} & \textbf{0.8594} & 0.7373 & \textbf{0.9545} & \textbf{0.8459} & 0.7363 & \textbf{0.9549} & \textbf{0.8456} \\ 
        \midrule[0.5pt]
        DPO+GD & 0.6874 & 0.7647 & 0.7260 & 0.6951 & 0.5490 & 0.6221 & 0.7308 & 0.3973 & 0.5640\\
        IDK+AP & \textbf{0.7572} & 0.6754 & 0.7163 & \textbf{0.7471} & 0.7430 & \textbf{0.7451}& \textbf{0.7604} & 0.7411 & 0.7507 \\
        \rowcolor{gray!35}
        \textbf{~+DO} & 0.7422 & \textbf{0.7729} & \textbf{0.7575} & 0.7311 & \textbf{0.7499} & 0.7406 & 0.7533 & \textbf{0.7532} & \textbf{0.7533} \\
        \bottomrule[1.5pt]
    \end{tabular}
    \vspace{-0.5em}
\end{table}
\begin{figure}[ht]
\vspace{-0.5em}
    \centering
    \subfigure[untargeted unlearning]{\includegraphics[width=0.85\textwidth]{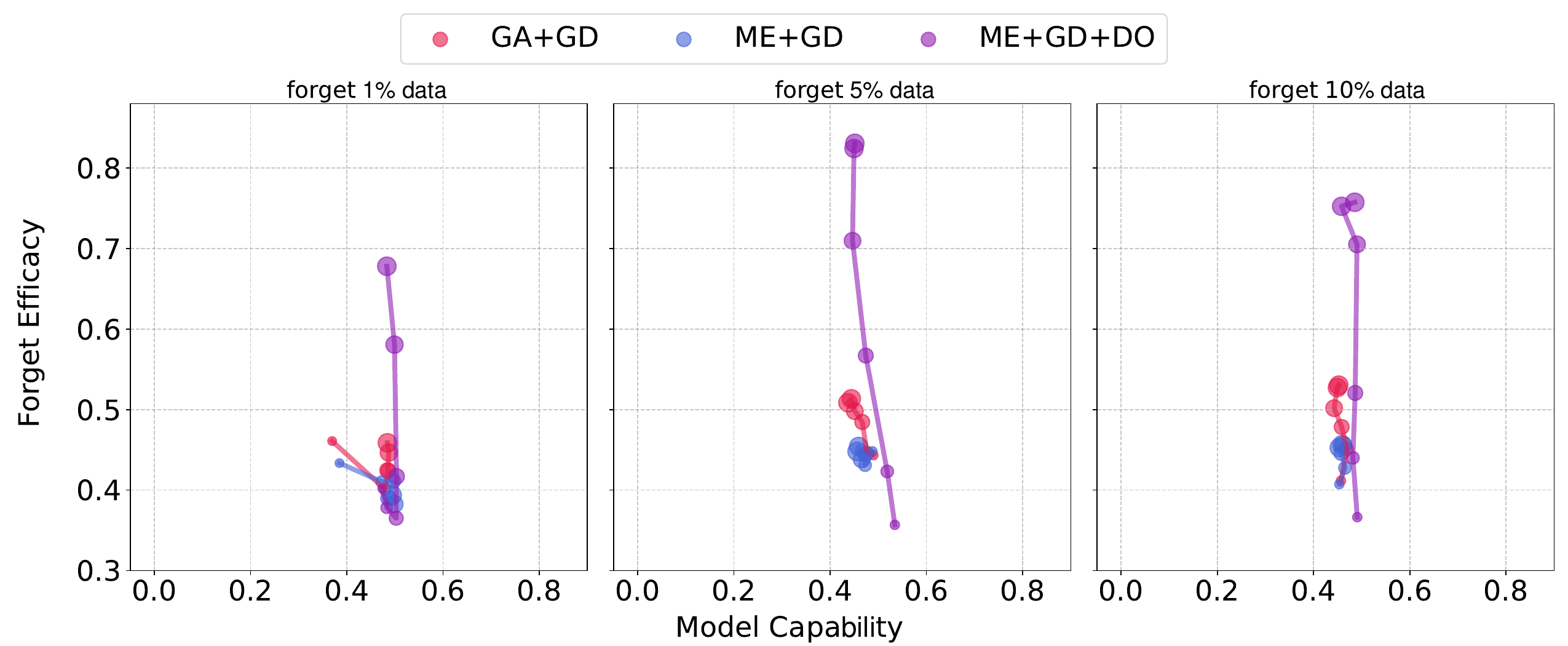}}\\
    \vspace{-0.5em}
    \subfigure[targeted unlearning]{\includegraphics[width=0.85\textwidth]{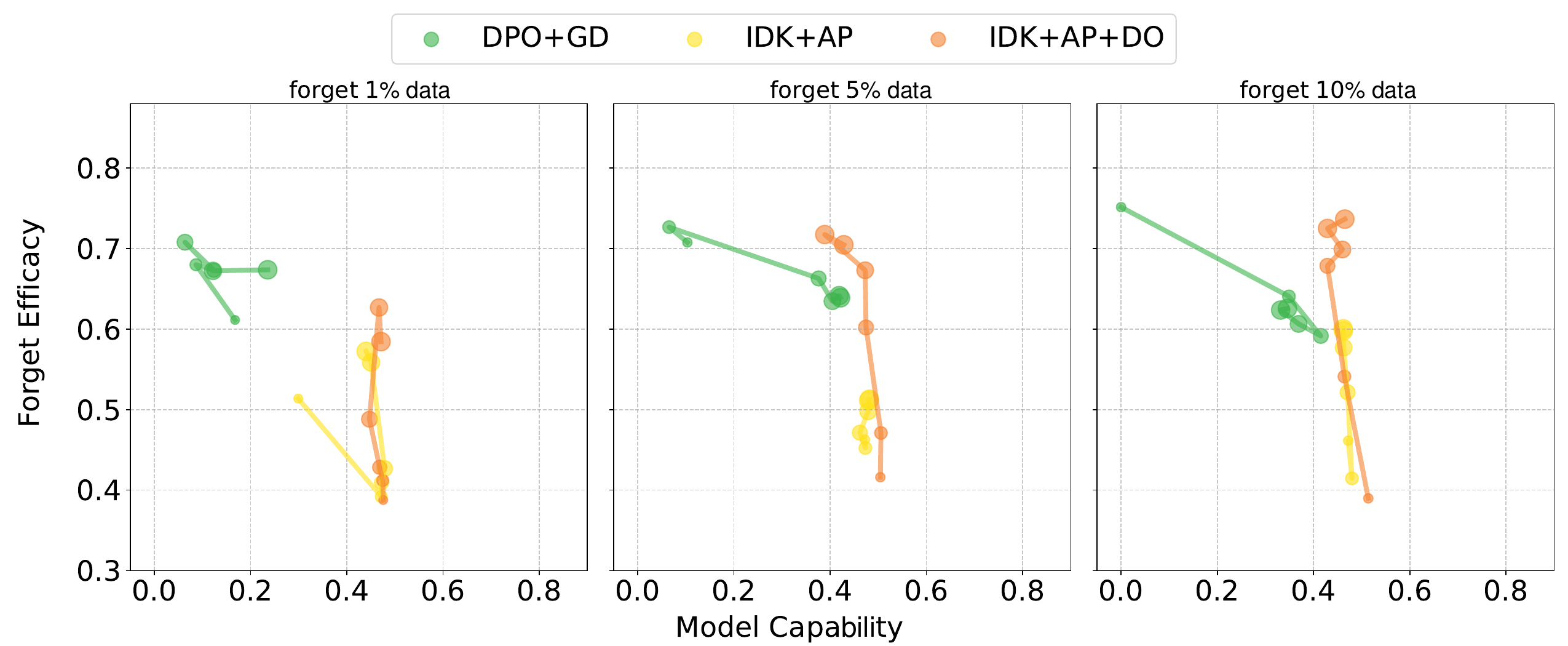}}
    \vspace{-0.5em}
    \caption{\small Forget Efficacy versus Model Capability of \textbf{(a) untargeted unlearning} and \textbf{(b) targeted unlearning} on TOFU with \textbf{Phi-1.5}. The relative size of the markers indicates the epoch of unlearning.}
    \vspace{-0.5em}
    \label{fig:nlp_curve}
\end{figure}

\subsection{Additional Ablation Studies} \label{sec:app_ab}
\textbf{Decouple $\vm$ and $\vv$ in Adam.} Given that Adam incorporates two momentum terms, i.e., the first moment $\vm$ and the second moment $\vv$, we conducted a further analysis to explore the impact of decoupling these terms. As shown in Table \ref{tab:ab_adam}, decoupling both $\vm$ and $\vv$ yields the best performance. This finding reinforces the importance of fully decoupling the optimization processes for forgetting and retaining objectives.
\begin{table}[H]
\setlength\tabcolsep{5pt}
\vspace{-0.5em}
    \centering
    \caption{Ablation study on Dual Adams. Results are based on $10\%$ random subset unlearning task on CIFAR-10 using ResNet-18 pre-trained by SGD. \textbf{SFRon} is the adopted MU algorithm. $\vm$ and $\vv$ denote the \textbf{decoupled first} and \textbf{second moment terms} in Adam, respectively.} \label{tab:ab_adam}
    \vspace{-0.5em}
    \scriptsize
    \begin{tabular}{c c|c c c c |c c}
        \toprule[1.5pt]
         $\vm$&$\vv$& FA & RA & TA & MIA & Gap $\downarrow$ & Std $\downarrow$\\
         \midrule[1pt]
        \ding{55} & \ding{55} & 94.54$_{\pm 2.41}$ (\xy{0.07}) & 99.96$_{\pm 0.02}$ (\xy{0.04}) & 94.15$_{\pm 0.30}$ (\xy{0.10}) & 81.46$_{\pm 2.42}$ (\xy{5.20}) & \xy{1.35} & 1.29\\
        \checkmark & \ding{55} & 94.61$_{\pm 2.43}$ (\xy{0.00}) & 99.95$_{\pm 0.02}$ (\xy{0.05}) &  94.26$_{\pm 0.31}$ (\xy{0.01}) & 82.44$_{\pm 2.56}$ (\xy{6.18}) & \xy{1.56} & 1.33\\
        \ding{55} & \checkmark &  94.52$_{\pm 2.44}$ (\xy{0.09}) & 99.94$_{\pm 0.03}$ (\xy{0.06}) & 94.09$_{\pm 0.40}$ (\xy{0.16}) & 82.06$_{\pm 2.94}$ (\xy{5.80}) & \xy{1.53} & 1.45\\
        \checkmark & \checkmark & 94.29$_{\pm 1.23}$ (\xy{0.32}) & 99.94$_{\pm 0.01}$ (\xy{0.06}) & 94.02$_{\pm 0.11}$ (\xy{0.23}) & 77.86$_{\pm 1.39}$ (\xy{1.60}) & \textbf{\xy{0.55}} &  \textbf{0.63} \\
         \bottomrule[1.5pt]
    \end{tabular}
    \vspace{-0.5em}
\end{table}

\textbf{Ablation studies on Adam-trained ResNet-18 and SCRUB \citep{kurmanji2023towards}.} We conduct ablation studies based on the configurations of Adam-trained ResNet-18 + SFRon and SGD-trained ResNet-18 + SCRUB. As illustrated in Table \ref{tab:ab_combination_adam} and \ref{tab:ab_combination_scrub}, Adam (F) + Adam (R) is the best configuration in both cases, indicating that the optimal optimizer for retaining is influenced by the choice of optimizer during pretraining and the specific MU algorithms employed.
\begin{table}[ht]
\setlength\tabcolsep{5pt}
\vspace{-0.5em}
    \centering
    \caption{Ablation study on different combinations of DualOptim. Results are based on $10\%$ random subset unlearning task on CIFAR-10 using ResNet-18 pre-trained by \textbf{Adam}. \textbf{SFRon} is the adopted MU algorithm. (F) and (R) denotes that the optimizer is used for minimizing the forget and retain losses, respectively. Note the the result of RT is reported since the pretrained model is different from that in Table \ref{tab:ab_combination}.} \label{tab:ab_combination_adam}
    \scriptsize
    \begin{tabular}{l|c c c c |c c}
        \toprule[1.5pt]
         \textbf{Optimizer}& FA & RA & TA & MIA & Gap $\downarrow$ & Std $\downarrow$\\
         \midrule[1pt]
         RT & 93.44$_{\pm 0.42}$ (\xy{0.00}) & 100.00$_{\pm 0.00}$ (\xy{0.00}) & 92.84$_{\pm 0.08}$ (\xy{0.00}) & 78.36$_{\pm 0.80}$ (\xy{0.00}) & \xy{0.00} & 0.30\\
         \midrule[0.5pt]
        SGD & 93.88$_{\pm 7.21}$ (\xy{0.44}) & 98.69$_{\pm 2.00}$ (\xy{1.31}) & 92.17$_{\pm 1.65}$ (\xy{0.67}) & 76.91$_{\pm 12.23}$ (\xy{1.45}) & \xy{0.97} & 5.77\\
        Adam  & 94.54$_{\pm 2.14}$ (\xy{1.10}) & 99.61$_{\pm 0.11}$ (\xy{0.39}) & 92.02$_{\pm 0.48}$ (\xy{0.82}) & 76.10$_{\pm 4.40}$ (\xy{2.26}) & \xy{1.14} & 1.78\\
        \midrule[0.5pt]
        Adam (F) + Adam (R) & 93.12$_{\pm 1.29}$ (\xy{0.32}) & 99.33$_{\pm 0.03}$ (\xy{0.67}) & 91.64$_{\pm 0.21}$ (\xy{1.20}) & 78.63$_{\pm 0.88}$ (\xy{0.27}) &\textbf{ \xy{0.62}} & 0.60 \\
        Adam (F) + SGD (R) & 95.30$_{\pm 0.35}$ (\xy{1.86}) & 99.45$_{\pm 0.09}$ (\xy{0.55}) & 92.65$_{\pm 0.18}$ (\xy{0.19}) & 78.13$_{\pm 0.85}$ (\xy{0.23}) & \xy{0.71} & \textbf{0.37} \\
        \bottomrule[1.5pt]
    \end{tabular}
    \vspace{-0.5em}
\end{table}
\begin{table}[ht]
\setlength\tabcolsep{5pt}
\vspace{-0.5em}
    \centering
    \caption{Ablation study on different combinations of DualOptim. Results are based on $10\%$ random subset unlearning task on CIFAR-10 using ResNet-18 pre-trained by \textbf{SGD}. \textbf{SCRUB} is the adopted MU algorithm. (F) and (R) denotes that the optimizer is used for minimizing the forget and retain losses, respectively.} \label{tab:ab_combination_scrub}
    \scriptsize
    \begin{tabular}{l|c c c c |c c}
        \toprule[1.5pt]
         \textbf{Optimizer}& FA & RA & TA & MIA & Gap $\downarrow$ & Std $\downarrow$\\
         \midrule[1pt]
        SGD  & 94.64$_{\pm 0.25}$ (\xy{0.03}) & 99.44$_{\pm 0.10}$ (\xy{0.56}) & 93.49$_{\pm 0.22}$ (\xy{0.76}) & 80.37$_{\pm 0.86}$ (\xy{4.11}) & \xy{1.37} & 0.36\\
        Adam & 92.88$_{\pm 0.25}$ (\xy{1.73}) & 99.62$_{\pm 0.10}$ (\xy{0.38}) & 93.54$_{\pm 0.22}$ (\xy{0.71}) & 82.78$_{\pm 0.86}$ (\xy{6.52}) & \xy{2.33} & \textbf{0.36}\\
        \midrule[0.5pt]
        Adam (F) + Adam (R) & 94.90$_{\pm 0.42}$ (\xy{0.29}) & 99.52$_{\pm 0.09}$ (\xy{0.48}) & 93.50$_{\pm 0.20}$ (\xy{0.75}) & 78.26$_{\pm 0.79}$ (\xy{2.00}) & \textbf{\xy{0.88}} & \textbf{0.38} \\
        Adam (F) + SGD (R) & 94.20$_{\pm 0.65}$ (\xy{0.41}) & 98.68$_{\pm 0.17}$ (\xy{1.32}) & 92.58$_{\pm 0.18}$ (\xy{1.67}) & 78.28$_{\pm 1.12}$ (\xy{2.02}) & \xy{1.36} & 0.53 \\
        \bottomrule[1.5pt]
    \end{tabular}
    \vspace{-0.5em}
\end{table}

\textbf{Shared optimizer with larger momentum coefficient.} We further compare DualOptim with shared optimizers that employ larger momentum coefficients. As observed in Table \ref{tab:ab_alpha}, while increasing the momentum coefficient $\alpha$ can reduce performance variation when using SGD, it also results in slower convergence and ultimately leads to suboptimal performance. In contrast, increasing $\beta_1$ has only a marginal impact when using Adam. This is because Adam's momentum update rule, i.e. $\vm = \beta_1\vm + (1-\beta_1)\widehat{\vg}$, inherently provides a stronger smoothing effect compared to SGD, i.e. $\vm = \alpha\vm + \widehat{\vg}$, when the same momentum coefficient is applied.

\begin{table}[ht]
\setlength\tabcolsep{5pt}
\vspace{-0.5em}
    \centering
    \caption{Comparison between DualOptim and shared optimizers with larger momentum coefficients. Results are based on $10\%$ random subset unlearning task on CIFAR-10 using ResNet-18 pre-trained by \textbf{SGD}. \textbf{SFRon} is the adopted MU algorithm. Note that $\beta_1$ is the momentum coefficient in Adam .} \label{tab:ab_alpha}
    \scriptsize
    \begin{tabular}{l|c c c c |c c}
        \toprule[1.5pt]
         \textbf{Optimizer}& FA & RA & TA & MIA & Gap $\downarrow$ & Std $\downarrow$\\
         \midrule[1pt]
        SGD ($\alpha=0.9$) & 94.67$_{\pm 3.03}$ (\xy{0.06}) & 99.83$_{\pm 0.13}$ (\xy{0.17}) & 93.98$_{\pm 0.56}$ (\xy{0.27}) & 77.80$_{\pm 5.61}$ (\xy{1.54}) & \xy{0.51} & 2.33\\
        SGD ($\alpha=0.95$) & 94.44$_{\pm 2.14}$ (\xy{0.17}) & 99.82$_{\pm 0.22}$ (\xy{0.18}) & 94.11$_{\pm 0.38}$ (\xy{0.14}) & 78.43$_{\pm 2.55}$ (\xy{2.17}) & \xy{0.67} & 1.32\\
        SGD ($\alpha=0.99$) & 95.06$_{\pm 0.64}$ (\xy{0.45}) & 99.02$_{\pm 0.09}$ (\xy{0.98}) & 93.63$_{\pm 0.26}$ (\xy{0.62}) & 78.69$_{\pm 1.13}$ (\xy{2.43}) & \xy{1.12} & \textbf{0.53}\\
        \midrule[0.5pt]
        Adam ($\beta_1=0.9$) & 94.54$_{\pm 2.41}$ (\xy{0.07}) & 99.96$_{\pm 0.02}$ (\xy{0.04}) & 94.15$_{\pm 0.30}$ (\xy{0.10}) & 81.46$_{\pm 2.42}$ (\xy{5.20}) & \xy{1.35} & 1.29\\
        Adam ($\beta_1=0.99$) & 94.64$_{\pm 2.51}$ (\xy{0.03}) & 99.93$_{\pm 0.02}$ (\xy{0.07}) & 94.12$_{\pm 0.35}$ (\xy{0.13}) & 82.10$_{\pm 2.40}$ (\xy{5.84}) & \xy{1.52} & 1.32\\
        Adam ($\beta_1=0.999$) & 94.56$_{\pm 2.55}$ (\xy{0.05}) & 99.79$_{\pm 0.06}$ (\xy{0.21}) & 93.75$_{\pm 0.38}$ (\xy{0.50}) & 80.36$_{\pm 2.85}$ (\xy{4.10}) & \xy{1.22} & 1.46\\
        \midrule[0.5pt]
       \textbf{DualOptim} & 94.69$_{\pm 1.13}$ (\xy{0.02}) & 99.92$_{\pm 0.01}$ (\xy{0.08}) & 94.11$_{\pm 0.11}$ (\xy{0.14}) & 77.77$_{\pm 1.39}$ (\xy{1.51}) & \textbf{\xy{0.44}} & 0.66 \\
         \bottomrule[1.5pt]
    \end{tabular}
    \vspace{-0.5em}
\end{table}

\subsection{Visualization for Machine Unlearning in Image Generation} \label{sec:app_gen}
Generated images from the unlearned model utilizing DualOptim are shown in Figure \ref{fig:vis_cifar10} and \ref{fig:vis_imagenet}. The visualization indicates that, by leveraging DualOptim, effective unlearning is achieved and the generation capability for the remaining classes is retained.

\newpage
\begin{figure}[h]
\vspace{-0.5em}
    \centering
    \small
    \subfigure[Forgetting ‘Airplane’]{\includegraphics[width=0.4\textwidth]{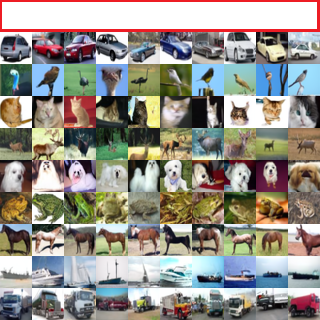}}
    \quad
    \subfigure[Forgetting ‘Automobile’]{\includegraphics[width=0.4\textwidth]{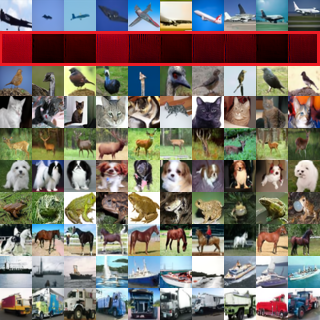}}   \\
    \subfigure[Forgetting ‘Bird’]{\includegraphics[width=0.4\textwidth]{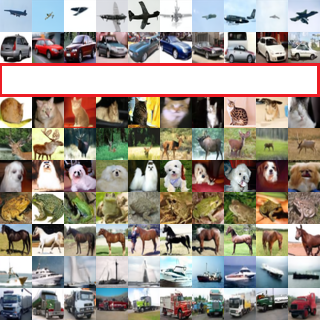}}
    \quad
    \subfigure[Forgetting ‘Cat’]{\includegraphics[width=0.4\textwidth]{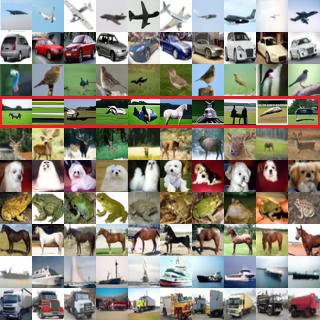}}
    \subfigure[Forgetting ‘Deer’]{\includegraphics[width=0.4\textwidth]{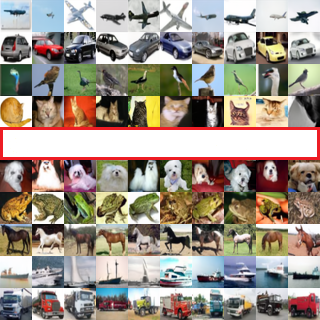}}
    \quad
    \subfigure[Forgetting ‘Dog’]{\includegraphics[width=0.4\textwidth]{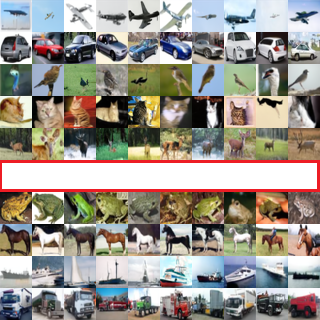}}

\end{figure}
\begin{figure}[H]
    \centering
    \small
    \subfigure[Forgetting ‘Frog’]{\includegraphics[width=0.4\textwidth]{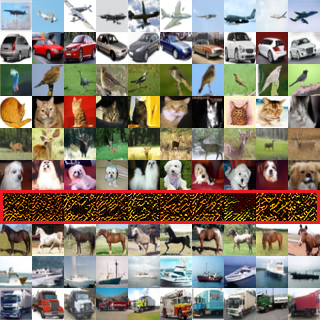}}
    \quad
    \subfigure[Forgetting ‘Horse’]{\includegraphics[width=0.4\textwidth]{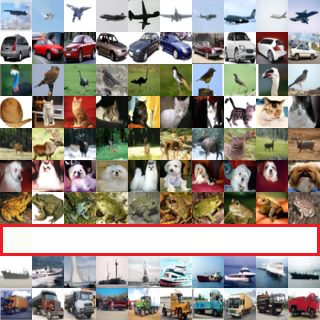}}\\
    \subfigure[Forgetting ‘Ship’]{\includegraphics[width=0.4\textwidth]{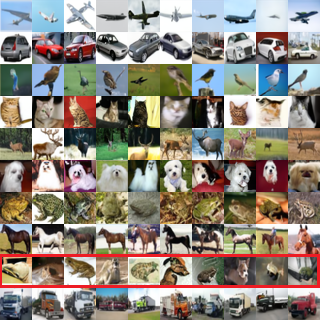}}
    \quad
    \subfigure[Forgetting ‘Truck’]{\includegraphics[width=0.4\textwidth]{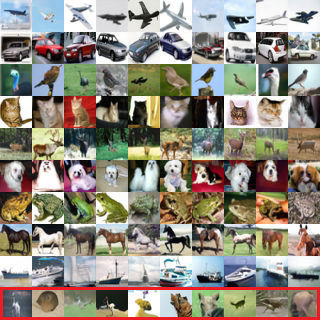}}
    \vspace{-1em}
    \caption{\small Visualization of class-wise unlearning results on classifier-free guidance DDPM on CIFAR-10. The forgetting class is marked with a red color.}
    \label{fig:vis_cifar10}
    \vspace{-1em}
\end{figure}

\begin{figure}[h]
\vspace{-0.5em}
    \centering
    \small
    \subfigure[Forgetting ‘Cockatoo’]{\includegraphics[width=0.65\textwidth]{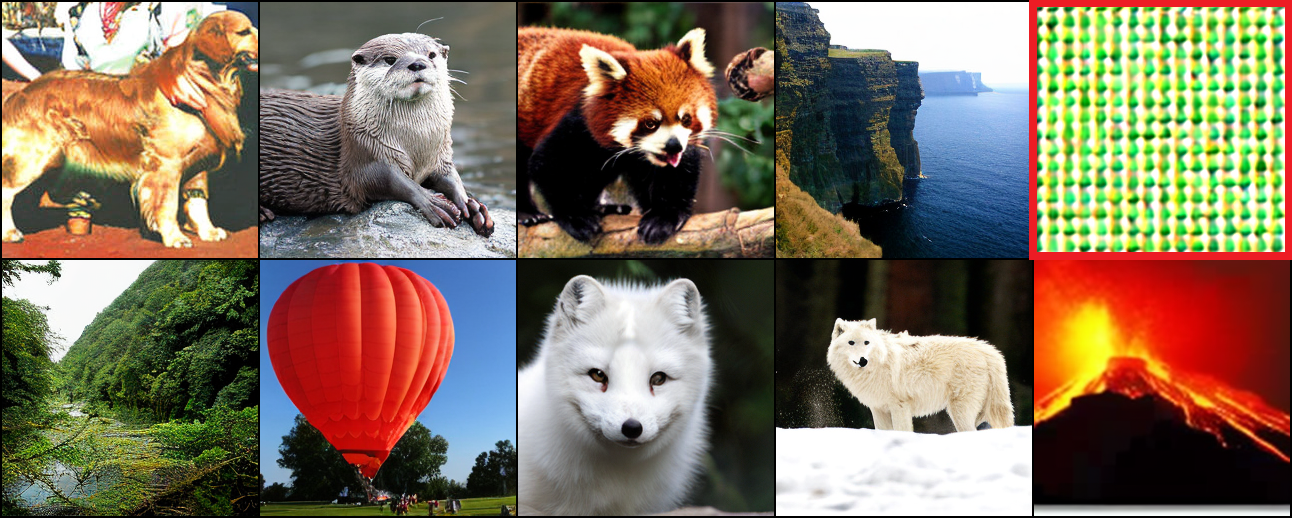}}\\
    \subfigure[Forgetting ‘Golden Retriever’]{\includegraphics[width=0.65\textwidth]{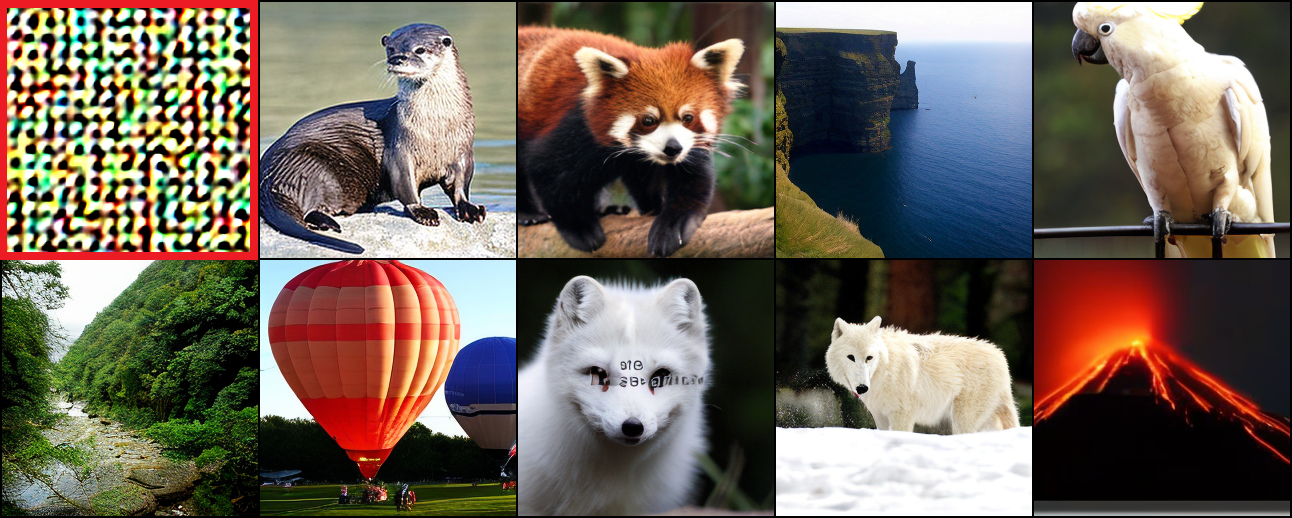}}\\
    \subfigure[Forgetting ‘White Wolf’]{\includegraphics[width=0.65\textwidth]{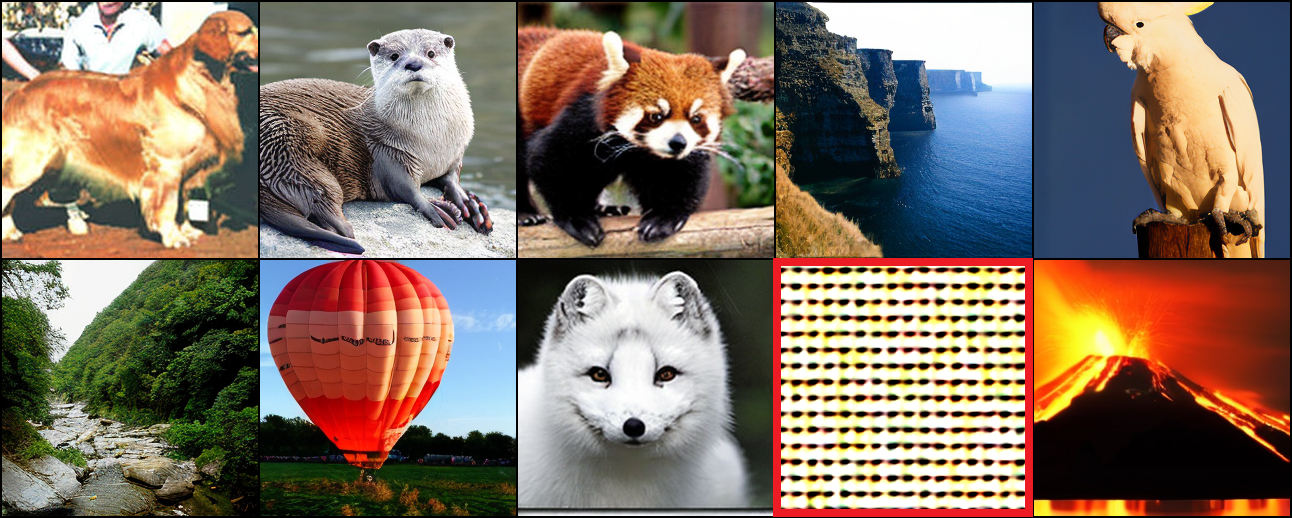}}\\
    \subfigure[Forgetting ‘Arctic Fox’]{\includegraphics[width=0.65\textwidth]{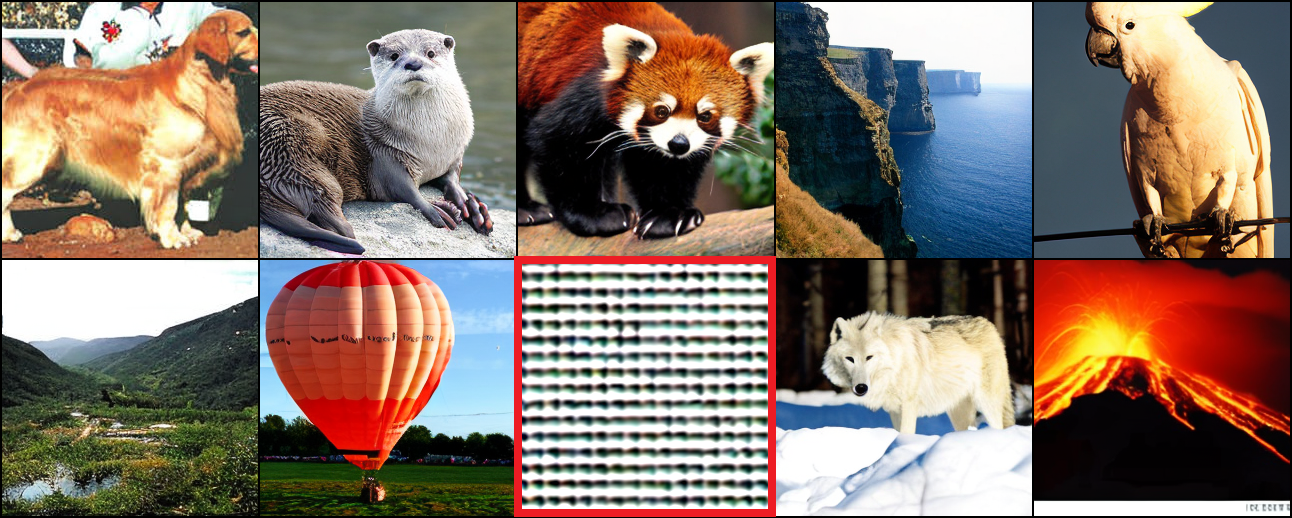}}
    \subfigure[Forgetting ‘Otter’]{\includegraphics[width=0.65\textwidth]{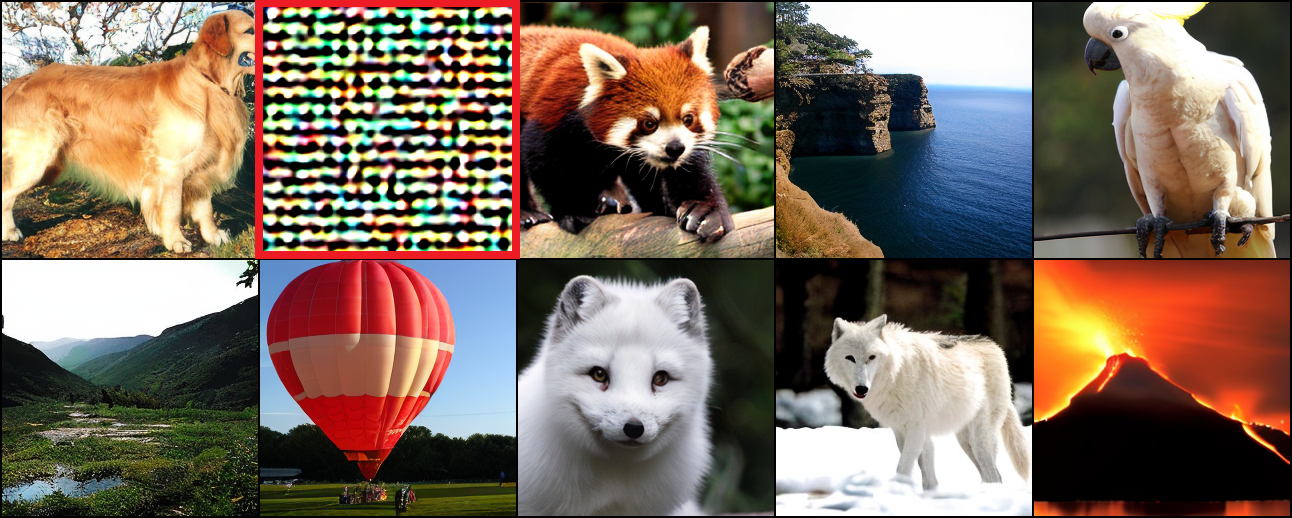}}
    \caption{Visualization of class-wise unlearning results on DiT on ImageNet. The forgetting class is marked with a red color.}
    \label{fig:vis_imagenet}
\end{figure}

\end{document}